\newcites{Appendix}{References}
\newenvironment{proofS}{\proof}{\endproof}
\newcommand{\sti}[1]{}
\titlespacing{\section}{0pt}{*2.0}{*2.0}
\titlespacing{\subsection}{0pt}{*0.5}{*0.5}
\titlespacing{\paragraph}{0pt}{*0.0}{*1}
\newcommand{\ignore}[1]{}
\begin{document}



\author{Lee Cohen\thanks{
These authors contributed equally to this work.

\;$^1$ Blavatnik School of Computer Science, Tel Aviv University.

\;$^2$ Research Group Efficient Algorithms, Technische Universität Berlin.

\;$^3$ Google Research.}$^{*,1}$ \quad  Ulrike Schmidt-Kraepelin$^{*,2}$ \quad  Yishay Mansour$^{1,3}$  \\
}\date{}


\title{Dueling Bandits with Team Comparisons}

\maketitle

\begin{abstract}%
  We introduce the \emph{dueling teams problem}, a new online-learning setting in which the learner observes noisy comparisons of disjoint pairs of $k$-sized \emph{teams} from a universe of $n$ players. The goal of the learner is to minimize the number of duels required to identify, with high probability, a \textit{Condorcet winning team}, i.e., a team which wins against any other disjoint team (with probability at least $1/2$).
Noisy comparisons are linked to a total order on the teams. 
We formalize our model by building upon the dueling bandits setting \citep{Yue2012} and provide several algorithms, both for stochastic and deterministic settings. For the stochastic setting, we provide a reduction to the classical dueling bandits setting, yielding an algorithm that identifies a Condorcet winning team within $\mathcal{O}((n + k \log (k)) \frac{\max(\log\log n, \log k)}{\Delta^2})$ duels, where $\Delta$ is a gap parameter. For deterministic feedback, we additionally present a gap-independent algorithm that identifies a Condorcet winning team within $\mathcal{O}(nk\log(k)+k^5)$ duels.

\end{abstract}

\section{Introduction}
Multi-arm bandits (MAB) is a classical model of decision making under uncertainty. 
In spite of the simplicity of the model, it already incorporates the essential tradeoff between exploration and exploitation. In MAB, the learner performs actions and can only observe rewards of the actions performed. One of the main tasks in MAB is 
\emph{best arm identification}, where the goal is to identify a near-optimal action while minimizing the number of actions executed. The MAB model has numerous practical applications, including online advertising, recommendation systems, clinical trials, and more. (See \cite{Slivkins-book,Lattimore-Szepesvari-Book} for more background).

One weakness of the MAB model is the assumption that  real-valued rewards are always available. In many applications, it is more natural to compare two actions and observe which one of them is better rather than give every  single action a numerical reward. For example, recommendation systems often suggest two items and obtain only their relative preference as feedback (e.g., by a click on one of them). This leads very naturally to the well-known model of dueling bandits \citep{Yue2012}, where the learner selects a pair of actions each time and observes the binary ``winner'' of a duel between the two. (See \cite{Busa2018} for a survey on extensions of the dueling bandit model).

In this work we are interested in the case that the learner has to select two disjoint \textit{teams} for a duel, which are $k$-sized subsets of the actions (which we call players).
This appears naturally in sports or online games, where the goal is to pick one of the best teams from a set of players by observing the outcomes of matches (say, to be a school representative team, or to sponsor for tournaments). Examples include doubles tennis, basketball, and the online game League of Legends, where each match requires two disjoint teams of players to compete.
Similar phenomena appear in working environments, where different R\&D teams compete on implementing a project. Another example could be 
online advertisements where multiple products are bundled to a display ad and a customer can click on any of two presented bundles, 
e.g.,
some online games 
offer in-app bundle purchases, 
and the information regarding 
sales of different bundles can be used to improve the bundles' composition. 

Our basic model is the following. We have a universe of $n$ players, and at each iteration the learner selects two \textbf{disjoint} teams for a duel and observes the winner. For any two different teams, there exists an unknown stationary probability that determines the winner of a duel between them. 
The requirement that teams need to be disjoint is in accordance with the situation in 
games, where a single person cannot play for both teams.
The goal of the learner is to minimize the number of duels required to identify, with high probability, a \textit{Condorcet winning team}, i.e., a team which 
wins against any other disjoint team (with a probability of at least $1/2$).
We do assume 
these probabilities are linked to a strict total order on 
all teams, which implies the existence of a Condorcet winning team, yet it is typically not unique.
We make two minimal and natural assumptions on this total order on 
teams, namely, that it is \emph{consistent} to some total order among the players, and that the team probabilistic comparisons hold \emph{Strong Stochastic Transitivity}, a common assumption in dueling bandit settings.

Clearly, given any 
total order among the players, the best team is the one containing the top $k$ players, which is in particular one of the Condorcet winning teams. 
However, not all relations between players are deducible for the learner.
In particular, even achieving  accurate estimations of the latent winning probabilities between all disjoint teams might not suffice to separate the top $k$ players from the rest.
Consider for example an instance with four players $1\succ2\succ3\succ4$ where $k=2$ and the total order among the teams is lexicographical, i.e., $12\succ13\succ14\succ23\succ 24\succ34$. Then, there exist three feasible duels, each of which is won by the team containing player $1$ with probability greater than $1/2$. If all three duels are won with equal probability by the team containing $1$, the learner has no chance of detecting the team $12$ as the top $k$ team. However, any of the teams $12,13$ and $14$ is a Condorcet winning team.

Our main target is to present algorithms for which the number of duels is bounded by a polynomial in the number of players $n$ and team size $k$, although the number of teams is exponential in $k$, i.e., $\Omega((\frac{n}{k})^k)$ and the number of valid duels is $\Omega(2^{k}(\frac{n}{2k})^{2k})$. Even if one were to accept an exponential number of arms, a direct reduction to the standard dueling bandits setting would not be feasible as not all pairs of teams are comparable in our model. In particular, duels of the form $(S \cup \{a\}, S \cup \{b\})$, which would yield a signal regarding the relation between players $a$ and $b$, are forbidden.
The inherent difficulty of our endeavor comes from 
two limitations: 
(1) Not all the relations between two single players are deducible, (see example above), and (2) even for pairs of players with deducible relation, having $\Omega(2^k(\frac{n}{2k})^{2k})$ valid duels and the same amount of (latent) winning probabilities makes the task of deducing their relations hard.

We start by giving a full characterization of the \emph{deducible} pairwise relations between players, namely relations that can be detected by a learner which is allowed to perform an unlimited amount of duels. 
Our characterization implies that every deducible single player relation has one of two types of \textit{witnesses}, which are constant-size sets of duels that prove their relation. 
We also show that, once we find a witness for one pair of players, it can often be transformed to a witness for other pairs of players.

Building upon this characterization, we introduce a parameter $\Delta_{a,b}$ which captures the distinguishability of any two players $a$ and $b$ and takes a value of $0$ whenever the pair is not deducible.
Assuming $\Delta := \Delta_{k,k+1}>0$, where $k$ and $k+1$ are 
$k^{\text{th}}$ and $(k+1)^{\text{th}}$ best players, we give
a reduction to the classic dueling bandits problem. 
Combining this reduction with a high-probability top-$k$ identification algorithm for the dueling bandits setting (e.g., \cite{mohajer17,ren2020}) yields a similar sample complexity upper bound, e.g., this yields a high-probability top-$k$ identification algorithm for dueling teams with $\mathcal{O}(\Delta^{-2}(n + k \log (k)) \max(\log\log n, \log k))$ duels.

Interestingly, it turns out that the deterministic case, i.e., when winning probabilities are in $\{0,1\}$, constitutes a challenging special case of our problem where $\Delta$ can be particularly small, or even $0$. To overcome this issue we design delicate algorithms which are independent of $\Delta$. On a high level, a preprocessing procedure first excludes as many bad players as possible. To do so, it runs a method for identifying pairwise relations between players which performs only a small number of duels, but has little control over the pair for which the relation is uncovered. For general 
total orders this implies an algorithm requiring $\mathcal{O}(nk\log(k) + 2^{\mathcal{O}(k)})$ duels. For the natural case of \textit{additive linear} orders, we present a more elaborated approach for detecting a Condorcet winning team within the reduced instance, resulting in an algorithm that performs $\mathcal{O}(nk\log(k) + k^5)$ duels. 

We introduce our problem in Section \ref{sec:model}, give a characterization of deducible relations in Section \ref{sec:witnesses2}, discuss the stochastic setting in Section \ref{sec:stochastic}, and the deterministic setting in Section \ref{sec:deterministic}. For brevity, algorithms and (full) proofs are relegated to Section \ref{sec:witnessesFull}, \ref{sec:stochasticFull}, and \ref{sec:deterministicFull} of the appendix. Section \ref{sec:extensions} contains a discussion and Section \ref{sec:additiveLinear} a characterization of additive linear total orders.

\medskip
\subsection{Related Work}
\medskip 

\noindent{\bf MAB best arm or subset identification:} single arm identification was initiated in \cite{Even-DarMM06} and later studied in many works including \cite{BubeckMS11,KaufmannCG16,ChenLQ17}. This setting was extended by \cite{Kalyanakrishnan10}  for multiple arms identification (i.e., top $k$ arms), using a single arm samples. Other works that address the objective of top$-k$ identification include \cite{Chen14,Zhou14,Bubeck13}.

\noindent{\bf Dueling bandits}
The work of \cite{Yue2012} lay down the framework of non-parametric bandit feedback under total order among arms, strong stochastic transitivity, and stochastic triangle inequality assumptions and were followed by many subsequent works (For more, see a survey, \cite{Busa2018}.) 
In particular, some subsequent works target the task of identifying the top $k$ players in this setting \cite{mohajer17,ren2020}.

\noindent{\bf Dueling bandits with sets of actions} 
One line of dueling bandits extension consider the case where the learner selects a subset of actions and observes the outcomes of all duels between all pairs of actions in the subset \citep{Brost+16,Sui+17}, or the winner of the subset \cite{SG18,Ren+18}. 
As a consequence, these settings give the learner strictly more information than the dueling bandits setting. In contrast, feedback in our setting reveals less information.

\noindent{\bf MAB with multiple actions selection
:} There are works in which the learner selects a (sometimes fixed-sized) subset of actions at each iteration, and observes either all of the individual selected arms rewards (semi-bandit feedback) or an aggregated form of the rewards (full-bandit feedback), and the task is to detect to best arm or the top $k$. These include  \emph{combinatorial bandits}
\cite{Cesa-BianchiL12}, \emph{top-k}
\cite{RejwanM20}, \emph{linear bandit and routing}
\cite{AwerbuchK08}, and more. 
The main difference between combinatorial bandits and our setting is the feedback.

\noindent{\bf Comparison models: } 
Noisy pairwise comparison models, especially for sorting and ranking, have a long history which dates backs to the 1950's (For more, see a survey, \cite{DPelc02}.).
Specifically, the mathematical problem Counterfeit coin 
was introduced in the form of a puzzle 
\citep{Grossman45}:
given a pile of $12$ coins, determine which 
coins has a different weight (and therefore counterfeit) using balance scales while minimizing the number of measurements. The problem was followed by numerous generalizations (see \cite{Guy95}). 
While this problem is restricted to coins with two different weights, our setting can be seen as a variant with multiple weights. 

\section{The Dueling Teams Problem}

We formalize our problem as follows. Let $n,k \in \mathbb{N}$ with $1 \leq k \leq \frac{n}{2}$. We denote the set of players by $[n]:=\{1, \dots, n\}$ and call any set of $k$ distinct players a \emph{team}. Moreover, we assume the existence of an underlying strict total order among all teams, and denote it by $\succ$. We also refer to $\succ$ as the ground truth order. In particular, for any two teams $A$ and $B$ either $A \succ B$ holds, in which case we say that $A$ is \emph{better} than  $B$, or vice versa, and this relation is transitive. Additionally, we require the total order among the teams to be consistent with a total order among players and formalize this in the \emph{consistency} assumption at the end of this section. 

In each round, the learner selects an ordered pair of two disjoint teams, $A$ and $B$ to perform a \emph{duel}, and receives a noisy binary feedback about which team is better.
Note that in contrast to the usual dueling bandits setting, our setting does not allow duels of the form $(A,A)$, as selecting teams with mutual players for a duel is not an option. We denote the \emph{observable} part of $\succ$ by $\succ_{obs}$, i.e., $A \succ_{obs} B$ iff $A$ and $B$ are disjoint teams and $A \succ B$. 
Note, $\obs$ is not transitive, thus not even a partial order.

We write $A>B$ if team $A$ is the random winner of duel $(A,B)$. The probability $\Pr[A>B]$ is stationary and denoted by $P_{A,B}=\Pr[A>B]$. 
In each duel of team $A$ against team $B$ the outcome $A>B$ is sampled independently from a Bernoulli 
distribution with parameter $P_{A,B} = 1 - P_{B,A}$. 
We assume that the probabilistic comparisons are linked to the total order among the teams, i.e., $A \succ B$ implies $P_{A,B}> 1/2$, and that $P_{A,B}$ exists for every pair of teams (not only disjoint ones).

In the deterministic setting, it holds that $P_{A,B}\in\{0,1\}$ for any teams $A\ne B$. In other words, $A\obs B$ iff the outcome of each duel $(A,B)$ is $A>B$, and for two disjoint teams $A$ and $B$ the learner can observe whether $A \succ_{obs} B$ or $B \succ_{obs} A$   
by performing a single duel.

A team $A$ is a \emph{Condorcet winning team}\footnote{The name is motivated by the fact that such a team is a weak Condorcet winner for the relation $\succ_{obs}$.} if $A \succ_{obs} B$ for all teams $B$ such that $A \cap B = \emptyset$. From our assumption on $\succ$
, there always exists a Condorcet winning team, but it is not necessarily unique. 
The learner's goal is to minimize the number of duels required to  identify, with high probability in the stochastic setting and with probability $1$ in the deterministic case, a Condorcet winning team.

In the following we formalize two more assumptions we impose on our model, the former affects the linking of the probabilities to the strict total order $\succ$, the latter restricts the total order $\succ$ itself.

\noindent\textbf{Strong stochastic transitivity (SST):}
Similarly to the dueling bandits settings in \cite{Yue2012}, we assume \textit{strong stochastic transitivity}. Namely, for every triplet of different teams $A\succ B\succ C$ it holds that 
$
P_{A,C}\geq \max\{P_{A,B},P_{B,C}\}.
$

\noindent\textbf{Consistency:}
We assume that the total order $\succ$ is consistent to a total order among single players. More precisely, we say that $\succ$ satisfies \emph{consistency} if for every two players $a,b\in [n]$ either of the following holds true: 
\begin{enumerate}[label=(\roman*)]
    \item $S\cup\{a\}\succ S \cup \{b\}$ for all $S \subseteq [n] \setminus\{a,b\},  |S|=k-1$.
    \item     $S\cup\{b\}\succ S \cup \{a\}$ for all $S \subseteq [n] \setminus\{a,b\},  |S|=k-1$.
\end{enumerate}
The consistency assumption lets us derive a relation among the single players, by defining $a \succ b$ iff $S \cup \{a\} \succ S \cup \{b\}$ holds for some $S$. By team relation transitivity, $\succ$ implies a total order on $[n]$. Whenever we write $a \succ b$ for some players $a,b \in [n]$ this is short-hand notation for $S \cup \{a\} \succ S \cup \{b\}$ for all subsets $S \subseteq [n] \setminus \{a,b\}$ of size $k-1$. 
For notational convenience, we assume without loss of generality that $1\succ 2\succ \dots \succ n$ and write $A^*_m$ for the set of players containing the top $m$ players, i.e., $A^*_m=[m]$. In particular, the consistency assumption yields that $A^*_k$ is a Condorcet winning team.

Though the ground truth ranking induces a total order among the players, the learner might not be able to deduce the entire order. In the following we give a characterization of the \emph{deducible} part of the ground truth order $\succ$.

\label{sec:model}

\section{Witnesses: A Characterization of Deducible 
Relations}\label{sec:witnesses2}
In this section we provide a high level description of the complete characterization of all the pairwise relations between single players that can be deduced via team duels. Though single players  cannot be observed via team duels directly, we show a sufficient and necessary condition for deducible relations in the form of a constant number of winning probabilities of observable (feasible) duels. We refer to a set of players participating in such duels as \emph{witnesses}. For completeness, we point out that a similar characterization can be done for any same-sized subsets of size less than $k$.

We denote by $\mathbb{P}_{obs}$ the set of all tuples $(P',\succ')$, where each $P'$ is a team winning probability matrix that satisfy SST w.r.t. $\succ'$, which is a consistent strict total order on teams, and both $P'$ and $\succ'$ are compatible with the winning probabilities of observable duels and each other, i.e., $\{P'_{A,B} = P_{A,B} \mid A \text{ and } B \text{ are disjoint teams}\}$ and $P'_{A,B}=1-P'_{B,A}>1/2$ iff $A\succ'B$.
We remark that it follows directly from the definition of $\mathbb{P}_{obs}$ that $(P,\succ) \in \mathbb{P}_{obs}$, where $P$ is the ground truth winning probability matrix and $\succ$ the ground truth total order.

We denote by $\mathcal{C}_{obs}$ the set of strict total orders $\succ'$ for which there exists a tuple $(P',\succ')\in \mathbb{P}_{obs}$.

More precisely, $\succ' \in \mathcal{C}_{obs}$ if $\succ'$ is a total order on all teams that satisfies consistency and there exist probabilities $P'_{A,B}$ for all pair of teams $(A,B)$ such that $A \succ' B$ iff $P'_{A,B}>1/2$, and $P'_{A,B} = P_{A,B}$ for all disjoint teams $A$ and $B$. Lastly, we define $ A \succ^* B$ if and only if $A \succ' B$ for all $\succ' \in \mathcal{C}_{obs}$, where $A$ and $B$ are not necessarily disjoint. We refer to $\succ^*$ as the \emph{deducible} relation. For single player relations, we define  $ a \succ^* b$ if and only if there exists $S\subseteq [n]\setminus \{a,b\}$ such that $S\cup \{a\}\succ' S \cup \{b\}$ for all $\succ' \in \mathcal{C}_{obs}$. We stress that we only use $\mathbb{P}_{obs}$ and  $\mathcal{C}_{obs}$ for analysis and never actually compute them.

Next, we define two sets of \emph{potential witnesses} that have a simple structure and, in some cases,  
allow us to deduce single players relation: (1) A \emph{potential subsets witnesses} set, denoted by $\mathcal{S}_{a,b}$, that contains all pairs $(S,S')$ such that $S$ and $S'$ are disjoint subsets of $[n] \setminus \{a,b\}$ and both are of size $k-1$, and (2) A \emph{potential subset-team witnesses} set, denoted by $\mathcal{T}_{a,b}$, that contains all pairs $(S,T)$ where $S$ and $T$ are disjoint subsets of $[n] \setminus \{a,b\}$, such that $S$ is of size $k-1$ and $T$ is of size $k$ (and is therefore a team).
Below, we define under which conditions a potential witnesses is a \emph{witness}.

\begin{definition}
An element $(S,S')\in \mathcal{S}_{a,b}$ is a \emph{subsets witness} for $a \succ b$ if
$P_{S\cup \{a\},S' \cup \{b\}}>P_{S \cup \{b\},S' \cup \{a \}}$. An element $(S,T)\in \mathcal{T}_{a,b}$ is a \emph{subset-team witness} for $a \succ b$ if 
$P_{S \cup \{a\},T} > P_{S \cup \{b\},T}$.
\end{definition}\vspace{-.1cm}
We capture the set of the elements of $\mathcal{S}_{a,b}$ that are subsets witnesses for $a \succ b$ by $\mathcal{S}^*_{a,b}$ and analogously, $\mathcal{T}^*_{a,b} = \{(S,T) \in \mathcal{T}_{a,b} \mid (S,T) \text{ is a subset-team witness for } a \succ b \}$. It might be the case that $\mathcal{S}^*_{a,b} \cup \mathcal{T}_{a,b}^*$ is empty, in particular this holds when $b \succ a$. 
It is also possible that both $\mathcal{S}^*_{a,b} \cup \mathcal{T}_{a,b}^*$ and $\mathcal{S}^*_{b,a} \cup \mathcal{T}_{b,a}^*$ are empty, 
in which case we will show that the relation between players in $a$ and $b$ cannot be deduced.
The following theorem implies that the other direction is also true.

\begin{restatable}{theorem}{thmcharacterization}\label{thm:provableSingleRelationIffWitnessExists}
Let $a,b \in [n]$. Then, $a \succ^* b$ if and only if $\mathcal{S}^*_{a,b} \cup \mathcal{T}^*_{a,b} \neq \emptyset$. 
\end{restatable}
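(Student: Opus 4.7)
The plan is to prove both directions separately. The easy direction, $\Leftarrow$, I would handle by contradiction inside an arbitrary $(P',\succ')\in\mathbb{P}_{obs}$; the hard direction, $\Rightarrow$, by explicit construction of such a pair with $b\succ'a$ whenever no witness exists.

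For $\Leftarrow$, suppose $(S,S')\in\mathcal{S}^*_{a,b}$, fix any $(P',\succ')\in\mathbb{P}_{obs}$, and abbreviate $X_a=S\cup\{a\}$, $X_b=S\cup\{b\}$, $Y_a=S'\cup\{a\}$, $Y_b=S'\cup\{b\}$. Assuming for contradiction that $b\succ'a$, consistency gives $X_b\succ'X_a$ and $Y_b\succ'Y_a$, and swapping $S$ with $S'$ if needed (which preserves the witness condition) allows me to assume $X_b\succ'Y_b$. A short case analysis on whether $X_a$ or $Y_a$ is minimal in $\succ'$ among the four teams, and on the position of the fourth team, together with repeated SST applications to the resulting chain, yields $P'_{X_a,Y_b}\le P'_{X_b,Y_a}$ in every case. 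Since $P'=P$ on disjoint pairs, this contradicts the strict witness inequality. A subset-team witness $(S,T)$ is handled even more easily: three cases on where $T$ sits relative to the chain $S\cup\{b\}\succ'S\cup\{a\}$ each give $P'_{S\cup\{a\},T}\le P'_{S\cup\{b\},T}$ via SST, again contradicting the witness.

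For $\Rightarrow$, I would argue contrapositively. If $\mathcal{S}^*_{a,b}\cup\mathcal{T}^*_{a,b}=\emptyset$ and the ground truth already has $b\succ a$, then $(P,\succ)\in\mathbb{P}_{obs}$ itself certifies $\neg(a\succ^*b)$. Otherwise $a\succ b$ in the ground truth; applying the easy direction to the reversed pair $(b,a)$ and noting that $(P,\succ)\in\mathbb{P}_{obs}$ would forbid $b\succ^*a$, I conclude $\mathcal{S}^*_{b,a}\cup\mathcal{T}^*_{b,a}=\emptyset$ as well. The two absences combine to turn the defining strict inequalities into equalities: $P_{S\cup\{a\},S'\cup\{b\}}=P_{S\cup\{b\},S'\cup\{a\}}$ for every $(S,S')\in\mathcal{S}_{a,b}$, and $P_{S\cup\{a\},T}=P_{S\cup\{b\},T}$ for every $(S,T)\in\mathcal{T}_{a,b}$.

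Equipped with these equalities, I would construct $(P',\succ')$ by pulling back along the transposition $\pi\colon[n]\to[n]$ swapping $a$ and $b$: extend $\pi$ to teams pointwise, and set $T_1\succ'T_2$ iff $\pi(T_1)\succ\pi(T_2)$, and $P'_{A,B}:=P_{\pi(A),\pi(B)}$. Because $\pi$ is a bijection, $\succ'$ is a strict total order on teams, and consistency of $\succ$ pulls back to consistency of $\succ'$ with induced single-player order satisfying $b\succ'a$. The pullback $P'$ inherits skew-symmetry, compatibility with $\succ'$, and SST with respect to $\succ'$ directly from the corresponding properties of $(P,\succ)$. The only non-trivial verification is $P'_{A,B}=P_{A,B}$ for disjoint $A,B$: a case split on $\{a,b\}\cap(A\cup B)$ shows that $\pi$ fixes both $A$ and $B$ unless either $a\in A$ and $b\in B$ (or symmetrically), which reduces the claim to the subsets equality above, or exactly one of $a,b$ lies in $A\cup B$, which reduces to the subset-team equality. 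Hence $(P',\succ')\in\mathbb{P}_{obs}$ with $b\succ'a$, completing the contrapositive. The main obstacle is this hard direction, and within it the extraction of the two probability equalities from the two ``no witness'' hypotheses is the key conceptual step; once these are in hand, the $\pi$-pullback is essentially functorial and all remaining checks are routine.
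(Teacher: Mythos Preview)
Your proposal is correct and follows essentially the same route as the paper: the easy direction is handled by a consistency/SST case analysis inside an arbitrary $(P',\succ')$ (the paper packages this as a helper lemma), and the hard direction is proved by pulling back $(P,\succ)$ along the transposition $\pi$ swapping $a$ and $b$ after establishing the key probability equalities on all disjoint duels. Your extraction of those equalities via the already-proved easy direction applied to the reversed pair $(b,a)$ is a slightly cleaner repackaging than the paper's separate $\geq$-lemma plus contradiction, but the substance is identical.
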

\begin{proofS}
Assume that $\mathcal{S}^*_{a,b} \cup \mathcal{T}^*_{a,b} \neq \emptyset$. We show that $a \succ^* b$ by using SST, the fact that $\succ$ is a consistent strict total order, and an exhaustive case analysis. For the sake of illustration we present only one case here, namely, that $(S,S') \in \mathcal{S}_{a,b}^*$ and that both $(1)\;S \cup \{a\} \succ S' \cup \{b\}$ and $(2)\;S \cup \{b\} \succ S' \cup \{a\}$ hold. Assume for contradiction that $a\succ^* b$ does not hold. It thus follows that there exists an order, $\succ' \in \mathcal{C}_{obs}$ for which $b \succ' a$ holds. Let $P'_{A,B}$ be the corresponding winning probabilities. Then, using consistency of $\succ'$ and $(1)$ respectively, we get $S \cup \{b\} \succ' S \cup \{a\} \succ' S' \cup \{b\}$ and from SST $P'_{S \cup \{b\},S' \cup \{b\}} \geq P'_{S \cup \{a\},S' \cup \{b\}}>1/2$. In addition, applying consistency again, it follows that $S \cup \{b\} \succ S' \cup \{b\} \succ S' \cup \{a\}$. Applying SST once more we get $P'_{S \cup \{b\},S' \cup \{a\}} \geq P'_{S \cup \{b\},S' \cup \{b\}} \geq P'_{S \cup \{a\},S' \cup \{b\}}$, a contradiction to $(S,S') \in \mathcal{S}_{a,b}^*$ (since this implies $P_{S \cup \{a\},S' \cup \{b\}} > P_{S \cup \{b\},S' \cup \{a\}}$).

For the other direction we start by defining $\mathcal{D}_a$ as the set of observable duels $(A,B)$ such that $a \in A$. Moreover, we define a permutation $\pi$ on the set of teams, which simply exchanges the players $a$ and $b$ when present. We then show that $a \succ b$ implies $P_{A,B} \geq P_{\pi(A),\pi(B)}$ for all $(A,B) \in \mathcal{D}_a$. Moreover, we show that $a \succ^* b$ implies that there exists $(A,B) \in \mathcal{D}_a$ with $P_{A,B} > P_{\pi(A),\pi(B)}$ as follows. Assume not. Then we show that the relation $\succ'$ defined by $A \succ' B$ iff $\pi(A) \succ' \pi(B)$ is included in $C_{obs}$. However, $a \succ^* b$ implies that for any $S \subseteq [n] \setminus \{a,b\}$ of size $k-1$ it holds that $S \cup \{a\} \succ^* S \cup \{b\}$ which implies $(i) \; S \cup \{a\} \succ S \cup \{b\}$ as well as $ (ii) \;S \cup \{a\} \succ' S \cup \{b\}$. Applying the definitions of $\succ'$ and $\pi$, statement $(ii)$ implies $S \cup \{b\}=\pi(S \cup \{a\}) \succ \pi(S \cup \{b\})=S \cup \{a\}$ and hence yields a contradiction to $(i)$. 
Finally, take some $(A,B) \in \mathcal{D}_a$ with $P_{A,B}>P_{\pi(A),\pi(B)}$. If $b \in B$, then $(A \setminus \{a\},B \setminus \{b\}) \in \mathcal{S}^*_{a,b}$, otherwise $(A \setminus \{a\},B) \in \mathcal{T}^*_{a,b}$. 
\end{proofS} \vspace{-.3cm}
For the sake of brevity, we introduce the set $\mathcal{X}_{a,b}$ which combines the pairs from $\mathcal{S}_{a,b}$ and $\mathcal{T}_{a,b}$ into a set of triples. Formally, $\mathcal{X}_{a,b} = \{(S,S',T) \mid (S,S') \in \mathcal{S}_{a,b}, (S,T) \in \mathcal{T}_{a,b}\}.$ We say that $(S,S',T)$ is a witness for $a \succ b$ if $(S,S') \in \mathcal{S}^*_{a,b}$ or $(S,T) \in \mathcal{T}^*_{a,b}$, and denote $(S,S',T)\in \mathcal{X}^*_{a,b}$.

\section{Stochastic Setting}\label{sec:stochastic}

In this section we focus on algorithms identifying, with high probability, the top-$k$ team, which is in particular a Condorcet winning team. The main idea is to reduce the dueling teams setting to the classic dueling bandits setting, by  
which we refer to \cite{Yue2012}. To this end we will
introduce our \emph{gap parameter}, $\Delta$, which intuitively captures how easy it is to prove the relationship between the top-$k$ and the top-$(k+1)$ player. We start by defining, for any element of $\mathcal{X}_{a,b}$, a random variable $X_{a,b}(S,S',T)$ 
that combines the outcomes of the four duels which help 
determines whether $(S,S',T)$ is a witness for $a \succ^* b$. Formally, \begin{align*}X_{a,b}(S,S',T) &= \big(\mathbbm{1}[S \cup \{a\} > S' \cup \{b\}] - \mathbbm{1}[S \cup \{b\} > S' \cup \{a\}] \\
& \quad + \mathbbm{1}[S \cup \{a\} > T] - \mathbbm{1}[S \cup \{b\}>T] \big)/4. \end{align*}
 
Observe that 
$X_{a,b}(S,S',T)$ can take values from $\{-1/2,-1/4,0,1/4,1/2\}$, thus  $\E[X_{a,b}(S,S',T)] \in [-1/2,1/2]$. Moreover, we have the following properties:
\begin{enumerate}
    \item For every $(S,S',T)\in \mathcal{X}_{a,b}$ we have 
    \[
    \E[X_{a,b}(S,S',T)]>0 \iff (S,S',T)\in \mathcal{X}^*_{a,b}.
    \]
    \item If $\E[X_{a,b}(S,S',T)]=0$ for every $(S,S',T)\in \mathcal{X}_{a,b}$, then Theorem \ref{thm:provableSingleRelationIffWitnessExists} implies that the pairwise relation between players $a,b$ cannot be deduced.
\end{enumerate}
Building upon the random variables $X_{a,b}(S,S',T)$, which are defined for a fix pair of players, $a,b$, and for each element in $\mathcal{X}_{a,b}$,
we define a single random variable $X_{a,b}$ by picking a random triplet $(S,S',T) \in \mathcal{X}_{a,b}$ and returning a realization of $X_{a,b}(S,S',T)$. For convenience, whenever we write $\E[X_{a,b}]$ we mean $\E_{(S,S',T)\sim \mathcal{X}_{a,b}}[X_{a,b}]$. Using 
the probabilistic method, we obtain the following theorem, which then brings us to the definition of a gap parameter for our problem.

\begin{restatable}{theorem}{thmTheOneTrueLemmaToRuleAllWitnesses}\label{thm:theOneTrueLemmaToRuleAllWitnesses}
For every two players $a,b\in[n]$ it holds that $a\succ^* b$ if and only if $\;\E[X_{a,b}]>0$.
\end{restatable}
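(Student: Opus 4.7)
The plan is to leverage the monotonicity established inside the proof of Theorem~\ref{thm:provableSingleRelationIffWitnessExists}. That argument shows two facts which together pin down the sign of $X_{a,b}$: whenever $a \succ b$ holds in the ground truth, the transposition $\pi$ swapping $a$ and $b$ satisfies $P_{A,B} \geq P_{\pi(A),\pi(B)}$ for every observable duel $(A,B) \in \mathcal{D}_a$, and at least one such inequality is strict precisely when $a \succ^* b$. Applied to the four duels underlying $X_{a,b}(S,S',T)$, this yields, for every $(S,S',T) \in \mathcal{X}_{a,b}$,
\[
P_{S \cup \{a\}, S' \cup \{b\}} \geq P_{S \cup \{b\}, S' \cup \{a\}}, \qquad P_{S \cup \{a\}, T} \geq P_{S \cup \{b\}, T},
\]
so $\E[X_{a,b}(S,S',T)] \geq 0$ term by term.

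For the forward direction I would upgrade this to strict positivity of the uniform average by invoking Theorem~\ref{thm:provableSingleRelationIffWitnessExists}: $a \succ^* b$ guarantees a witness, so either there exists $(S,S') \in \mathcal{S}^*_{a,b}$, in which case any admissible completion by a size-$k$ set $T \subseteq [n] \setminus (\{a,b\} \cup S)$ makes the first inequality strict for that triple, or there exists $(S,T) \in \mathcal{T}^*_{a,b}$, in which case any admissible completion $S' \subseteq [n] \setminus (\{a,b\} \cup S)$ of size $k-1$ makes the second inequality strict. In either case at least one summand of $\E[X_{a,b}]$ is positive while all others are nonnegative, giving $\E[X_{a,b}] > 0$.

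For the converse I argue by contrapositive. If $a \not\succ^* b$, then either $b \succ^* a$ holds, in which case the forward direction applied to the pair $(b,a)$ combined with the identity $X_{b,a}(S,S',T) = -X_{a,b}(S,S',T)$ forces $\E[X_{a,b}] < 0$; or no relation between $a$ and $b$ is deducible, in which case Theorem~\ref{thm:provableSingleRelationIffWitnessExists} gives $\mathcal{S}^*_{a,b} = \mathcal{S}^*_{b,a} = \emptyset$ and similarly for $\mathcal{T}^*$. Combined with the one-sided monotonicity above (applied in whichever direction is the ground truth), the absence of strict witnesses in both directions upgrades each monotonicity inequality to an equality, yielding $\E[X_{a,b}(S,S',T)] = 0$ for every triple and hence $\E[X_{a,b}] = 0$. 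The main subtlety is this last step: one must check that nonexistence of a witness in either direction, combined with one-sided monotonicity, is exactly enough to force equality of the four probabilities entering each $X_{a,b}(S,S',T)$. Beyond this small wrinkle, the proof is essentially a repackaging of Theorem~\ref{thm:provableSingleRelationIffWitnessExists} through the linear functional $X_{a,b}$.
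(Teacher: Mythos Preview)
Your proposal is correct and follows essentially the same route as the paper: both arguments rest on the termwise nonnegativity $\E[X_{a,b}(S,S',T)] \geq 0$ whenever $a \succ b$ (the paper states this as Lemma~\ref{lemma:sstForrSinglePlayers} via the $Z_{a,b},Y_{a,b}$ decomposition) together with Theorem~\ref{thm:provableSingleRelationIffWitnessExists} to supply a strict term exactly when $a \succ^* b$. The only cosmetic difference is that the paper handles the converse directly via the probabilistic method (if the average exceeds $1/2$, some term does, hence a witness exists), whereas you take the contrapositive and split into the cases $b \succ^* a$ and ``neither deducible''; these are logically equivalent repackagings of the same inequality structure.
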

\noindent\textbf{Gap parameter: }
We define our gap parameter by $\Delta:=\E[X_{k,k+1}]$. 
In the following we show that our gap parameter does not just
help us to distinguish between the top $k$ and $k+1$ players, but also allows us to distinguish other players in $A^*_k$ and players from $[n] \setminus A^*_k$. To this end, we show in Lemma \ref{lemma:SSTforF} that the expectations $\E[X_{a,b}]$ satisfy strong stochastic transitivity w.r.t. the ground truth total order on players.
We note that most elements $(S,S',T) \in \mathcal{X}_{a,b}$ hold $\E[X_{a,c}(\pi(S),\pi(S'),\pi(T))] \geq \E[X_{a,b}(S,S',T)]$ (and analogously for $X_{b,c}$), where $\pi$ is a permutation exchanging players $b$ and $c$, but, surprisingly, this is not true in general. 
By carefully constructing a charging scheme, we manage to
show that this holds in expectation over all elements of $\mathcal{X}_{a,b}$, and derive strong stochastic transitivity for the distinguishabilities of players.

\begin{restatable}{lemma}{lemmaSSTforF}\label{lemma:SSTforF}
For a triplet of players $a\succ b \succ c$ it holds that  \[\E[X_{a,c}] \geq \max\{\E[X_{a,b}],\E[X_{b,c}]\}.\]
\end{restatable}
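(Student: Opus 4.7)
My plan is to prove $\E[X_{a,c}] \geq \E[X_{a,b}]$; the symmetric inequality $\E[X_{a,c}] \geq \E[X_{b,c}]$ will follow from the same scheme with the roles of $a$ and $b$ exchanged. The overall strategy is to pair each triple $(S,S',T) \in \mathcal{X}_{a,b}$ with its image under $\pi$, the permutation that swaps $b$ and $c$, which induces a bijection $\mathcal{X}_{a,b} \to \mathcal{X}_{a,c}$, and then study the pointwise difference $\Delta(e) := \E[X_{a,c}(\pi(e))] - \E[X_{a,b}(e)]$ according to where $c$ sits in $(S,S',T)$. I would first establish a preliminary cross-comparison fact (by a short SST case split on the linear order of the four teams involved): whenever $A \succ A'$ and $B \succ B'$, we have $P_{A,B'} \geq P_{A',B}$. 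This fact will be used repeatedly.

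Next, I would handle the three ``easy'' cases where $c$ does not sit in $T$, showing $\Delta(e) \geq 0$ pointwise. If $c \notin S \cup S' \cup T$, then $\pi$ acts as the identity on $(S,S',T)$ and $\Delta(e)$ decomposes into a sum of three differences of the form $P_{X, Y\cup\{b\}} - P_{X, Y\cup\{c\}}$ or $P_{X\cup\{b\}, Y} - P_{X\cup\{c\}, Y}$, each non-negative by consistency and SST. If $c \in S$ (or, symmetrically, $c \in S'$), writing $S = S_0 \cup \{c\}$, the common term $P_{S_0 \cup \{b,c\}, \cdot}$ cancels between the two expectations and $\Delta(e)$ collapses to a cross-comparison term $P_{S_0 \cup \{a,b\}, S' \cup \{c\}} - P_{S_0 \cup \{a,c\}, S' \cup \{b\}}$ plus a monotonicity term $P_{S_0 \cup \{a,b\}, T} - P_{S_0 \cup \{a,c\}, T}$, both $\geq 0$ by the preliminary lemma and SST.

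The hard case is $c \in T$, i.e., $T = T_0 \cup \{c\}$. Here $\Delta(e)$ contains the term $P_{S\cup\{a\}, T_0\cup\{b\}} - P_{S\cup\{a\}, T_0\cup\{c\}}$, which is \emph{non-positive} since $T_0 \cup \{b\} \succ T_0 \cup \{c\}$, and a direct pointwise bound on $\Delta(e)$ can fail. My plan is to introduce the involution $\sigma$ on case-C elements that swaps the roles of $S'$ and $T_0$: $\sigma(S, S', T_0 \cup \{c\}) := (S, T_0, S' \cup \{c\})$. Since $S'$ and $T_0$ are disjoint $(k-1)$-subsets of $[n] \setminus (S \cup \{a,b,c\})$, for $k \geq 2$ the involution has no fixed points. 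A direct expansion then shows that the two negative-sign contributions in $\Delta(e)$ (one from the $S'$-pair, one from the $T_0$-pair) cancel exactly against their counterparts in $\Delta(\sigma(e))$: the paired sum $\Delta(e) + \Delta(\sigma(e))$ reduces to four non-negative terms (two of monotonicity type and two of cross-comparison type). Summing across $\sigma$-orbits therefore yields a non-negative case-C contribution to $\sum_{e} \Delta(e)$, and combining with the easy cases gives $\E[X_{a,c}] \geq \E[X_{a,b}]$.

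The main obstacle is exactly the case $c \in T$: the natural $\pi$-pairing replaces $c$ by $b$ in an opposing team, which \emph{strengthens} that opponent and therefore \emph{lowers} one of the winning probabilities defining $X_{a,c}$ relative to the corresponding one in $X_{a,b}$---this is the failure mode the authors flag right before the lemma. The key insight is the hidden symmetry between $S'$ and $T \setminus \{c\}$: both are $(k-1)$-subsets playing interchangeable structural roles in the witness, so pairing via the secondary involution $\sigma$ makes the problematic terms annihilate across orbit partners. The inequality $\E[X_{a,c}] \geq \E[X_{b,c}]$ is obtained analogously, with the swap of $a$ and $b$ playing the role of $\pi$; the hard case becomes $a \in T$, and is resolved by the same $S' \leftrightarrow T \setminus \{a\}$ involution.
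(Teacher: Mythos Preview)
Your overall architecture is right, and the involution $\sigma$ swapping $S'$ with $T_0=T\setminus\{c\}$ is a clean device for the sub-case $c\in T\setminus S'$: the computation that the two ``bad'' monotonicity terms in $\Delta(e)$ and $\Delta(\sigma(e))$ cancel, leaving four non-negative terms, goes through exactly as you describe. Two minor side-claims are false but harmless: $S'$ and $T_0$ need not be disjoint from each other (nothing in the definition of $\mathcal{X}_{a,b}$ forces $S'\cap T=\emptyset$), and $\sigma$ can have fixed points (namely when $S'=T_0$); neither affects the orbit-sum identity $\sum_e\Delta(e)=\tfrac12\sum_e(\Delta(e)+\Delta(\sigma(e)))$.

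The genuine gap is the fifth sub-case $c\in S'\cap T$, which you have not covered. Your three ``easy'' cases are explicitly those with $c\notin T$, so this overlap case falls under your ``hard case $c\in T$''; but your involution is undefined there, since $c\in S'$ gives $|S'\cup\{c\}|=k-1\neq k$, so $\sigma(e)\notin\mathcal{X}_{a,b}$. Nor can this case be absorbed into the ``$c\in S'$'' easy analysis: the $Z$-part is indeed fine, but the $Y$-part still carries the problematic term $P_{S\cup\{a\},\,T_0\cup\{b\}}-P_{S\cup\{a\},\,T_0\cup\{c\}}\le 0$ coming from $c\in T$, and nothing cancels it. The paper's proof splits into exactly these five cases and handles $c\in S'\cap T$ (its $\mathcal{X}_{a,b}^5$) by bundling it with $c\notin S\cup S'\cup T$ (its $\mathcal{X}_{a,b}^1$): it proves the pointwise charging inequality
\[
\E[Z_{a,c}(S,T_0)]+\E[Y_{a,c}(S,T_0\cup\{b\})]\ \ge\ \E[Z_{a,b}(S,T_0)]+\E[Y_{a,b}(S,T)]
\]
and then uses a multiplicity count (matching the number of $(S,S',T)$ in $\mathcal{X}^1_{a,b}$ sharing a given $(S,S')$ against those in $\mathcal{X}^5_{a,b}$ sharing a given $(S,T)$, via the binomial identities) to borrow enough slack from case~1 to cover the deficit in case~5. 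Your involution idea does not obviously extend here, so you will need either this counting argument or a different pairing that reaches across the case boundary.
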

This also yields the following theorem, which paves the way for our reduction in what follows.

\begin{theorem}\label{cor:ZlargerThanDelta}
For any $a,b\in [n]$ such that $a\in A^*_k, b\notin A^*_k$ it holds that $\E[X_{a,b}]\geq \E[X_{k,k+1}]=\Delta$. Thus, if $\Delta>0$ and for a team $A$ it holds that $\E[X_{a,b}]\geq \Delta$ for every $a\in A, b\in[n]\setminus A$, then $A=A^*$.
\end{theorem}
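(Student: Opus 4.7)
The plan is to handle the two claims separately: first derive the lower bound $\E[X_{a,b}] \geq \Delta$ for $a \in A^*_k, b \notin A^*_k$ by chaining applications of Lemma \ref{lemma:SSTforF}, and then use it together with an antisymmetry observation about $X_{a,b}$ to rule out any $A \neq A^*_k$ satisfying the hypothesis.

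For the first claim, recall we assume $1 \succ 2 \succ \dots \succ n$, so $a \in A^*_k$ means $a \leq k$ and $b \notin A^*_k$ means $b \geq k+1$; hence $a \succeq k$ and $k+1 \succeq b$ in the ground-truth ranking. I would then split into the natural (at most) four cases:
\begin{itemize}
\item If $a=k$ and $b=k+1$, the inequality holds with equality by definition of $\Delta$.
\item If $a \prec k$ (i.e.\ $a < k$) and $b=k+1$, apply Lemma \ref{lemma:SSTforF} to the triplet $a \succ k \succ k+1$ to obtain $\E[X_{a,k+1}] \geq \E[X_{k,k+1}] = \Delta$.
\item If $a=k$ and $b \succ k+1$ (wait, $b > k+1$, so $k+1 \succ b$), apply Lemma \ref{lemma:SSTforF} to $k \succ k+1 \succ b$ to obtain $\E[X_{k,b}] \geq \E[X_{k,k+1}] = \Delta$.
\item Otherwise $a<k$ and $b>k+1$: first apply the lemma to $a \succ k \succ k+1$ to get $\E[X_{a,k+1}]\geq \Delta$, then apply it to $a \succ k+1 \succ b$ to get $\E[X_{a,b}] \geq \E[X_{a,k+1}] \geq \Delta$.
\end{itemize}

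For the second claim, I would argue by contradiction. Suppose $A$ satisfies $\E[X_{a,b}] \geq \Delta$ for all $a \in A$ and $b \in [n]\setminus A$, yet $A \neq A^*_k$. Since $|A|=|A^*_k|=k$, there must exist some $a \in A \setminus A^*_k$ and some $b \in A^*_k \setminus A$. Then $b \in A^*_k$ and $a \notin A^*_k$, so by the first claim $\E[X_{b,a}] \geq \Delta > 0$. The key observation is that directly from the definition of $X_{a,b}(S,S',T)$, swapping the roles of $a$ and $b$ negates the sign of each of the four indicator terms, so $X_{a,b}(S,S',T) = -X_{b,a}(S,S',T)$ pointwise; averaging over the uniform choice of $(S,S',T)\in \mathcal{X}_{a,b}$ (which coincides with $\mathcal{X}_{b,a}$ as an unordered description) gives $\E[X_{a,b}] = -\E[X_{b,a}] \leq -\Delta < 0$. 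This contradicts the assumption $\E[X_{a,b}]\geq \Delta$, so no such $A$ exists and $A = A^*_k$.

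I do not anticipate a real obstacle here: the first part is a direct, if slightly case-heavy, transitivity argument based on Lemma \ref{lemma:SSTforF}, and the second part reduces to an antisymmetry remark that is immediate from the explicit formula for $X_{a,b}$. The one thing worth being careful about is making the antisymmetry $\E[X_{a,b}] = -\E[X_{b,a}]$ clean — it is pointwise on each $(S,S',T)$, so it survives any distribution used to define $\E[X_{a,b}]$.
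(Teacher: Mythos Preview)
Your proposal is correct and matches the paper's intended argument: the paper states this theorem as an immediate consequence of Lemma~\ref{lemma:SSTforF} (``This also yields the following theorem'') without spelling out the details, and your case analysis plus the antisymmetry $\E[X_{a,b}]=-\E[X_{b,a}]$ is exactly the natural way to unpack it. One cosmetic remark: in your second bullet you write ``$a \prec k$'' when under the convention $1\succ 2\succ\cdots\succ n$ the relation $a<k$ corresponds to $a\succ k$; you apply the lemma to the correct triplet $a\succ k\succ k{+}1$, so this is only a typo.
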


\noindent\textbf{The reduction: }
We now outline the gap-dependent algorithm.
The results we have derived in Section \ref{sec:witnesses2} will allow us to deduce, with high probability, whether a distinguishability of a given pair of players is at least $\Delta$, and if so determine which is the better player.
Intuitively, this is done
by performing $\mathcal{O}(\frac{1}{\Delta^2})$ team duels.
We use $\E[X_{a,b}]$ as a proxy for the  distinguishability between two single players, $a,b$, taking advantage of the fact that if their relation is deducible, then $\E[X_{a,b}]\ne 0$ and in this case $\E[X_{a,b}]>0$ iff $a\succ b$. Similar the to dueling bandits setting, even though $|\E[X_{a,b}]|<\Delta$ for some pairs of players, identifying $A^*_k$ with high probability is possible.   

Since we cannot directly sample $X_{a,b}$, we will instead sample uniformly at random a triplet of sets, $(S,S',T)$ from $\mathcal{X}_{a,b}$.
Using $(S,S')(\in\mathcal{S}_{a,b})$ and $(S,T)(\in\mathcal{T}_{a,b})$, we can 
then perform all the duels required for an unbiased sample of $X_{a,b}(S,S',T)$, which is by itself a sampling of $X_{a,b}$. 
Given any dueling teams instance, we define a dueling bandits instance as follows: for every two players $a,b\in [n]$, we define the probability that $a$ wins in a (singles) duel against $b$ as
\begin{equation}\label{eq:reuduction}
P_{a,b}=1/2 + \E[X_{a,b}].
\end{equation}
Clearly, $1-P_{a,b}=P_{b,a}$, 
$P_{a,b}\in[0,1]$ and $P_{a,b}>1/2$ implies $a\succ b$. In addition, Theorem \ref{thm:theOneTrueLemmaToRuleAllWitnesses} implies that $a$ is better than $b$ in this dueling bandits instance iff $a\succ^* b$.
So whenever a dueling bandits algorithm is asking for a duel query, $(a,b)$, we can make an independent sample of $X_{a,b}$ by randomly drawing 
a triplet $(S,S',T)\in\mathcal{X}_{a,b}$ and 
returning a random sampling of $X_{a,b}(S,S',T)+1/2$. 
In cases where the realization of $X_{a,b}(S,S',T)+1/2$ 
is in $\{1/4,1/2,3/4\}$, we assign $a$ as the duel winner if the result of flipping a coin with bias $X_{a,b}(S,S',T)+1/2$ is $1$. We formalize this idea in the sub-procedure \textit{singlesDuel} (in the appendix), that simulates a duel for classical dueling bandits settings using team duels. 
Notice that, by Lemma \ref{lemma:SSTforF}, the probabilities $P_{a,b}$ defined in (\ref{eq:reuduction}) satisfy SST with respect to the total order among the players induced by the ground truth order $\succ$. 
In addition, the feedback of each single player duel we perform is time-invariant, thus all the non-parametric assumptions for dueling bandits settings apply here.
The reduction allows us to identify the top $k$ players using any dueling bandit algorithm with the same goal 
that works for total order on arms that satisfy SST, and a gap between the top $k$ and $k+1$ arms as assumptions. We formalize this in the following theorem. 
\begin{theorem}
Given any dueling teams instance with $n$ and $k$ (namely, $P_{A,B}$ for every two teams that hold strict total order, SST, and consistency), we have that the dueling bandit instance defined by (\ref{eq:reuduction}) satisfies SST with respect to the ground truth order among players $\succ$ and for any two players $a \succ b$ it holds that $P_{a,b} \geq 1/2$. Moreover, $P_{k,k+1} = 1/2 + \Delta$.
\end{theorem}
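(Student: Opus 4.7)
The statement bundles three assertions, and each will follow almost immediately from results already established in this section, so my plan is to unpack them one at a time.

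First, I would observe that the reduction $P_{a,b} = 1/2 + \E[X_{a,b}]$ in (\ref{eq:reuduction}) always yields a valid probability distribution: since $X_{a,b}(S,S',T) \in [-1/2,1/2]$, we have $\E[X_{a,b}] \in [-1/2,1/2]$ and hence $P_{a,b} \in [0,1]$. Moreover, swapping the roles of $a$ and $b$ flips the sign of every one of the four indicators in the definition of $X_{a,b}(S,S',T)$ while leaving the underlying index set $\mathcal{X}_{a,b} = \mathcal{X}_{b,a}$ unchanged, so $\E[X_{b,a}] = -\E[X_{a,b}]$ and consequently $P_{a,b} + P_{b,a} = 1$.

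For the SST claim I would fix any triple $a \succ b \succ c$ from the ground truth order on $[n]$. Lemma \ref{lemma:SSTforF} directly gives $\E[X_{a,c}] \geq \max\{\E[X_{a,b}], \E[X_{b,c}]\}$, which through (\ref{eq:reuduction}) translates immediately into $P_{a,c} \geq \max\{P_{a,b}, P_{b,c}\}$, the desired SST property for the reduced singles instance.

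For the inequality $P_{a,b} \geq 1/2$ whenever $a \succ b$, my plan is to split into two cases via Theorem \ref{thm:theOneTrueLemmaToRuleAllWitnesses}. In the deducible case $a \succ^* b$, the theorem yields $\E[X_{a,b}] > 0$ and hence $P_{a,b} > 1/2$. In the non-deducible case I would invoke the theorem in both directions: since the ground truth order $\succ$ belongs to $\mathcal{C}_{obs}$, the hypothesis $a \succ b$ rules out $b \succ^* a$, which forces $\E[X_{b,a}] \leq 0$ and therefore $\E[X_{a,b}] \geq 0$; combined with $\neg(a \succ^* b)$ giving $\E[X_{a,b}] \leq 0$, this pins down $\E[X_{a,b}] = 0$ and $P_{a,b} = 1/2$. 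Finally, $P_{k,k+1} = 1/2 + \Delta$ is immediate from the definition $\Delta := \E[X_{k,k+1}]$.

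The main obstacle here is genuinely not in this proof itself but has already been discharged in Lemma \ref{lemma:SSTforF} (the SST transfer, whose charging-scheme argument over $\mathcal{X}_{a,b}$ is the hard step) and Theorem \ref{thm:theOneTrueLemmaToRuleAllWitnesses}. What remains for me is only to unpack the definitions of the reduction and to handle the non-deducible case carefully, using the antisymmetry $\E[X_{b,a}] = -\E[X_{a,b}]$ to invoke the characterization theorem in both orderings.
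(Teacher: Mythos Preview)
Your proposal is correct and matches the paper's approach: the theorem is presented in the paper as an immediate consequence of Lemma~\ref{lemma:SSTforF} (for SST), Theorem~\ref{thm:theOneTrueLemmaToRuleAllWitnesses} (for the sign of $\E[X_{a,b}]$), and the definition of~$\Delta$, exactly as you unpack it. Your case split on deducibility is sound; as a minor simplification, note that the appendix's Corollary~\ref{cor:relationImpliesYandZGeqHalf} (equivalently Lemma~\ref{lemma:sstForrSinglePlayers}) gives $\E[X_{a,b}]\geq 0$ directly from $a\succ b$ without separating the deducible and non-deducible cases.
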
 
Using the above theorem we can use any dueling bandit algorithm for top $k$ identification to solve our problem.
Mohajer et al. \cite{mohajer17} provide an algorithm that
returns the top $k$ players with probability exceeding $1-(\log n)^{-c_0}$ with sample complexity at most $c_1(n+k\log k)\frac{\max{(\log\log n,\log k)}}{\Delta_{k,k+1}^2}$ in expectation, where $c_0$ and $c_1$ are universal positive constants and $\Delta_{k,k+1}$ is the distinguishability between the $k$ and the $k+1$ best players (see Algorithm $2$ and Theorem $1$ in \cite{mohajer17}).

Ren et al. \cite{ren2020} show an algorithm that returns the top $k$ players with probability at least $1-\delta$ with sample complexity $\mathcal{O}(\sum_{i\in[n]}(\Delta_{i}^{-2}(\log(n/\delta)+\log\log\Delta_{i}^{-1}))$, where 
$\Delta_i=\mathbbm{1}_{i\succ {k+1}}\cdot \Delta_{i,k+1}+\mathbbm{1}_{k\succ i}\cdot\Delta_{k,i}$ and $k, {k+1}$ are the top $k$ and the top $k+1$ players, respectively (see Algorithm $5$ and Theorem $8$ in \cite{ren2020})\footnote{
We remark that 
\cite{ren2020} also assume Stochastic triangle inequality which we do not, however it is only used to derive a lower bound.}.
These algorithms, together with Theorem \ref{thm:theOneTrueLemmaToRuleAllWitnesses} allow us to derive the following theorem.

\begin{theorem}\label{thm:stochasticUpper}
There exists an algorithm that returns $A^*_k$  with probability exceeding $1-(\log n)^{-c_0}$ with sample complexity at most $c_1(n+k\log k)\frac{\max{(\log\log n,\log k)}}{\Delta^2}$ in expectation, where $c_0$ and $c_1$ are universal positive constants.

In addition, there exists an algorithm that returns $A^*_k$ with probability at least $1-\delta$ with sample complexity $\mathcal{O}(\sum_{i\in[n]}(\Delta_{i}^{-2}(\log(n/\delta)+\log\log\Delta_{i}^{-1}))$, where 
$\Delta_i=\mathbbm{1}_{i\succ {k+1}}\cdot \E[X_{i,k+1}]+\mathbbm{1}_{k\succ i}\cdot\E[X_{k,i}]$ and $i$ denotes the top $i$ players, thus $\Delta_i\geq \Delta$ for every $i\in[n]$.
\end{theorem}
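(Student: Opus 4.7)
The plan is to invoke the two cited dueling-bandits top-$k$ algorithms as black boxes on the singles-instance defined by \eqref{eq:reuduction}, simulating each singles duel via the \textit{singlesDuel} sub-procedure on four team duels. The three steps are: (a) verify that the reduced instance satisfies every hypothesis needed by the cited algorithms; (b) verify that \textit{singlesDuel} produces an i.i.d.\ stream of Bernoulli$(P_{a,b})$ samples; (c) translate the sample complexity bounds from singles duels to team duels, and identify the resulting top-$k$ set with $A^*_k$.

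For (a), the previous theorem already gives $P_{a,b}\in[0,1]$, $P_{a,b}+P_{b,a}=1$, and $P_{k,k+1}=1/2+\Delta$. That $a\succ b \Rightarrow P_{a,b}\ge 1/2$ with strict inequality whenever the pair is deducible follows from Theorem~\ref{thm:theOneTrueLemmaToRuleAllWitnesses}. Strong stochastic transitivity of the singles probabilities with respect to the induced player order is exactly the content of Lemma~\ref{lemma:SSTforF}, since SST in multiplicative/additive form for the $P_{a,b}$ is equivalent to SST for the $\E[X_{a,b}]$. Finally, Theorem~\ref{cor:ZlargerThanDelta} ensures that the unique top-$k$ set of the singles instance coincides with $A^*_k$, so whatever the black-box algorithm returns as its ``top-$k$'' is precisely $A^*_k$. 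For the \cite{ren2020} bound, one additionally reads off the gap of player $i$ in the reduced instance: by definition it equals $\E[X_{i,k+1}]$ if $i\succ k+1$ and $\E[X_{k,i}]$ otherwise, and Lemma~\ref{lemma:SSTforF} with $\Delta_{k,k+1}=\Delta$ forces $\Delta_i \ge \Delta$.

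For (b), at each call \textit{singlesDuel} draws an independent uniform triplet $(S,S',T)\in\mathcal{X}_{a,b}$, performs the four team duels whose outcomes define $X_{a,b}(S,S',T)$, and flips an independent fair-bias coin so that the reported winner is $a$ with probability $X_{a,b}(S,S',T)+1/2$. By definition of $X_{a,b}$ the expected bias conditional on $(S,S',T)$ equals $\E[X_{a,b}(S,S',T)]+1/2$; taking expectation over the uniformly random triplet yields Bernoulli$(P_{a,b})$. Independence across calls follows from stationarity of the team-duel outcomes and the fresh randomness in the triplet and auxiliary coin. Step (c) is then immediate: every singles duel costs exactly four team duels plus $O(1)$ random draws, so the expected sample complexities from \cite{mohajer17} and \cite{ren2020} inflate only by a constant factor, absorbed into $c_1$ and the $\mathcal{O}(\cdot)$.

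The only nontrivial point—and the reason the previous sections are needed—is hypothesis (a): without Lemma~\ref{lemma:SSTforF} and Theorem~\ref{thm:theOneTrueLemmaToRuleAllWitnesses} the reduced instance might fail SST or have the wrong top-$k$ set, and the cited algorithms would not apply. Everything else is book-keeping around the four-to-one blow-up.
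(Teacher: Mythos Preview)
Your proposal is correct and follows essentially the same approach as the paper: both reduce to the classical dueling-bandits problem via the \textit{singlesDuel} procedure and then invoke the black-box algorithms of \cite{mohajer17} and \cite{ren2020}, with the required hypotheses (SST, gap, correct top-$k$ set) supplied by Lemma~\ref{lemma:SSTforF}, Theorem~\ref{thm:theOneTrueLemmaToRuleAllWitnesses}, and Theorem~\ref{cor:ZlargerThanDelta}. Your write-up is in fact more explicit than the paper's about the Bernoulli$(P_{a,b})$ property of \textit{singlesDuel}, the independence across calls, and the constant-factor blow-up from four team duels per singles duel, but the underlying argument is identical.
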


\section{Deterministic Setting} \label{sec:deterministic}

In the previous section we showed the existence of algorithms that identify the top $k$ team with a number of duels that depends on $\Delta$. But what if $\Delta$ is very small or even $0$? 
One reason for that can be that 
all relevant probabilities are close to $1/2$. More precisely, 
$P_{\{k\}\cup S,\{k+1\}\cup S'}$, $P_{\{k+1\}\cup S,\{k\}\cup S'}$, $P_{\{k+1\}\cup S,T}$, and $P_{\{k\}\cup S,T}$ are very close to $1/2$ for all $(S,S',T) \in \mathcal{X}^*_{k,k+1}$. This 
might also occur 
in classic dueling bandits settings, when the target is to separate the top $k$ players from the rest (e.g., \cite{mohajer17,ren2020}). As a result, a gap between the top $k$ and $k+1$ players is often a parameter of the sample complexity in such settings. For these cases, our approach presented in the stochastic section very much resembles the current literature.

The other, more interesting reason for $\Delta$ to be small is when there exist only a small number of witnesses. This is in particular the case when the probability matrix contains only few distinct values, 
as for example when feedback is deterministic, i.e., $P_{A,B} \in \{0,1\}$.
Note that in this setting, $(S,T) \in \mathcal{T}_{a,b}$ is a witness if and only if $S \cup \{a\} \succ_{obs} T \succ_{obs} S \cup \{b\}$, and $(S,S') \in \mathcal{S}_{a,b}$ is a witness if and only if $S \cup \{a\} \succ_{obs} S'\cup \{b\}$ and $S'\cup \{a\} \succ_{obs} S \cup \{b\}$.
This follows as for
any tuple $(S,S',T)\in \mathcal{X}_{a,b}$ which is not a witness it holds that $\E[X_{a,b}(S,S',T)]=0$.
Moreover, it is 
possible to come up with deterministic instances where up to $(2k-1)^2$ pairs do not have any 
witness to distinguish them. 
To overcome this issue, in this section we design algorithms for the deterministic case that are independent of $\Delta$. 
In the appendix we show that these results can be extended to a slightly stochastic environment.

The limitation of the set of witnesses makes the problem of identifying a Condorcet winning team in the deterministic setting surprisingly nontrivial. 
For general total orders, a crucial difficulty lies in efficiently proving that a given team is indeed Condorcet winning. However, we are still able to 
get the following result:
\begin{theorem}\label{thm:gap-independent1}
For deterministic feedback, there exists an algorithm that performs $\mathcal{O}(kn\log(k) + k^2\log(k)2^{5k})$ duels and outputs a Condorcet winning team. 
\end{theorem}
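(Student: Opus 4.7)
The plan is a two-phase algorithm. Phase 1 is a preprocessing routine that, using $\mathcal{O}(nk\log k)$ team duels, shrinks the candidate set from $[n]$ to a subset $R$ of size $\mathcal{O}(k)$ while guaranteeing $A^*_k = [k] \subseteq R$. Phase 2 then uses $\mathcal{O}(k^2 \log(k)\, 2^{5k})$ duels to identify $[k]$ inside $R$ by searching among its $k$-subsets; since $[k]\subseteq R$ and $[k]$ is Condorcet winning by consistency, the winner of this search is the desired output.

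For Phase 1, I would maintain an active pool $R$, initialized to $[n]$, and proceed in $\mathcal{O}(\log(n/k))$ elimination rounds. In each round I randomly match the players of $R$ into pairs $(a,b)$ and, for every such pair, try to detect a subsets witness in $\mathcal{S}^*_{a,b}\cup \mathcal{S}^*_{b,a}$ or a subset-team witness in $\mathcal{T}^*_{a,b}\cup \mathcal{T}^*_{b,a}$ by sampling $\mathcal{O}(k\log k)$ random triples $(S,S',T)\in \mathcal{X}_{a,b}$ and performing the associated four duels. In the deterministic regime every such test is unambiguous, so any positive detection certifies the relation by Theorem~\ref{thm:provableSingleRelationIffWitnessExists} and the loser is eliminated safely. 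The key combinatorial claim driving the analysis is that, when at least one of $a,b$ lies outside $[k]$, a uniformly random attempt produces a witness with probability $\Omega(1/(k\log k))$, so $\mathcal{O}(k\log k)$ independent attempts succeed with constant probability. A charging argument then shows each round removes a constant fraction of $R$, and a geometric sum yields $\mathcal{O}(nk\log k)$ total duels while preserving $[k]\subseteq R$.

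For Phase 2, since $|R|=\mathcal{O}(k)$ there are at most $\binom{|R|}{k}=2^{\mathcal{O}(k)}$ candidate $k$-subsets of $R$. I enumerate them; for each candidate $A$, I check whether $A=[k]$ by verifying $a\succ^* b$ for every $a\in A$ and every $b\in R\setminus A$, again by sampling witnesses from $\mathcal{X}_{a,b}$ restricted to $R$. There are $\mathcal{O}(k^2)$ pairs per candidate, each verified in $\mathcal{O}(\log k)$ attempts of a constant number of duels, and the total candidate-pair budget is captured by $2^{5k}$, giving $\mathcal{O}(k^2 \log(k)\, 2^{5k})$ duels. The unique subset of $R$ passing every check is $[k]$, which is Condorcet winning by consistency, so the algorithm is correct.

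The main obstacle is the density lemma behind Phase 1: proving that a uniformly random triple from $\mathcal{X}_{a,b}$ is actually a witness with inverse-polynomial probability whenever the pair is deducible and one of $a,b$ is sub-optimal. In the deterministic setting $\mathcal{X}^*_{a,b}$ can be very sparse — indeed this is exactly why one cannot achieve a gap-dependent bound here — so a careful combinatorial argument exploiting the consistency of $\succ$ with a single-player order is needed. This same sparsity is what forces the $2^{5k}$ blow-up in Phase 2: without stronger structural control over witnesses on the reduced pool, one is ultimately driven to enumerate $k$-subsets of $R$ explicitly.
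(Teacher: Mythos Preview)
Your plan has a genuine gap in both phases, and the paper's argument proceeds along quite different lines.

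\textbf{Phase 1.} The density lemma you need is not merely hard to prove; it is false. In the deterministic regime there exist pairs $(a,b)$ with $a\in A^*_k$ and $b\notin A^*_k$ for which $\mathcal{X}^*_{a,b}=\emptyset$, i.e.\ the relation is not deducible at all (the paper notes that up to $(2k-1)^2$ pairs may have no witness). Random sampling from $\mathcal{X}_{a,b}$ therefore cannot succeed with any positive probability for such pairs, and no amount of combinatorics on $\succ$ will rescue the $\Omega(1/(k\log k))$ claim. The paper's preprocessing avoids this entirely: the \emph{Uncover} subroutine takes two disjoint teams $A\succ B$ and, by a binary-search swap argument, deterministically produces \emph{some} pair $a\in A,\ b\in B$ together with a subsets witness in $\mathcal{O}(\log k)$ duels. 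The crucial point is that Uncover does not try to certify a prescribed pair; it lets the instance dictate which pair is revealed. Feeding Uncover with matchings of size $k$ among players of small in-degree in a dominance graph and iterating until no such matching exists yields the $\mathcal{O}(nk\log k)$ reduction to a set of size at most $6k-2$ that still contains $A^*_{2k}$.

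\textbf{Phase 2.} Your verification test ``check $a\succ^* b$ for all $a\in A,\ b\in R\setminus A$'' is not sound: the true top-$k$ team $[k]$ can fail it, because $k\succ^* k{+}1$ need not hold. Consequently no candidate, including $[k]$, is guaranteed to pass, and the set of passers need not be a singleton. The paper instead tests the candidate \emph{as a team}: pick $\hat A\subseteq V_{<2k}$ with no incoming arc from $V_{<2k}\setminus\hat A$, and duel $\hat A$ against every team $B\subseteq V_{<2k}\setminus\hat A$ (there are $\mathcal{O}(2^{5k})$ of them). If $\hat A$ wins all, it is Condorcet winning by the observation that $A^*_{2k}\subseteq V_{<2k}$. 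If some $B\succ\hat A$, one Uncover call on $(B,\hat A)$ yields a new arc not yet in the dominance graph (this is where the ``no incoming arc'' choice of $\hat A$ is used), and one repeats. Since at most $\mathcal{O}(k^2)$ new arcs can be added among $\mathcal{O}(k)$ players before some valid $\hat A$ must survive, the total is $\mathcal{O}(k^2\log(k)2^{5k})$ duels in this phase.

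In short, replace random witness sampling for fixed pairs by the constructive Uncover routine, and replace pairwise deducibility checks by direct team-versus-team testing on the reduced pool.
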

For the natural special case of \emph{additive total orders} we obtain a significantly better upper bound.
A total order $\succ$ is \emph{additive total}, if there exist values for the players denoted by $v(x),x \in [n]$ such that $A \succ B$ iff $\sum_{a \in A}v(a)> \sum_{b \in B}v(b)$. In Section \ref{sec:additiveLinear} of the appendix we give a sufficient and necessary condition for a linear order to be additive. For additive linear orders 
we present an algorithm that identifies a Condorcet winning team after polynomial many duels and also outputs a proof. 
\begin{theorem}\label{thm:gap-independent}
For deterministic feedback and additive total orders, there exists an algorithm that finds a Condorcet winning team within $\mathcal{O}(kn\log(k) + k^5)$ duels.
\end{theorem}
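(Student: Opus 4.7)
The plan is to combine two phases: a generic preprocessing phase that trims the player universe down to $\mathcal{O}(k)$ surviving candidates using $\mathcal{O}(nk\log k)$ duels, and a specialized additive-order phase that finishes the job on the reduced instance in $\mathcal{O}(k^5)$ duels. Phase one is shared with Theorem~\ref{thm:gap-independent1} and rests on the characterization of Section~\ref{sec:witnesses2}: we repeatedly construct witnesses (pairs in $\mathcal{S}^*_{a,b}$ or $\mathcal{T}^*_{a,b}$) at the cost of a constant number of duels each, noting that in the deterministic setting a single set of constant-size duels already certifies the implied relation. Once $k$ other surviving players are known to beat a given player $b$, $b$ is eliminated. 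A tournament-style scheduling of witness construction --- processing players in groups and ensuring that each round produces enough certified relations to discard a constant fraction of the losers --- brings the total to $\mathcal{O}(nk\log k)$ duels and leaves a set $R$ with $|R|=\mathcal{O}(k)$ that provably contains $A_k^*$.

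Phase two is the new content. For an additive total order, any team whose combined $v$-value dominates all disjoint competitors is Condorcet winning, and in particular the top-$k$ players inside $R$ form such a team. Hence it suffices to rank the players in $R$. The basic comparison subroutine I would use for two players $a,b\in R$ takes two disjoint reference sets $S,S'\subseteq R\setminus\{a,b\}$ of size $k-1$ and performs the duels $(S\cup\{a\},\,S'\cup\{b\})$ and $(S'\cup\{a\},\,S\cup\{b\})$. If the $a$-side wins both, then $(S,S')\in \mathcal{S}^*_{a,b}$ certifies $a\succ b$; if the $b$-side wins both, we get the opposite certificate; a split outcome instead reveals only which of $v(S), v(S')$ is larger, refining our knowledge of the reference. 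Because $|R|=\mathcal{O}(k)$, there are $\mathcal{O}(k)$ choices of players to plug into $S$ and $S'$, and a local search that iteratively rebalances the reference --- using the additive-order characterization of Section~\ref{sec:additiveLinear} to guarantee that some $(S,S')$ with $|v(S)-v(S')|<|v(a)-v(b)|$ is reachable within $\mathcal{O}(k^3)$ duels --- resolves each pairwise comparison. Running this routine over the $\mathcal{O}(k^2)$ candidate pairs yields the claimed $\mathcal{O}(k^5)$ bound.

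The main obstacle is the pairwise ranking inside $R$ in the presence of small value gaps. If $|v(a)-v(b)|$ is tiny, a generic $(S,S')$ produces only split outcomes and fails to certify either direction; the algorithm must leverage comparisons it has already made to assemble a reference pair whose $v$-imbalance is smaller than $|v(a)-v(b)|$, and one must argue that in an additive order on $\mathcal{O}(k)$ players such a pair always exists and can be located in polynomial time. The characterization of additive linear orders from Section~\ref{sec:additiveLinear} is the lever for this argument, and translating its implications into an $\mathcal{O}(k^3)$-duel search per pair --- rather than an exponential brute force as in the general-order case --- is exactly where the savings from $2^{\mathcal{O}(k)}$ down to $k^5$ come from, and is the technically delicate piece distinguishing this result from Theorem~\ref{thm:gap-independent1}.
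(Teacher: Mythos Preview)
Your two-phase decomposition matches the paper, and the preprocessing budget of $\mathcal{O}(nk\log k)$ duels is correct (though your description of it as producing witnesses ``at the cost of a constant number of duels each'' is off --- the \emph{Uncover} subroutine costs $\mathcal{O}(\log k)$ duels and, crucially, gives you no control over \emph{which} pair it certifies).

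The real gap is in Phase~2. You propose to \emph{rank} the $\mathcal{O}(k)$ survivors by resolving every pairwise comparison, asserting that for each pair $a,b$ an additive order guarantees a reference $(S,S')$ with $|v(S)-v(S')|<|v(a)-v(b)|$ reachable in $\mathcal{O}(k^3)$ duels. This is false. Take $k=2$, players with values $100,10,1,0$; the only candidate reference for comparing players $2$ and $3$ is $(\{1\},\{4\})$, whose imbalance is $100$, far exceeding $v(2)-v(3)=9$. There is no subsets witness and no subset-team witness for $2\succ 3$, so by Theorem~\ref{thm:provableSingleRelationIffWitnessExists} the relation is not deducible at all --- no number of duels resolves it. The paper notes explicitly that up to $(2k-1)^2$ pairs can be witness-free in the deterministic setting. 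The characterization in Section~\ref{sec:additiveLinear} is a Farkas-type criterion for when an order is additive; it gives you nothing about existence of balanced references.

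The paper therefore does \emph{not} attempt to rank the survivors. Instead it maintains a weak ordering $T_1\triangleright\cdots\triangleright T_\ell$ and, at each step, either refines one block (via \emph{Uncover} and a subroutine \emph{NewCut} that propagates a single witness to split the block) or certifies that a $k$-prefix $U\cup X$ is Condorcet winning without identifying $A^*_k$. The certification is the heart of the argument: using a subroutine \emph{Compare} (two extra duels on top of a known witness $(S,S')$) one bounds $|v(X)-v(Y)|<\epsilon_1$ and $|v(a)-v(b)|<\epsilon_2$ for all $a\in Y\cup W$, $b\in Z$, while simultaneously exhibiting $|Z|+1$ pairs $(u_i,w_i)$ with $v(u_i)-v(w_i)\geq\epsilon_2$ (and one with $\geq\epsilon_1$). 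A short additive calculation then shows $v(U\cup X)$ beats the best response. Whenever a \emph{Compare} call fails, it hands you a new witness inside the current block, so the partition refines; this can happen at most $\mathcal{O}(k)$ times, and each pass through the nested loops costs $\mathcal{O}(k^4)$ duels, giving $\mathcal{O}(k^5)$. The key conceptual point you are missing is that the output need not be $A^*_k$ --- it is merely \emph{some} Condorcet winner --- and this relaxation is exactly what sidesteps the undeducible pairs.
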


Both algorithms rely on the same preprocessing procedure called \emph{ReducePlayers} which reduces the number of players from $n$ to $\mathcal{O}(k)$. At the heart of this procedure is a subroutine called \emph{Uncover}. 
After describing \emph{Uncover} and \emph{ReducePlayers}, we prove Theorem \ref{thm:gap-independent1}. Towards proving Theorem \ref{thm:gap-independent}, we 
introduce two more subroutines, namely \emph{NewCut} and \emph{Compare}, which are crucial for identifying and proving a Condorcet winning team within the smaller instance.
Finally, Algorithm \emph{CondorcetWinning} combines all components and proves Theorem \ref{thm:gap-independent}.

\medskip

\paragraph{The Uncover Subroutine}\label{subsec:uncover}

Given two disjoint teams $A \succ B$, the \emph{Uncover} subroutine finds a pair of players $a \in A$ and $b \in B$ and a subsets witness for their relation, i.e., an element from $\mathcal{S}^*_{a,b}$. To understand the idea of the subroutine, consider some arbitrary ordering of the elements in $A$ and $B$, respectively, i.e., $A = \{a_1,\dots, a_k\}$ and $B=\{b_1, \dots, b_k\}$. Then, iteratively exchange the elements $a_1$ and $b_1$, $a_2$ and $b_2$, resulting in sets $A_0 = A, B_0= B, A_1 = \{b_1,a_2,\dots,a_k\},  B_1 = \{a_1,b_2,\dots,b_k\}$, $A_2 = \{b_1,b_2,a_3,\dots,a_k\}$, and so on. Since $A_0 \succ B_0$ but $A_0=B_k \succ A_k=B_0$ holds, there needs to be some earliest point in time $i \leq k$ for which $B_i \succ A_i$ is true. This implies $a_i \succ b_i$ as 
 $(\{a_1, \dots a_{i-1}, b_{i+1}, \dots b_k\},\{ b_1, \dots , b_{i-1} , a_{i+1}, \dots ,a_k\})$ is a witness for this relation. 

While the above sketched subroutine is simple, it performs $k$ duels in the worst case. We refine this idea by a binary search approach, decreasing the number of duels to $\log(k)$.  

\begin{restatable}{lemma}{uncoverlemma}\label{lem:uncover}
Let $A$ and $B$ be two disjoint teams with $A \succ B$. After performing $\mathcal{O}(\log(k))$ duels, \emph{Uncover} returns $(a,b)$ with $a \in A$, $b \in B$ and $(S,S') \in \mathcal{S}^{*}_{a,b}$, and thus  $a\succ b$. 
\end{restatable}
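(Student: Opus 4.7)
The plan is to turn the linear sketch already given into a binary search over the swap index. First, I would formalize the construction: fix arbitrary orderings $A=\{a_1,\ldots,a_k\}$ and $B=\{b_1,\ldots,b_k\}$, and for each $i\in\{0,1,\ldots,k\}$ define
\[
A_i=\{b_1,\ldots,b_i,a_{i+1},\ldots,a_k\},\qquad B_i=\{a_1,\ldots,a_i,b_{i+1},\ldots,b_k\}.
\]
Note that $A_i\cup B_i=A\cup B$ and $A_i\cap B_i=\emptyset$ at every step, so each $(A_i,B_i)$ is a feasible duel. Moreover $A_0=A=B_k$ and $B_0=B=A_k$, so the hypothesis $A\succ B$ gives $A_0\succ_{obs} B_0$ and, dually, $B_k\succ_{obs} A_k$.

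Second, I would run a binary search maintaining indices $\ell<r$ with invariant $A_\ell\succ_{obs} B_\ell$ and $B_r\succ_{obs} A_r$, initialised to $(\ell,r)=(0,k)$. While $r>\ell+1$, query the single duel $(A_m,B_m)$ for $m=\lfloor(\ell+r)/2\rfloor$; since feedback is deterministic this one query resolves the relation, and we update $\ell\leftarrow m$ if $A_m\succ_{obs} B_m$ and $r\leftarrow m$ otherwise. Each iteration halves $r-\ell$, so the procedure terminates after $\mathcal{O}(\log k)$ duels with $r=\ell+1$. Setting $i:=r$, we have $A_{i-1}\succ_{obs} B_{i-1}$ and $B_i\succ_{obs} A_i$.

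Third, I would extract the witness. Let $a:=a_i\in A$, $b:=b_i\in B$,
\[
S:=\{b_1,\ldots,b_{i-1},a_{i+1},\ldots,a_k\},\qquad S':=\{a_1,\ldots,a_{i-1},b_{i+1},\ldots,b_k\}.
\]
A direct check gives $A_{i-1}=S\cup\{a\}$, $B_{i-1}=S'\cup\{b\}$, $A_i=S\cup\{b\}$, and $B_i=S'\cup\{a\}$. Combining with the two relations ending the search yields $S\cup\{a\}\succ S'\cup\{b\}$ and $S'\cup\{a\}\succ S\cup\{b\}$, which in the deterministic model translates to $P_{S\cup\{a\},S'\cup\{b\}}=1>0=P_{S\cup\{b\},S'\cup\{a\}}$. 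Hence $(S,S')\in\mathcal{S}^{*}_{a,b}$, and by Theorem~\ref{thm:provableSingleRelationIffWitnessExists} we conclude $a\succ b$. Disjointness of $S,S'$ and that they sit in $[n]\setminus\{a,b\}$ with size $k-1$ follows because $A\cup B$ is partitioned accordingly.

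The main obstacle I anticipate is not algorithmic but conceptual: the sequence of relations between $A_i$ and $B_i$ is not monotone in $i$, so a reader might worry that binary search is unjustified. The resolution is that we do not need to locate the earliest flip, only any adjacent pair of indices at which a flip occurs, and the invariant $A_\ell\succ_{obs} B_\ell\wedge B_r\succ_{obs} A_r$ is preserved by both branches of the update because whichever side of the midpoint the query reveals, a flip must still lie in the shortened interval. A secondary point deserving explicit verification is the boundary behaviour at $i=1$ and $i=k$, where $S$ or $S'$ reduces to $A\setminus\{a_1\}$ or $B\setminus\{b_k\}$; these remain valid $(k-1)$-subsets of $[n]\setminus\{a,b\}$ by construction.
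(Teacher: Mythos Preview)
Your proposal is correct and follows essentially the same binary-search-over-swaps idea as the paper: maintain an interval of swap indices whose endpoints witness opposite outcomes, halve it with one duel per step, and read off a subsets witness $(S,S')\in\mathcal{S}^*_{a,b}$ from the adjacent flip at termination. The paper's presentation differs only cosmetically (it maintains and mutates the sets $S,T$ directly rather than indexing into the fixed sequence $(A_i,B_i)$, and it proves a slightly stronger version allowing a fixed partition $A=A^{(1)}\cup A^{(2)}$, $B=B^{(1)}\cup B^{(2)}$ before specializing to $A^{(2)}=B^{(2)}=\emptyset$), but the invariant, the duel count, and the extracted witness are the same.
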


We remark that Lemma 
\ref{lem:uncoverStrong} in the appendix  is a slightly stronger version of the above lemma, 
which allows us to partition $A$ and $B$ into two subsets each, $A=A^{(1)}\cup A^{(2)}$ and $B=B^{(1)}\cup B^{(2)}$. Under some circumstances, we can then guarantee that \emph{Uncover} reveals the pairwise comparison between two players $a \succ b$, where $a$ is from $A^{(1)}$ and $b$ is from $B^{(1)}$.

\medskip
\paragraph{Reducing the Number of Players to $\bm{\mathcal{O}(k)}$}\label{subsec:graph-algo}

The fact that we can eliminate some players from $[n]$ and still find (and prove) a Condorcet winning team is due to the following observation.

\begin{restatable}{observation}{lemobs}\label{obs:2k}
Let $\Sub \subseteq [n]$ such that $A_{2k}^* \subseteq \Sub$. Let $\W \subseteq \Sub$ be a team such that $\W \succ A$ for all teams $A \subseteq \Sub \setminus \W$. Then, $\W$ is a Condorcet winning team. 
\end{restatable}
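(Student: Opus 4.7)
The plan is to show that any team $B$ with $B \cap \W = \emptyset$ satisfies $\W \succ B$. The hypothesis handles the case $B \subseteq \Sub$ directly, so the real work is to ``transport'' a general $B$ into $\Sub \setminus \W$ via a chain of consistency-preserving, strictly improving single-player swaps, and then appeal to transitivity.

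First I would dispose of the case $B \subseteq \Sub$: disjointness from $\W$ gives $B \subseteq \Sub \setminus \W$, and the hypothesis yields $\W \succ B$ immediately. Otherwise, split $B = B_{\text{in}} \cup B_{\text{out}}$ with $B_{\text{in}} = B \cap \Sub$ and $B_{\text{out}} = B \setminus \Sub$. Since $A^*_{2k} \subseteq \Sub$, every player in $B_{\text{out}}$ ranks strictly worse than $2k$. A short count, using $B \cap \W = \emptyset$, $|\W|=k$, and $|B_{\text{in}}| = k-|B_{\text{out}}|$, gives
\[
|A^*_{2k} \setminus (\W \cup B_{\text{in}})| \;\geq\; 2k - k - (k-|B_{\text{out}}|) \;=\; |B_{\text{out}}|,
\]
so I can pick a set $C$ of $|B_{\text{out}}|$ distinct players from $A^*_{2k} \setminus (\W \cup B_{\text{in}})$, each of which is among the top $2k$ and hence strictly better than every element of $B_{\text{out}}$.

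Next I would define $A = B_{\text{in}} \cup C$, which is a $k$-team contained in $\Sub \setminus \W$, and transform $B$ into $A$ by successively replacing each $b \in B_{\text{out}}$ with a matching $c \in C$ satisfying $c \succ b$. Each swap strictly improves the current team by consistency; the swap is legal because $C$ is disjoint from $B$ (its elements lie in $A^*_{2k}$ while $B_{\text{out}}$ does not, and $C$ avoids $B_{\text{in}}$ by construction) and the $c_i$'s are distinct. Chaining these improvements by transitivity of $\succ$ yields $A \succ B$; combining with $\W \succ A$ from the hypothesis gives $\W \succ B$.

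The only step that needs any care is the pigeonhole count above together with the bookkeeping that makes every intermediate swap legitimate; I do not foresee a genuine obstacle beyond being explicit about the disjointness conditions. The observation is really a clean consequence of consistency once one notices that the coverage $A^*_{2k} \subseteq \Sub$ guarantees a sufficient pool of ``good'' replacement players inside $\Sub \setminus \W$.
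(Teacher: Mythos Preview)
Your argument is correct. The paper states this as an observation and does not spell out a proof; your replacement-and-chain argument is exactly the natural justification: use $A^*_{2k}\subseteq \Sub$ and the disjointness of $\W$ and $B$ to find enough top-$2k$ players inside $\Sub\setminus \W$ to replace the ``outside'' part of $B$, then apply consistency swap-by-swap and transitivity to get $\W\succ A\succ B$.
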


The procedure \textit{ReducePlayers} reduces the set of players $[n]$ to some subset $\Sub \subseteq [n]$ guaranteeing that $A^*_{2k} \subseteq \Sub$ and $|\Sub| < 6k$. 
The algorithm maintains a dominance graph $D = (V,E)$ on the set of players. More precisely, the nodes of $D$ are the players, i.e., $V=[n]$, and there exists an arc from node $a$ to node $b$ if the algorithm has proven that $a \succ b$. The set $V_{<2k}$ is the subset of the players having an indegree smaller than $2k$ in $D$.
The high level idea of the algorithm is the following: It starts with the empty dominance graph  $D=([n],\emptyset)$. The algorithm then iteratively identifies pairwise relations of the players with help of \emph{Uncover} and adds the corresponding arcs to the graph. By adding more and more arcs to $D$, the set of nodes $V_{<\m}$ shrinks more and more while $A^*_{<\m} \subseteq V_{< \m}$ is always guaranteed. At some point, the algorithm cannot identify any more pairwise relations and returns $V_{<\m}$. 
How does the algorithm identify pairwise relations? At any point it tries to find a matching between $2k$ players, say $\{(a_1,b_1), \dots, (a_k,b_k)\}$ with the constraint that, for all $i \in [k]$, none of the arcs $(a_i,b_i)$ or $(b_i,a_i)$ is present within the graph $D$ yet. The algorithm ends when it cannot find such a matching anymore. We show that this only happens after $|V_{<\m}| < 6k$.

\begin{restatable}{lemma}{lemgraphalgocorrectness}\label{alg:graph_algo_correctness}
Given the set of players $[n]$, \emph{ReducePlayers} returns $\Sub \subseteq [n]$ with $|\Sub|\leq 6k-2$ and $A_{2k}^* \subseteq \Sub$. \emph{ReducePlayers} performs $\mathcal{O}(nk\log(k))$ duels and runs in time $\mathcal{O}(n^2k^2)$.
\end{restatable}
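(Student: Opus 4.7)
The plan is to verify the three assertions of the lemma in sequence: (i) the invariant $A_{2k}^* \subseteq \Sub$ throughout execution, (ii) the size bound $|\Sub| \leq 6k-2$ at termination, and (iii) the duel and time complexities.

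For (i), I would argue by induction on iterations. Lemma~\ref{lem:uncover} guarantees that whenever \emph{Uncover} is invoked on disjoint teams with $A \succ B$, the arc it returns certifies a genuine single-player relation $a \succ b$, so every arc added to $D$ is consistent with $\succ$. A player $a \in A_{2k}^*$ can only be dominated by the other $2k-1$ members of $A_{2k}^*$, so its indegree in $D$ never reaches $2k$; hence $a$ stays in $V_{<2k}$ at every moment, and in particular $A_{2k}^* \subseteq \Sub$ at termination.

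For (ii), let $m := |V_{<2k}|$ at the moment of termination, and consider the \emph{availability graph} $G$ on $V_{<2k}$ whose edges are the pairs $\{a,b\}$ for which neither $(a,b)$ nor $(b,a)$ appears in $D$. Termination means $G$ has no matching of size $k$. Using the standard bound $\tau \leq 2\nu$ relating minimum vertex cover $\tau$ and maximum matching $\nu$, $G$ contains an independent set $S$ of size at least $m - 2(k-1) = m - 2k + 2$. By the definition of $S$, every pair in $S$ is joined by an arc in $D$, so the restriction of $D$ to $S$ is a tournament; by (i) it is consistent with $\succ$. Its bottom element thus receives $|S| - 1$ incoming arcs from within $S$, and since this element belongs to $V_{<2k}$ its total indegree in $D$ is at most $2k - 1$, yielding $|S| \leq 2k$. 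Combining with $|S| \geq m - 2k + 2$ gives $m \leq 4k - 2 \leq 6k - 2$.

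For (iii), each iteration appends exactly one arc whose target lies in $V_{<2k}$ at that moment, so a player's ``indegree budget'' of $2k$ can be charged against iterations that target it before it leaves $V_{<2k}$. Summing over the $n$ players bounds the number of iterations by $2kn$. Since \emph{Uncover} spends $\mathcal{O}(\log k)$ duels per invocation, the total duel count is $\mathcal{O}(nk \log k)$. For the running time, the dominant per-iteration cost is the search for a matching of size $k$ in the availability graph on $V_{<2k}$; using an $\mathcal{O}(1)$-lookup adjacency representation of $D$, this can be done in time $\mathcal{O}(nk)$ per iteration, and multiplied by the $\mathcal{O}(nk)$ iteration bound gives the stated $\mathcal{O}(n^2 k^2)$ total time.

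I expect step (ii) to be the technical crux: the argument requires chaining the matching non-existence hypothesis with a vertex-cover-to-independent-set conversion, and then exploiting the fact that an independent set in the availability graph must form a transitive tournament on $V_{<2k}$ whose size is capped by the indegree constraint. Step (i) is a short inductive bookkeeping observation, and step (iii) is an amortised charging argument once the one-arc-per-iteration structure of (i) is in place.
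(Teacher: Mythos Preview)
Your overall three-part structure matches the paper's proof, and parts (i) and (iii) are essentially identical to the paper's arguments.

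There is one small slip in part (ii): termination of the while loop does \emph{not} mean ``$G$ has no matching of size $k$''. The subroutine \emph{GreedyMatching} is a greedy heuristic, and when it returns a matching $M$ with $|M|<k$ this only tells you that $M$ is \emph{maximal}, not that the maximum matching $\nu$ is below $k$. So your chain $\nu\le k-1\Rightarrow\tau\le 2\nu\le 2(k-1)$ is not justified as written. The fix is immediate and in fact bypasses $\nu$ altogether: the vertex set of any maximal matching is a vertex cover, so $\tau\le 2|M|\le 2(k-1)$ directly. With this correction the rest of your argument goes through unchanged and yields $|\Sub|\le 4k-2$, which is strictly sharper than the paper's $6k-2$.

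It is worth noting that the paper takes a different route for (ii). Both proofs establish that independent sets in $G_{<2k}$ have size at most $2k$ (your transitive-tournament argument is exactly theirs). But where you convert the small maximal matching into a small vertex cover and hence a large independent set, the paper instead argues by contradiction: assuming $|V_{<2k}|\ge 6k-1$, it greedily builds a matching of size $2k$ (repeatedly using the $2k$-bound on independent sets to guarantee an edge remains), and then invokes the $\tfrac12$-approximation property of \emph{GreedyMatching} to derive a contradiction. Your approach is shorter and gives the tighter constant; the paper's approach avoids the vertex-cover detour but pays for it with a looser bound.
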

We can now prove Theorem \ref{thm:gap-independent1}.

\emph{Proof Sketch (of Theorem \ref{thm:gap-independent1})}. Let $D$ be the dominance graph at the end of \emph{ReducePlayers}. Then, the learner selects a $k$-sized subset of $V_{<2k}$, call it $\W$, with the property that there is no arc from any node in $V_{<2k} \setminus \W$ towards some node in $\W$. Then, the learner tests $\W$ against all possible teams containing players from $V_{<2k} \setminus \W$, which are $\mathcal{O}(2^{5k})$ many. If $\W$ wins all of these duels, then $\W$ is a Condorcet winning team by Observation \ref{obs:2k}. However, if there exists $A \succ \W$, then, by the choice of $\W$, there does not exist any arc from $A$ towards $\W$. Hence, by calling the subroutine \textit{Uncover} for two arbitrary orderings of $A=\{a_1, \dots, a_k\}$ and $\W=\{\hat{a}_1, \dots, \hat{a}_k\}$, the learner will identify one additional arc. This procedure can be repeated $\mathcal{O}(k^2)$ times and thus shows Theorem \ref{thm:gap-independent1}. \hfill \qedsymbol

\medskip

\paragraph{Subroutines NewCut and Compare} \label{subsec:NewCut}

The \emph{NewCut} subroutine takes as input a subset of the players $\Sub \subseteq [n]$, a pair $a,b \in \Sub$, and a witness proving that $a \succ b$, i.e., $(S,T') \in \mathcal{S}_{a,b}^* \cup \mathcal{T}_{a,b}^*$. That means, $T'$ can be either of size $k-1$ or $k$, and $S$ and $T'$ are not required to be subsets of $\Sub$. The subroutine outputs a partition of $\Sub$ into two non-empty sets $U$ and $L$ with $U \triangleright L$, which is short-hand notation for $u \succ \ell$ for any $u \in U$ and $\ell \in L$.
The subroutine starts by initiating the set $U=\{a\}$ and redefines $R=R \setminus \{a,b\}$. At all times, $U$ contains only players $u$ for which the algorithm has found a witness for $u \succ b$. 
These witnesses are stored in a list $\mathcal{W}$, and it is checked whether they can be modified to become witnesses for 
$x \succ b$ for any other element in $x \in \Sub$. This modification is done by applying permutations on the set of subsets of the players, similarly as done within the proof of Theorem \ref{thm:provableSingleRelationIffWitnessExists} and Lemma \ref{lemma:SSTforF}.  
If the algorithm finds a witness for $x \succ b$, then $x$ is added to $U$ and removed from $R$. Additionally, the new witness is stored in $\mathcal{W}$. This process ends when either $\Sub$ is empty or all witnesses in $\mathcal{W}$ have been checked. At this point it holds that $U \triangleright R \cup \{b\}$, and the algorithm returns $(U,L:= R \cup \{b\})$. 
 \begin{restatable}{lemma}{newcutcorrect}\label{prop:newcutcorrect}
Let $\Sub \subseteq [n]$, $a,b \in \Sub$ and $(S,T') \in \mathcal{S}_{a,b}^* \cup \mathcal{T}_{a,b}^*$. Then, $\mathrm{NewCut}(\Sub,(a,b),(S,T'))$ returns a partition of $\Sub$ into $U$ and $L$ such that $U \triangleright L$, $a \in U$ and $b \in L$. The number of duels performed by $\mathrm{NewCut}$ and its running time can be bounded by $\mathcal{O}(|\Sub|^2)$.  
\end{restatable}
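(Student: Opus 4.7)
The plan is to verify the two claims of the lemma---the structural properties of the partition and the $\mathcal{O}(|\Sub|^2)$ bound---in turn. The easy part is that $a \in U$ and $b \in L$: by construction $U$ is initialized to $\{a\}$, elements only migrate from the candidate set $R$ (initialized to $\Sub \setminus \{a,b\}$) into $U$, and $b$ is never inserted into $U$, so $b$ remains in $L = R \cup \{b\}$ throughout and $\{U,L\}$ is a partition of $\Sub$. What needs argument is $U \triangleright L$. I would first establish the loop invariant that for every current $u \in U$ the list $\mathcal{W}$ holds an element of $\mathcal{S}_{u,b}^* \cup \mathcal{T}_{u,b}^*$ whose defining inequality has been directly confirmed by performing the (constant number of) required duels; by the witness definitions this certifies $u \succ b$. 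It then remains to show that at termination every $r \in R$ satisfies $b \succ r$, for then transitivity of $\succ$ yields $u \succ b \succ r$ and hence $u \succ r$ for every $u \in U$.

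The main obstacle, and the heart of the proof, is this last claim. I would prove its contrapositive: if $r \succ b$ then some iteration of the loop successfully modifies a stored witness into a witness for $r \succ b$, which would have moved $r$ into $U$ and contradicts $r \in R$. The modification reuses the permutation technique from the proofs of Theorem~\ref{thm:provableSingleRelationIffWitnessExists} and Lemma~\ref{lemma:SSTforF}: given a stored witness $(S,T') \in \mathcal{S}_{u,b}^* \cup \mathcal{T}_{u,b}^*$ and a target $r$, apply the permutation $\pi$ that swaps $u$ with $r$ (and substitutes $u$ into the freed slot should $r$ already appear in $S$ or $T'$). Using consistency of $\succ$ together with $u \succ b$, $r \succ b$, and SST, the permuted tuple $(\pi(S),\pi(T'))$ can be shown to satisfy the defining inequality of a witness for $r \succ b$; the calculation parallels the chain of SST applications used in the proof of Theorem~\ref{thm:provableSingleRelationIffWitnessExists}. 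The subtle point is that a single stored witness may only reach targets $r$ with $r \succ u$, so elements $r$ with $u \succ r \succ b$ must be reached via a chain of intermediate witnesses added to $\mathcal{W}$ as new players are inserted into $U$. I would handle this by induction on the rank of $r$ (starting from the players just above $b$ in the ground-truth order), showing that whenever $r \succ b$ some already-stored witness covers $r$; because the subroutine exhaustively tests every (witness, target) pair before declaring $\mathcal{W}$ fully checked, such an $r$ must eventually migrate to $U$.

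For the complexity bound, observe that each element added to $U$ contributes exactly one witness to $\mathcal{W}$, so $|\mathcal{W}| \le |U| \le |\Sub|$ at all times. Each witness is tested against each element of $R$ at most once; each such test constructs the permuted tuple in $\mathcal{O}(1)$ time and performs $\mathcal{O}(1)$ duels (at most four, as dictated by the definitions of $\mathcal{S}_{a,b}^*$ and $\mathcal{T}_{a,b}^*$) before deciding whether the witness inequality holds. Summing over all (witness, candidate) pairs gives $\mathcal{O}(|\Sub|^2)$ duels and $\mathcal{O}(|\Sub|^2)$ running time, matching the claimed bound.
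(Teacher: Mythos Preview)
Your partition bookkeeping and the complexity analysis are fine; the issue is the core step where you argue $U \triangleright L$. You try to prove the stronger intermediate claim that at termination every $r \in R$ satisfies $b \succ r$, and then conclude $u \succ b \succ r$ by transitivity. That intermediate claim is false. Take $k=2$, additive values $v(1)=100,\,v(2)=10,\,v(3)=9,\,v(4)=2,\,v(5)=0.5$, and call \emph{NewCut} on $\Sub=\{1,2,3\}$ with $a=1$, $b=3$, and the subsets witness $(S,T')=(\{4\},\{5\})\in\mathcal{S}_{1,3}^*$. When the loop tests $x=2$ against this single stored witness, it checks whether $(\{4\},\{5\})\in\mathcal{S}_{2,3}^*$; but $\{3,4\}\succ\{2,5\}$, so the check fails, nothing new enters $\mathcal{W}$, and the routine terminates with $U=\{1\}$, $L=\{2,3\}$. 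Here $r=2\in R$ yet $r\succ b$, so your claim and the proposed induction (whose base case is exactly this $r$) both break. The lemma itself is still true in this instance since $1\succ 2$ and $1\succ 3$; it is only your route through $b$ that does not work. Your own caveat that ``a single stored witness may only reach targets $r$ with $r\succ u$'' is precisely the obstruction, and chaining cannot rescue it because no intermediate $u'$ with $r\succ u'$ ever enters $U$.

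The paper's proof avoids this by arguing directly at the level of $U$ versus $L$: suppose for contradiction that some $c\in L$ and $d\in U$ satisfy $c\succ d$. Since $d\in U$, a witness $(S,T')$ for $d\succ b$ was stored as $(S,T',d)$, and when that triple was processed the for loop visited $x=c$ (because $c$ never left $R$). Now one has the chain $c\succ d\succ b$, which is exactly the hypothesis of Lemmas~\ref{lem:witness-transitivity-type1} and~\ref{lem:witness-transitivity-type2}; these guarantee that $(\pi_{cd}(S),\pi_{cd}(T'))$ (or, in the $|T'|=k$ case with $c\in T'$, the pair $(S,T'\setminus\{c\})$) is a witness for $c\succ b$, so the if/elsif test would have succeeded and moved $c$ into $U$, a contradiction. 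The key difference from your approach is that the paper compares the putative bad element $c$ to an element $d$ \emph{already in} $U$ (so $c\succ d\succ b$ holds and the transitivity lemmas apply), rather than trying to relate $c$ to $b$ directly.
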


\medskip
\subsection*{Additive linear orders}

From now on we assume additive linear orders. The \emph{compare} subroutine is crucial for obtaining upper bounds for differences between
values of players' subsets. It is used in the following situation. Let $(a,b)$ be a pair of players and $(S,S') \in \mathcal{S}^*_{a,b}$ be a witness for $a \succ b$. Then, it can be easily shown that $v(a) - v(b) > |v(S) - v(S')|$. We will be interested in the question whether a similar relation holds for two subsets of $S$ and $S'$, namely, $C \subseteq S$ and $D \subseteq S'$ of equal size. The \emph{compare} subroutine checks whether such a relation holds by performing two additional duels. 
If it returns \emph{True}, then $v(a)-v(b) > |v(C) - v(D)|$. Otherwise, there can be found a pair $c \in C$ and $d \in D$ and a witness for their relation by one call to the \emph{Uncover} subroutine. This observation is formalized below.

\begin{restatable}{lemma}{compare}\label{lem:compare}
Let $a \succ b$ be two players,  $(S,S') \in \mathcal{S}_{a,b}^*$ and $C \subseteq S, D \subseteq S'$ with $|C|=|D|$. If $\mathrm{Compare}((a,b),(S,S'),(C,D))$ returns \emph{True}, then $v(a)-v(b) > |v(C)-v(D)|$. Otherwise, one call to \emph{Uncover} returns $c \in C$ and $d \in D$ together with a witness for their relation. 
\end{restatable}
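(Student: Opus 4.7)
My plan is to define \emph{Compare} so that it performs two carefully chosen duels that symmetrically swap $C$ and $D$ across the two sides of the given witness, and then to dispatch each outcome either by a direct inequality (the True branch) or by a single call to the strengthened \emph{Uncover} of Lemma~\ref{lem:uncoverStrong} (the False branch). Concretely, I would have \emph{Compare} query
\[T_1=(S\setminus C)\cup D\cup\{a\} \text{ vs.\ } T_1'=(S'\setminus D)\cup C\cup\{b\},\]
\[T_2=(S'\setminus D)\cup C\cup\{a\} \text{ vs.\ } T_2'=(S\setminus C)\cup D\cup\{b\},\]
and return True iff the $a$-containing team wins in both duels. Disjointness and size $k$ of all four teams are immediate from $S\cap S'=\emptyset$, $C\subseteq S$, $D\subseteq S'$, $a,b\notin S\cup S'$ and $|C|=|D|$.

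Writing $\delta=v(a)-v(b)$, $\sigma=v(S)-v(S')$ and $\gamma=v(C)-v(D)$, the additive values produce duel differences $\delta+\sigma-2\gamma$ and $\delta-\sigma+2\gamma$, while the witness hypothesis $(S,S')\in\mathcal{S}^*_{a,b}$ already supplies $\delta>|\sigma|$. In the True branch, the two inequalities $\delta+\sigma>2\gamma$ and $\delta-\sigma>-2\gamma$ combine into $-(\delta-\sigma)/2<\gamma<(\delta+\sigma)/2$, whence $|\gamma|<(\delta+|\sigma|)/2<\delta$, which is precisely $v(a)-v(b)>|v(C)-v(D)|$.

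In the False branch, the two "$a$-team loses" inequalities $\gamma>(\delta+\sigma)/2$ and $\gamma<(\sigma-\delta)/2$ cannot both hold (they would force $\delta<0$), so exactly one of the duels is lost; WLOG say the first, the other case being entirely symmetric. I would then invoke \emph{Uncover} on $T_1'\succ T_1$ using the orderings that place the elements of $C$ first inside $T_1'$ and the elements of $D$ first inside $T_1$. The step I expect to require the most care is showing that this binary search is guaranteed to locate a pair inside $C\times D$; the key observation is that after exactly $|C|$ prefix-swaps the two intermediate teams become $S\cup\{a\}$ and $S'\cup\{b\}$, whose order is already flipped (relative to $T_1'\succ T_1$) by the original witness $(S,S')\in\mathcal{S}^*_{a,b}$. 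Hence the flip index of \emph{Uncover} satisfies $i\leq|C|$, so the returned pair consists of $c\in C$ and $d\in D$ with $c\succ d$ together with the corresponding subsets witness; the symmetric outcome of the False branch analogously yields $d\succ c$. In total, \emph{Compare} uses two duels plus, when False, one call to \emph{Uncover}, exactly as claimed.
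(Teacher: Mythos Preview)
Your overall approach matches the paper's exactly: the same two swapped duels define \emph{Compare}, and the same True/False case split. Two remarks.

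For the True branch, the paper's computation is more direct. Writing $\bar S=S\setminus C$ and $\bar S'=S'\setminus D$, it simply adds the witness inequality $v(\bar S')+v(D)+v(a)>v(\bar S)+v(C)+v(b)$ to the first new inequality $v(\bar S)+v(D)+v(a)>v(\bar S')+v(C)+v(b)$, obtaining $\delta>\gamma$ immediately, and symmetrically $\delta>-\gamma$ from the other pair. Your detour through $(\delta+|\sigma|)/2$ reaches the same conclusion but is unnecessary.

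For the False branch there is a genuine (though easily repaired) gap. Your key observation---that after exactly $|C|$ prefix-swaps the two teams become $S\cup\{a\}$ and $S'\cup\{b\}$, whose order is flipped relative to $T_1'\succ T_1$---is correct and is precisely what is needed. However, the conclusion ``hence the flip index of \emph{Uncover} satisfies $i\le |C|$'' does not follow when you call the two-team binary-search \emph{Uncover} on $T_1'\succ T_1$ with merely ``$C$ first, $D$ first'': the binary search locates \emph{some} flip, not the first one, and the ordering of the remaining positions can send it past $|C|$. Concretely, take $k=3$, $|C|=1$, values $a{=}11$, $b{=}1$, $c{=}10$, $d{=}2$, $s{=}3$, $s'{=}6$ (so $\delta{=}10$, $\sigma{=}5$, $\gamma{=}8$, and indeed $T_1'\succ T_1$); with the ordering $T_1'=(c,b,s')$, $T_1=(d,a,s)$ the first midpoint test goes right and the algorithm returns the pair $(s',s)\notin C\times D$. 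The fix---and what the paper does---is to invoke the four-set \emph{Uncover} of Lemma~\ref{lem:uncoverStrong} with $A^{(1)}=C$, $B^{(1)}=D$, $A^{(2)}=\bar S\cup\{a\}$, $B^{(2)}=\bar S'\cup\{b\}$: then your observation and the False-branch assumption $T_1'\succ T_1$ are exactly preconditions~(c) and~(d), and the lemma guarantees the returned pair lies in $C\times D$.
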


\definecolor{color1}{RGB}{0,101,189}
\definecolor{color2}{RGB}{162,173,0}
\definecolor{color3}{RGB}{227,114,34}
\definecolor{color4}{RGB}{202,033,063}
\definecolor{graphcolor}{RGB}{88,88,88}

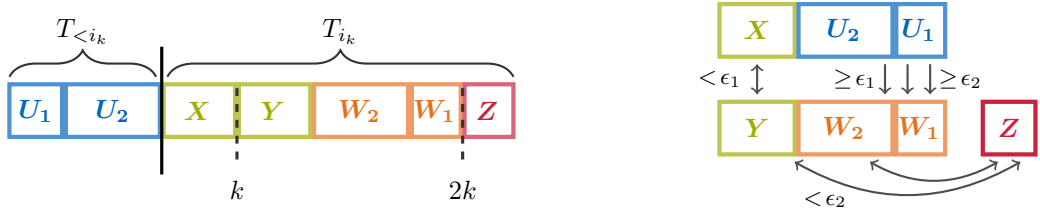
\begin{figure}
\begin{minipage}{.6\textwidth}
    \scalebox{1}{
    \begin{tikzpicture}
    \draw [draw=black] (0,0) rectangle (6.7,.7);
    \draw [draw=color1!70,ultra thick,] (0,0) rectangle (.7,.7);
    \node at (.35,.35) {\textcolor{color1}{$\bm{U_1}$}};
    \draw [draw=color1!70,ultra thick,] (.75,0) rectangle (2,.7);
    \node at (1.375,.35) {\textcolor{color1}{$\bm{U_2}$}};
    \draw [draw=color2!70,ultra thick,] (2.05,0) rectangle (3,.7);
        \node at (2.5,.35) {\textcolor{color2}{$\bm{X}$}};
    \draw [draw=color2!70,ultra thick,] (3.05,0) rectangle (4,.7);
    \node at (3.5,.35) {\textcolor{color2}{$\bm{Y}$}};
    \draw [draw=color3!70,ultra thick,] (4.05,0) rectangle (5.3,.7);
    \node at (4.65,.35) {\textcolor{color3}{$\bm{W_2}$}};
    \draw [draw=color3!70,ultra thick,] (5.35,0) rectangle (6,.7);
    \node at (5.65,.35) {\textcolor{color3}{$\bm{W_1}$}};
    \draw [draw=color4!70,ultra thick,] (6.05,0) rectangle (6.7,.7);
    \node at (6.35,.35) {\textcolor{color4}{$\bm{Z}$}};
    \draw[very thick] (2.025,-.5) -- (2.025,1.2);
    \draw[very thick,black!80,dashed] (3.025,-.3) -- (3.025,.7);
    \draw[very thick,black!80,dashed] (6.025,-.3) -- (6.025,.7);
    \node at (3.025,-.7) {$k$};
    \node at (6.025,-.7) {$2k$};
    \draw [decorate,thick,black!80, decoration={brace,amplitude=10pt},yshift=-.2cm]
(0,1) -- (1.95,1) node [black,midway,yshift=0.6cm] 
{ $T_{<i_k}$};
\draw [decorate,thick, black!80,decoration={brace,amplitude=10pt},yshift=-.2cm]
(2.1,1) -- (6.7,1) node [black,midway,yshift=0.6cm] 
{$T_{i_k}$};
    \end{tikzpicture}}
    \end{minipage}
    \begin{minipage}{0.35 \textwidth}
    \begin{tikzpicture}
    \draw [draw=black] (0,0) rectangle (3,.7);
    \draw [draw=black] (0,1.3) rectangle (3,2);
    \draw [draw=color4,ultra thick] (3.5,0) rectangle (4.2,.7);
    \node at (3.85,.35) {\textcolor{color4}{$\bm{Z}$}};
    \draw [draw=color2!70,ultra thick] (0,0) rectangle (1,.7);
    \node at (.5,.35) {\textcolor{color2}{$\bm{Y}$}};
    \draw [draw=color2!70,ultra thick] (0,1.3) rectangle (1,2);
    \node at (.5,1.65){\textcolor{color2}{$\bm{X}$}};
    \draw [draw=color1!70,ultra thick] (1.05,1.3) rectangle (2.3,2);
    \draw [draw=color3!70,ultra thick] (1.05,0) rectangle (2.3,.7);
    \node at (1.65,.35) {\textcolor{color3}{$\bm{W_2}$}};
    \node at (1.65,1.65) {\textcolor{color1}{$\bm{U_2}$}};
    \draw [draw=color1!70,ultra thick] (2.35,1.3) rectangle (3,2);
    \draw [draw=color3!70,ultra thick] (2.35,0) rectangle (3,.7);
    \node at (2.65,.35) {\textcolor{color3}{$\bm{W_1}$}};
    \node at (2.65,1.65) {\textcolor{color1}{$\bm{U_1}$}};
    \draw[<->,thick, black!70] (0.5,.8) -- (0.5,1.2);
    \node at (0,1){\small$< \!\epsilon_1$}; 
    \draw[<-,thick, black!70] (2.5,.8) -- (2.5,1.2);
    \draw[<-,thick, black!70] (2.8,.8) -- (2.8,1.2);
    \draw[<-,thick, black!70] (2.2,.8) -- (2.2,1.2);
    \node at (1.8,1){\small$\geq \!\epsilon_1$}; 
    \node at (3.2,1){\small$\geq \!\epsilon_2$}; 
    \draw[<->,thick, black!70] (3.7,-.1) to [bend left] (2,-.1);
    \draw[<->,thick, black!70] (4,-.1) to [bend left] (1,-.1);
     \node at (1.4,-0.6){\small$< \!\epsilon_2$}; 
    \end{tikzpicture}
    \end{minipage}\caption{Illustration of the proof technique of algorithm \emph{CondorcetWinning1}. In the left illustration, the solid black line indicates that all players left to it were proven to be better than all players right to it. 
    The dashed line marked with ``$k$'' indicates that the sets to its left contain $k$ players in total. However, this line does not indicate proven relations, i.e., players from $X$ are not necessarily 
     better than players from $Y$. 
    The right figure illustrates the proof for $X \cup U_1 \cup U_2$ being Condorcet winning.} \label{fig:1} 
\end{figure}

\medskip

\paragraph{Algorithm CondorcetWinning} \label{subsec:Condorcet}

The algorithm maintains a partition of the players into a weak ordering, i.e., $\mathscr{T} = \{T_1, \dots, T_{\ell}\}$ with $T_1 \triangleright T_2 \triangleright \dots \triangleright T_{\ell}$. We introduce the short-hand notation $T_{\leq j} = \bigcup_{m \in [j]} T_m$ and $T_{< j} = \bigcup_{m\in[j-1]} T_m$.
After the application of the preprocessing procedure \emph{ReducePlayers}, this partition consists of one set, namely $\mathscr{T} = \{T_1\}$, where $|T_1| \in \mathcal{O}(k)$ and $A^*_{2k} \subseteq T_1$.
At any point in the execution of the algorithm, we are especially interested in two indices, namely $i_k \in [\ell]$ such that $|T_{<i_k}|<k<|T_{\leq i_k}|$ and similarly  $i_{2k} \in [\ell]$ such that $|T_{<i_{2k}}|<2k<|T_{\leq i_{2k}}|.$ \footnote{In case one of these indices does not exist,
it implies that we have either identified the set $A^*_k$ or $A^*_{2k}$. In the first case we have found a Condorcet winning team and in the second case Observation \ref{obs:2k} implies that we can find one by performing one additional duel. For the sake of brevity we disregard this case from now on.} Observe that all players from $T_{<i_k}$ are guaranteed to be among the top-k players. On the other hand, among the players from $T_{i_k}$ some belong to $A^*_k$ and others do not. 
The main idea of the algorithm is then the following: Take a prefix of $\mathscr{T}$ of size $k$, i.e., this team contains the set of players $T_{<i_k}$ and is a subset of the players in $T_{\leq i_{k}}$, and either prove that this prefix is a Condorcet Winning team, or refine the partition $\mathscr{T}$ 
and repeat the process. The refinement is done by splitting one element of $\mathscr{T}$, say $T_i$, into two non-empty sets, $T_i^1 \triangleright T_i^2$, and re-indexing the sets within $\mathscr{T}$. Clearly, this increases the number of sets within the partition $\mathscr{T}$ by one.

We provide two different algorithms, namely \emph{CondorcetWinning1} for the case $i_k = i_{2k}$ and \emph{CondorcetWinning2} when $i_k \neq i_{2k}$. Unsurprisingly, the latter case requires a strictly less sophisticated approach, which is why we focus on \emph{CondorcetWinning1} in the following.

The algorithm starts by partitioning the set $T_{< i_{k}}$ into two sets $U_1$ and $U_2$, where $U_1$ is a prefix of $T_{<i_k}$ of size $|T_{\leq i_k}|-2k$. It partitions the set $T_{i_k}$ into five sets $X,Y,W_1,W_2,$ and $Z$. In particular it is known that $(U_1 \cup U_2) \triangleright (X \cup Y \cup W_1 \cup W_2 \cup Z)$ but no relation among any pair in $T_{i_k}$ is known. Regarding the sizes of the sets it holds that $|U_i| = |W_i|$ for $i \in \{1,2\}$, $|X|=|Y| = k - |U_1|-|U_2|$ and $|U_1| = |Z|$. The main aim of the algorithm will be to define $0< \epsilon_1 < \epsilon_2$ and prove that the following statements are true:
\begin{enumerate}[label=(\roman*)]
    \item $|v(X) - v(Y)| < \epsilon_1$
    \item $|v(a) - v(b)| < \epsilon_2$ for all $a \in Y \cup W_1 \cup W_2$ and $b \in Z$, and 
    \item there exist $u_1, \dots, u_{|Z|+1} \in U_1 \cup U_2$ as well as $w_1, \dots, w_{|Z|+1} \in W_1 \cup W_2$ such that 
    \begin{enumerate}[label=(\alph*)]
        \item $v(u_1) - v(w_1) \geq \epsilon_1$ and
        \item $v(u_i) - v(w_i) \geq \epsilon_2$ for all $i \in \{2, \dots, |Z|+1\}$.
    \end{enumerate}
\end{enumerate}
With these three statements we can show that $U_1 \cup U_2 \cup X$ is a Condorcet winning team. More precisely, one can show that $v(U_1 \cup U_2 \cup X) - v(W_1 \cup W_2 \cup Y) > |Z| \cdot \epsilon_2$ and $v(W_1 \cup W_2 \cup Y) - v(B^*) > - |Z| \cdot \epsilon_2$, where $B^*$ is the best response\footnote{We say that $B^*$ is a best response towards $U_1 \cup U_2 \cup X$, if $B^*$ contains the best $k$ players from $[n] \setminus (U_1 \cup U_2 \cup X)$.} towards $U_1 \cup U_2 \cup X$. See Figure \ref{fig:1} for an illustration of the argument. 

It remains to sketch how the algorithm defines $\epsilon_1,\epsilon_2$ and proves $(i)-(iii)$. 
For simplicity assume $U_1 \triangleright U_2$. The algorithm then attempts to do the following steps: (1) Find a witness for players $\Bar{u} \in U_2$ and $\Bar{w} \in W_2$, using \emph{Uncover}. (2) Use \emph{Compare}, to prove that $|v(X) - v(Y)| < v(\Bar{u}) - v(\Bar{w})$ and $|v(a) - v(b)| < v(\Bar{u}) - v(\Bar{w})$ holds for all players $a \in W_1 \cup W_2 \cup Y$ and $b \in Z$. (3) Repeat step (2) by replacing $\Bar{w}$ with any player of $W_1$. If one of the steps (1)-(3) fails, we show that the partition $\mathscr{T}$ can be refined. Otherwise, we show that $(i)-(iii)$ hold for $\epsilon_1 = v(\Bar{u}) - v(w^*_1)$ and $\epsilon_2 = v(\Bar{u}) - v(w_2^*)$, where $w_1^*$ and $w_2^*$ are the best and second best players from $W_1 \cup \{\Bar{w}\}$, respectively. The following Lemma concludes the proof sketch of Theorem \ref{thm:gap-independent}.

\begin{restatable}{lemma}{lemCondorcetWinning} \label{lemCondorcetWinning}
For every instance with $\mathcal{O}(k)$ players, after performing $\mathcal{O}(k^5)$ many duels, \emph{CondorcetWinning1} has identified a Condorcet winning team. \emph{CondorcetWinning2} identifies a Condorcet winning team after $\mathcal{O}(k^2\log(k))$ duels. 
\end{restatable}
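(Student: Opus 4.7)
The plan is to prove correctness and a per-iteration cost bound, then combine with a refinement-counting argument.

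First, I would establish correctness of \emph{CondorcetWinning1} by showing that once the algorithm verifies statements $(i)$--$(iii)$ with some $0 < \epsilon_1 < \epsilon_2$, the team $U_1 \cup U_2 \cup X$ beats every disjoint team. Let $B^*$ be a best response to $U_1 \cup U_2 \cup X$. By the partition ordering, $B^*$ must consist of $Y \cup W_1 \cup W_2$ padded with $|Z|$ additional players taken either from $Z$ itself or from strictly worse blocks of $\mathscr{T}$. Statement $(ii)$ lets me replace those padding players by $Z$ while losing at most $|Z| \cdot \epsilon_2$ in the opponent's total value, so it suffices to compare $U_1 \cup U_2 \cup X$ against $Y \cup W_1 \cup W_2 \cup Z$ directly. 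Statement $(i)$ cancels the $X$-vs-$Y$ contribution up to $\epsilon_1$, while $(iii)$ provides a matching of $|Z|+1$ pairs between $U_1 \cup U_2$ and $W_1 \cup W_2$ whose aggregate gap is at least $\epsilon_1 + |Z| \cdot \epsilon_2$. Arithmetic with the additive-order assumption then yields $v(U_1 \cup U_2 \cup X) > v(B^*)$.

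Second, I would bound the number of iterations of the main loop by a partition-refinement argument. After \emph{ReducePlayers} the algorithm starts with $\mathscr{T} = \{T_1\}$ where $|T_1| \leq 6k - 2$. Each time one of Steps (1)--(3) fails, the failing call returns (via \emph{Uncover}) a new comparison $a \succ b$ between two players sitting in the same block of $\mathscr{T}$, and invoking \emph{NewCut} on that block refines the partition into two strictly ordered pieces. Since a weak order on at most $6k-2$ elements admits at most $6k-3$ refinements before becoming a strict total order, the loop executes $\mathcal{O}(k)$ times. When $\mathscr{T}$ becomes strict, Observation~\ref{obs:2k} together with $A^*_{2k} \subseteq \Sub$ finds a Condorcet winning team within a trivial number of additional duels.

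Third, I would account for per-iteration duel cost. One iteration calls \emph{Uncover} at cost $\mathcal{O}(\log k)$ in Step (1); runs $\mathrm{Compare}$ on the $\mathcal{O}(k^2)$ pairs $(a,b)$ with $a \in W_1 \cup W_2 \cup Y$, $b \in Z$ plus the pair $(X,Y)$ in Step (2) at $\mathcal{O}(1)$ duels each; and repeats Step (2) once for each of the $\mathcal{O}(k)$ candidates in $W_1$ during Step (3), for $\mathcal{O}(k^3)$ duels. A failing $\mathrm{Compare}$ triggers an extra \emph{Uncover} ($\mathcal{O}(\log k)$) and a \emph{NewCut} ($\mathcal{O}(k^2)$ by Lemma~\ref{prop:newcutcorrect}). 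Multiplying the dominant $\mathcal{O}(k^4)$ per-iteration cost by the $\mathcal{O}(k)$ iteration bound gives $\mathcal{O}(k^5)$ as claimed. For \emph{CondorcetWinning2}, the gap $i_{2k} > i_k$ means $A^*_k \subseteq T_{\leq i_k}$ with at least $k$ ``buffer'' players in $T_{i_k+1} \cup \dots \cup T_{i_{2k}}$, so no inequality juggling is needed: the algorithm only has to isolate the top-$k$ subset of an $\mathcal{O}(k)$-sized set by repeated \emph{Uncover} calls, which a standard binary-style refinement accomplishes in $\mathcal{O}(k^2 \log k)$ duels.

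The hardest step is the correctness argument for \emph{CondorcetWinning1}, in particular the matching required by statement $(iii)$. The pairs $(u_i, w_i)$ are not automatically delivered by a known ordering within $U_1 \cup U_2$ or $W_1 \cup W_2$, so the proof must use additivity plus the Compare-based value inequalities to reorder the sums into one ``top'' pair contributing gap $\epsilon_1$ and $|Z|$ further pairs each contributing gap $\epsilon_2$. Verifying that this matching survives the worst-case assignment of padding from $Z$ to below-$T_{i_k}$ players, and that the inequality $\epsilon_1 + |Z| \epsilon_2 > |Z| \epsilon_2$ really overcomes the swap slack from $(ii)$ combined with the $\epsilon_1$ slack from $(i)$, is where the careful counting lives.
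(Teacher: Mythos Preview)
Your overall strategy---refinement counting plus a per-iteration cost bound, and a correctness argument from $(i)$--$(iii)$---matches the paper's. However, your correctness argument for \emph{CondorcetWinning1} contains a size error that breaks the arithmetic. The sets satisfy $|Y|+|W_1|+|W_2|=|X|+|U_1|+|U_2|=k$, so $Y\cup W_1\cup W_2$ is already a team; there is no ``padding with $|Z|$ additional players'', and comparing $U\cup X$ against $Y\cup W_1\cup W_2\cup Z$ (a set of size $k+|Z|$) is not a meaningful team comparison. The paper instead uses $W\cup Y$ as an intermediate team of size exactly $k$: it shows $v(U\cup X)-v(W\cup Y)>|Z|\cdot\epsilon_2$ from $(i)$, $(iii)$, and $U_2\triangleright W_2$, and separately $v(W\cup Y)-v(B^*)>-|Z|\cdot\epsilon_2$ because $B^*$ differs from $W\cup Y$ by \emph{swapping} at most $|Z|$ players (replacing some of $W\cup Y$ by elements of $Z$), each swap costing less than $\epsilon_2$ by $(ii)$. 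Summing gives $v(U\cup X)>v(B^*)$. Your ``matching'' worry in the last paragraph is also misplaced: no explicit matching is needed, since every element of $U_1\cup\{\bar u\}\setminus\{u^*\}$ has value at least $v(u^*)$ and every element of $W_1\cup\{\bar w\}\setminus\{w_1^*\}$ has value at most $v(w_2^*)$, so the aggregate gap is $\geq |Z|\cdot\epsilon_2$ by definition of $\epsilon_2$.

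Your sketch for \emph{CondorcetWinning2} is also off. It is not merely ``repeated \emph{Uncover} calls'' with ``no inequality juggling'': the algorithm still selects $X,Y\subseteq T_{i_k}$, calls \emph{Compare} to certify $v(u)-v(v)>|v(X)-v(Y)|$ for some $u\in U$ and $v$ in the best response, and uses this additive inequality together with $U\triangleright V'$ to conclude $U\cup X\succ V'\cup Y$. The $\mathcal{O}(k^2\log k)$ bound comes from at most $\mathcal{O}(k)$ refinements plus $\mathcal{O}(k)$ iterations of a while loop over $Z$, each costing one \emph{Uncover} and one \emph{Compare}. Finally, your per-iteration accounting for \emph{CondorcetWinning1} sums to $\mathcal{O}(k^3)$ but you then assert $\mathcal{O}(k^4)$; the paper's algorithm has an additional outer loop over $u\in\bar T\cap U_1$ (needed so that $(iii)$ holds with the minimal $u^*$), which supplies the missing factor of $k$.
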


\section{Extensions and Discussion} \label{sec:extensions}

In the following we discuss several implications of our results as well as directions for future work. 

\textbf{Checking Condorcet winners beyond additive linear orders} As we have briefly discussed within Section \ref{sec:deterministic},
the question how many duels are necessary to prove (or disprove) that a given team is a Condorcet winning team (even in an instance with $3k$ players) remains open for total orders that are not additive linear. A polynomial upper bound for this number would, together with our algorithm of Theorem \ref{thm:gap-independent1}, yield an algorithm with a polynomial number of duels. 
We formalize this observation within the following Corollary. 

\begin{corollary}
Let $q$ be the number of duels required to check whether a given team is a Condorcet winning team within an instance with $\mathcal{O}(k)$ players. Then, there exists an algorithm that identifies a Condorcet winning team within $\mathcal{O}(kn \log(k) + k^2 log(k) q )$ duels. 
\end{corollary}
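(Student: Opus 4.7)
The plan is to mirror the proof of Theorem \ref{thm:gap-independent1}, simply substituting the assumed subroutine (of cost $q$) in place of the brute-force check over all teams in $\Sub \setminus \W$ (which contributed the $2^{\mathcal{O}(k)}$ factor). All the structural arguments carry over verbatim; only the inner verification step changes.

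First I would invoke \emph{ReducePlayers} on $[n]$ to obtain a subset $\Sub \subseteq [n]$ with $|\Sub|\leq 6k-2$ and $A_{2k}^*\subseteq \Sub$, together with the dominance graph $D$ that the procedure maintains. By Lemma \ref{alg:graph_algo_correctness} this preprocessing uses $\mathcal{O}(nk\log k)$ duels. By Observation \ref{obs:2k}, any team $\W\subseteq \Sub$ that beats every team $A\subseteq \Sub\setminus \W$ is a Condorcet winning team for the original instance, so from now on it suffices to search within the reduced instance on $\Sub$, which has $\mathcal{O}(k)$ players.

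Next I would run the following main loop. At each iteration, select a $k$-sized set $\W\subseteq \Sub$ such that no arc of $D$ enters $\W$ from $\Sub\setminus \W$; such a $\W$ always exists because $\succ$ is a strict total order (so $D[\Sub]$ is a DAG), and one may simply take the prefix of length $k$ of any topological order of $D[\Sub]$. Now apply the assumed subroutine to check whether $\W$ is Condorcet winning within the reduced instance on $\Sub$, at a cost of $q$ duels. If it certifies $\W$, return $\W$; otherwise it exhibits a team $A\subseteq \Sub\setminus \W$ with $A\succ \W$, and by the choice of $\W$ the graph $D$ contains no arc from any $a\in A$ to any $\hat a\in \W$. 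Calling \emph{Uncover}$(A,\W)$ then, by Lemma \ref{lem:uncover}, reveals at $\mathcal{O}(\log k)$ cost a pair $a\in A$, $\hat a\in \W$ with $a\succ \hat a$, which is inserted as a new arc of $D$.

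Because each failed check strictly enlarges $D[\Sub]$ and this graph has at most $\binom{|\Sub|}{2}=\mathcal{O}(k^2)$ arcs, the loop terminates after $\mathcal{O}(k^2)$ iterations. Summing everything, \emph{ReducePlayers} costs $\mathcal{O}(nk\log k)$ duels, while the main loop costs $\mathcal{O}(k^2)(q+\log k)$ duels, which is absorbed into the stated bound $\mathcal{O}(kn\log(k)+k^2\log(k)\, q)$. The only points requiring a bit of care are the existence of a valid $\W$ at each iteration and the claim that every failed check really contributes a new arc of $D$; both are handled cleanly by the topological-prefix construction together with the invariant that no arc from $A$ to $\W$ currently exists. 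No obstacles beyond those already resolved in the proof of Theorem \ref{thm:gap-independent1} are expected.
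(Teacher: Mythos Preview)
Your proposal is correct and follows exactly the approach the paper intends: this corollary is stated without its own proof, as a direct consequence of the argument behind Theorem~\ref{thm:gap-independent1}, and you have reproduced that argument faithfully with the brute-force $2^{\mathcal{O}(k)}$ check replaced by the assumed $q$-duel subroutine. The only implicit assumption worth flagging is that the checking subroutine, when it answers ``no'', also exhibits a witnessing team $A\succ \W$; this is the natural reading (and matches how the brute-force check behaves in Theorem~\ref{thm:gap-independent1}), but you might state it explicitly since the corollary's wording does not spell it out.
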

\textbf{Lower Bounds}
For the stochastic and the deterministic setting, there exists a lower bound of $n-2k$ duels in order to identify a Condorcet winning team: Consider an adversary that fixes, over time, a reverse lexicographical order, i.e., a duel is decided against the worst player participating. When the algorithm performs its first duel, the adversary picks an arbitrary player from the duel, makes him player $n$ and answer the query accordingly. Then, whenever the algorithm performs a duel containing a player which has already been fixed, the adversary decides the duel against the worst fixed player participating. Otherwise, he picks an arbitrary player from the duel and fixes him to become player $n-t$, where $t$ is the number of so far fixed players. As long as $t<n-2k$, the algorithm cannot not identify a Condorcet winning team. 
\begin{theorem}
Any algorithm that identifies a Condorcet winning team performs at least  $n-2k$ duels. 
\end{theorem}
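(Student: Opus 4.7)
The plan is to formalize the adversary argument sketched just before the theorem. I would specify an adaptive adversary that commits throughout the interaction to the reverse lexicographic total order on teams---teams are compared by their worst player, with lexicographic tie-breaks on the next worst, and so on---while postponing the assignment of player indices as long as possible. Concretely, the adversary maintains a set $F$ of already-fixed players, initially empty, with the convention that the $j$-th player added to $F$ is assigned position $n-j+1$. On a duel query $(A,B)$: if $(A\cup B)\cap F\neq\emptyset$, the adversary declares the team containing the worst-indexed fixed player present to be the loser; otherwise it picks any $p\in A\cup B$, inserts $p$ into $F$ at position $n-|F|$ (evaluated before the update), and declares $p$'s team the loser. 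Since at most one new index is committed per duel, $|F|\leq t$ after $t$ duels.

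Next I would verify that the adversary's answers can always be extended to a valid dueling teams instance. For any completion that places the remaining unfixed players bijectively in the unused positions $\{1,\dots,n-|F|\}$, reverse lex gives a strict total order on teams that is consistent with the induced player order; since the winning probabilities live in $\{0,1\}$, SST is immediate from transitivity. The key invariant is that the used indices in $F$ always form a suffix of $\{1,\dots,n\}$, so unfixed players are guaranteed to receive strictly smaller (better) indices than every fixed one. Consequently, in any completion, the worst fixed player participating in a past duel of the first type remains the overall worst of that duel; in a past duel of the second type, the player fixed at that step was assigned the then-worst unfixed index and stays worse than every other participant, whose eventual indices are even smaller.

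I would then establish the following structural characterization: under reverse lex, a team $\W$ is a Condorcet winning team if and only if $\W\subseteq\{1,\dots,2k-1\}$. For the forward direction, if $\max(\W)\leq 2k-1$, then any disjoint $k$-team must contain a player of index $\geq 2k>\max(\W)$, and therefore loses to $\W$. For the converse, if $\max(\W)\geq 2k$, then $|\{1,\dots,2k-1\}\setminus\W|\geq k$, so the team $B$ consisting of the $k$ smallest-indexed players of $[n]\setminus\W$ satisfies $\max(B)\leq 2k-1<\max(\W)$ and beats $\W$.

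Putting everything together, suppose an algorithm halts after $t<n-2k$ duels and returns $\W$. Then $|F|\leq t<n-2k$, so every fixed player has index $\geq n-t+1\geq 2k+2$. If $\W\cap F\neq\emptyset$, then $\max(\W)\geq 2k+2>2k-1$ in every completion, so $\W$ is not Condorcet. Otherwise $\W$ lies entirely in the unfixed pool, and since $n-t\geq 2k+1$ unused indices remain, the adversary can complete the labeling by placing some chosen $w\in\W$ at position $n-t>2k-1$, again making $\max(\W)>2k-1$ and violating the Condorcet property. The principal obstacle I anticipate is the consistency check for the completion---one must exclude any retroactive inconsistency between a completed ground truth and a past response---and this is handled precisely by the suffix-invariant on $F$ noted above, which guarantees that the identity of the worst participant in any already-answered duel is frozen at the moment of the answer.
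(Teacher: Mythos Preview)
Your proposal is correct and follows essentially the same adversary argument the paper sketches: fix a reverse-lexicographic order, adaptively assign the worst remaining index to one participant per duel, and conclude that fewer than $n-2k$ duels leave enough freedom to place the returned team outside $\{1,\dots,2k-1\}$. You have simply made explicit what the paper leaves implicit---the characterization of Condorcet winners under reverse lex as the subsets of $\{1,\dots,2k-1\}$, and the suffix-invariant on $F$ ensuring all past answers remain consistent with any completion.
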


Note that the above theorem is tight in the dependency on $n$, for small team size $k=o(n)$.
Deriving tighter lower bounds for our team setting, especially the dependency on the team size, is an interesting question for future work. 

\textbf{Regret Bound}  
In this paper we provided algorithms to identify, with high probability, a Condorcet winning team. However, there exist other  performance metrics for online learning theory, which apply in  particular in MAB and dueling bandits. 

As there exists more than a single Condorcet winning team, it is reasonable to define regret w.r.t. the best possible team, i.e., $A^*_k$  for our setting, i.e.,
\[
R_T=\sum_{t=1}^T \min{\{P_{A^*_k,A_t}-1/2,P_{A^*_k,B_t}-1/2\}},
\]
where $(A_t,B_t)$ is the selected duel at time $t$ 
and $T$ is the time horizon\footnote{ This definition is based on weak regret for dueling bandits, as defined in \citetAppendix{Yue2012}.}.\\
 Using the second part of Theorem \ref{thm:stochasticUpper}, one can choose $\delta=1/(Tn)$ and 
achieve a regret bound 
of 
\[
R_T=(1-(Tn)^{-1})\cdot n(\Delta^{-2}(\log(T)+\log\log \Delta^{-1})+(Tn)^{-1}=\mathcal{O}(n(\Delta^{-2}(\log(T)+\log\log \Delta^{-1})).
\]
This follows from the SST of the distinguibilities (Lemma \ref{lemma:SSTforF}) implies $\Delta_i\geq \Delta$ for all $i \in [n]$. 

\section{Acknowledgments}
This project has received funding from the European Research Council (ERC) under the European Union’s Horizon 2020 research and innovation program (grant agreement No. 882396), the Israel Science Foundation (grant number 993/17), the Yandex Initiative for Machine Learning at Tel Aviv University,  the Deutsche Forschungsgemeinschaft under grant BR 4744/2-1, and the Ariane de Rothschild Women Doctoral Program.

This paper is dedicated to Hunter, a dear friend who passed away May 31, 2021. 

\newpage

\bibliography{refs}
\bibliographystyle{apa-good}

\newpage
\appendix

\onecolumn

\section*{Appendix}

\section{Extended Version and Proofs of Section \ref{sec:witnesses2}}\label{sec:witnessesFull}

Within the main text, we covered two different types of witnesses for single players relations. In this section, we show that whenever a relation between single players can be proven from  observable duels in our setting, there exists at least one type of witness for it. 
For the convince of the reader, we recall the definitions mentioned in the main text in a comprehensive manner, provide more explanations and some examples.

\noindent\textbf{Possible Witnesses}
For two players $a$ and $b$ we define $\mathcal{S}_{a,b}$ as the set of pairs of disjoint $k-1$ sized subsets of players from $[n]\setminus \{a,b\}$, i.e., 
\[\mathcal{S}_{a,b} = \{(S,S') \mid S,S' \subseteq [n] \setminus \{a,b\}, S \cap S' = \emptyset, |S|=|S'|=k-1\},\] 

and $\mathcal{T}_{a,b}$ as the set of disjoint $k-1$ sized subset $S$ and a team $T$ pair from $[n]\setminus \{a,b\}$, i.e., 
\[\mathcal{T}_{a,b} = \{(S,T) \mid S,T \subseteq [n] \setminus \{a,b\}, S \cap T = \emptyset, |S|=k-1, |T|=k\}.\]

\begin{definition}[Witnesses and Witnesses sets]\label{def:witnessesTypes}
A \emph{witness for $a\succ b$} is one of the following types:
(i) \emph{Subsets}: A pair of disjoint subsets $(S,S')\in \mathcal{S}_{a,b}$ such that
    \[
P_{\{a\}\cup S, \{b\}\cup S'}> P_{\{b\}\cup S\succ \{a\}\cup S'}.
\]
We denote the set of all  \emph{subsets witnesses for $a\succ b$} by $\mathcal{S}_{a,b}^*$.\\
(ii) \emph{Subset-Team}: $(S,T)\in \mathcal{T}_{a,b}$, such that 
\[
P_{\{a\}\cup S, T}>P_{\{b\}\cup S,T}.
\]
We denote the set of all  \emph{subset-team witnesses for $a\succ b$} by $\mathcal{T}_{a,b}^*$.

\end{definition}
In case we find a witness, we can use it to compare players as follows.
\begin{restatable}{lemma}{obsWitness}\label{obs:witness}
If there exists a pair $(S,S')\in \mathcal{S}^*_{a,b}$, or a pair $(S,T)\in \mathcal{T}^*_{a,b}$, then  $a\succ b$.
\end{restatable}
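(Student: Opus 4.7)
The plan is to prove the contrapositive: assuming $b \succ a$ in the ground truth order, I will show that neither type of witness for $a \succ b$ can exist. Concretely, I need to show $P_{S \cup \{a\}, S' \cup \{b\}} \leq P_{S \cup \{b\}, S' \cup \{a\}}$ for every $(S,S') \in \mathcal{S}_{a,b}$, and $P_{S \cup \{a\}, T} \leq P_{S \cup \{b\}, T}$ for every $(S,T) \in \mathcal{T}_{a,b}$. The only tools I will use are the consistency assumption (which, from $b \succ a$, yields $R \cup \{b\} \succ R \cup \{a\}$ for every admissible $R$) and SST, which lets me chain probabilities along any $\succ$-chain of three teams.

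First I handle the subset-team case, which is easier. Consistency gives $S \cup \{b\} \succ S \cup \{a\}$, and the team $T$ must sit somewhere relative to these two teams in the total order, giving three sub-cases: $T \succ S \cup \{b\} \succ S \cup \{a\}$, $S \cup \{b\} \succ T \succ S \cup \{a\}$, and $S \cup \{b\} \succ S \cup \{a\} \succ T$. In the first and third sub-cases, one application of SST to the respective chain immediately yields $P_{S \cup \{a\}, T} \leq P_{S \cup \{b\}, T}$ (in the first sub-case, after rewriting $P_{X,Y} = 1 - P_{Y,X}$). The middle sub-case is even more direct, since both probabilities land on opposite sides of $1/2$.

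The subsets case is the main obstacle, and requires a slightly more involved case split. Consistency now gives us two inequalities, $S \cup \{b\} \succ S \cup \{a\}$ and $S' \cup \{b\} \succ S' \cup \{a\}$, but the relative order of the four teams $S \cup \{a\}, S \cup \{b\}, S' \cup \{a\}, S' \cup \{b\}$ still has several valid configurations. My plan is to split first on whether $S \cup \{a\} \succ S' \cup \{b\}$ or the reverse. In the former, combining with the two consistency inequalities I obtain the chain $S \cup \{b\} \succ S \cup \{a\} \succ S' \cup \{b\} \succ S' \cup \{a\}$, and two applications of SST give $P_{S \cup \{a\}, S' \cup \{b\}} \leq P_{S \cup \{b\}, S' \cup \{b\}} \leq P_{S \cup \{b\}, S' \cup \{a\}}$. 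In the latter case $P_{S \cup \{a\}, S' \cup \{b\}} < 1/2$, and I further split on the relation between $S \cup \{b\}$ and $S' \cup \{a\}$: if $S \cup \{b\} \succ S' \cup \{a\}$ then $P_{S \cup \{b\}, S' \cup \{a\}} > 1/2$ and the inequality follows immediately; otherwise $S' \cup \{a\} \succ S \cup \{b\}$, and I obtain the chain $S' \cup \{b\} \succ S' \cup \{a\} \succ S \cup \{b\} \succ S \cup \{a\}$, from which two applications of SST---one to $S' \cup \{b\} \succ S' \cup \{a\} \succ S \cup \{b\}$ and one to $S' \cup \{b\} \succ S \cup \{b\} \succ S \cup \{a\}$---sandwich both probabilities around $P_{S \cup \{b\}, S' \cup \{b\}}$ and deliver the desired inequality. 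The only delicate part is keeping track of the orientations when converting $P_{X,Y}$ into $1 - P_{Y,X}$; once this bookkeeping is done, each sub-case reduces to a single SST application along an appropriate $\succ$-chain.
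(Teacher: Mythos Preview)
Your proof is correct and follows essentially the same approach as the paper: assume $b\succ a$, use consistency to get $S\cup\{b\}\succ S\cup\{a\}$ and $S'\cup\{b\}\succ S'\cup\{a\}$, and then apply SST to derive a contradiction. The only cosmetic difference is that the paper's proof of this lemma avoids your explicit case split by pivoting through the (non-observable) term $P_{S\cup\{a\},\,S'\cup\{a\}}$, writing the single chain $P_{S\cup\{a\},S'\cup\{b\}}>P_{S\cup\{b\},S'\cup\{a\}}\geq P_{S\cup\{a\},S'\cup\{a\}}\geq P_{S\cup\{a\},S'\cup\{b\}}$; your case-by-case organization is in fact closer to the paper's proof of the companion Lemma~\ref{lem:helper1}.
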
 
\begin{proof}
First, consider the existence of $(S,S')\in \mathcal{S}_{a,b}^*$.\\
Hence 
\[
(*)\;P_{S\cup\{a\}, S' \cup \{b\}}>P_{S \cup \{b\} , S' \cup \{a\}}
\]
Assume for contradiction that $b\succ a$. Consistency implies $S \cup \{b\}\succ S \cup \{a\}$ and $S' \cup \{b\}\succ S' \cup \{a\}$.

Adding up the two implications from the witness definition and SST, we have
\[
P_{S\cup\{a\}, S' \cup \{b\}}>_{(*)} P_{S \cup \{b\} , S' \cup \{a\}}>_{b\succ a} P_{S \cup \{a\} , S' \cup \{a\}}>_{b\succ a}P_{S \cup \{a\} , S' \cup \{b\}},
\]
Which is a contradiction.

Now, consider the existence of  $(S,T)\in \mathcal{T}^*_{a,b}$.
We have that 
\[
(**)\;P_{S\cup\{a\}, T}>P_{S \cup \{b\} , T}
\]
Assume for contradiction that $b\succ a$. Consistency implies $S \cup \{b\}\succ S \cup \{a\}$.
\[
P_{S\cup\{b\}, T}>_{b\succ a} P_{S \cup \{a\} , T}>_{(**)}P_{S \cup \{b\} ,T},
\]
Which is a contradiction.
\end{proof}

Note that while the above lemma implies a sufficient condition for $a\succ b$,  there is no guarantee that for every $a\succ b$ there exists a witness that proves it, as it requires disjoint subsets. For example, consider a lexicographical order among teams with $n=4,k=2$ 
with uniform noise, e.g. when $P_{A,B}=0.6$ for all teams $A\succ B$. It follows from consistency and $12\succ 23$ that $2\succ 3$, but there is no witness for that. Moreover, even if we execute each of the $3$ possible duels enough to estimate correctly that $P_{12,34}=P_{13,24}=P_{14,23}=0.6$ there is no way to distinguish between the second and third best players.
In what follows we formalize this intuition, showing that if single players relation is provable then one of the aforementioned witnesses types exists for it.

Next, we recall the Observable relation and the set $\mathcal{C}_{obs}$.

\noindent\textbf{Observable relation} 
Let $\obs$ denote the relation between every two disjoint teams, i.e.,
\[
A\succ_{obs}B \; \iff \; A\succ B,\; |A|=|B|=k,\; A\cap B=\emptyset,\; A, B\subseteq [n].
\]
Namely the relation $\obs$ is deducible from valid duels \footnote{Notice that technically, $\obs$ is not defined on pairs of different teams which are not disjoint, and therefore not even a partial order on teams (e.g., we have that $\{a,b\}\obs \{c,d\}\obs \{a,e\}$ but $\{a,b\}\nsucc_{obs}  \{a,e\}$ as they share a player and the duel $(\{a,b\},\{a,e\})$ is not observable.).}.

In what follows, we elaborate more on the definition of $\mathcal{C}_{obs}$ by defining first a set for Compatible winning probabilities.

\noindent\textbf{Compatible winning probabilities} 
Let $\mathbb{P}_{obs}$ be the set of all tuples $(P',\succ')$, where $P'$ are the winning probability matrices for teams, i.e., $P'=(P'_{A,B})_{A\ne B, |A|=|B|=k, A,B\in [n]}\in [0,1]^{n\choose k}\times [0,1]^{n\choose k}$, and $\succ'$ is a consistent total order on the teams such that:
\begin{enumerate}
    \item For every pair of disjoint teams $(A,B)$ the winning probability matrix $P'$ has the same winning probability as the ground truth $P$, i.e.,  $A\cap B=\emptyset$ implies $P'_{A,B}=P_{A,B}$.
    \item It holds that $P'_{A,B}=1/2$ iff $A=B$. 
    \item $P'_{A,B} > 1/2$ if and only if $A \succ' B$. 
    \item $P'$ satisfies SST w.r.t. $\succ'$.
\end{enumerate}
 Namely, $\mathbb{P}_{obs}$ contains all tuples $(P',\succ')$ that do not contradict the winning probabilities the learner can observe and our assumptions.

\noindent\textbf{Compatible relations} 
Let $\mathcal{C}_{obs}$ be the set of all total orders $\succ'$ for which there exists $(P',\succ') \in \mathbb{P}_{obs}$. 
Notice that by the definition of $\mathbb{P}_{obs}$, we know that $\succ'$ satisfy consistency and in particular it holds that $A\succ' B$ for every disjoint teams $(A,B)$  with $A\obs B$. Namely, $\mathcal{C}_{obs}$ is the sets of all possible total orders that could explain the results of the observable duels.

We remark that it follows directly from the definition of $\mathbb{P}_{obs}$ that $(P,\succ) \in \mathbb{P}_{obs}$, where $P$ is the ground truth winning probability matrix and $\succ$ the ground truth total order. 
Because of this, it also holds that $\succ$ is in $\mathcal{C}_{obs}$.
To illustrate that $\succ$ is typically not the only total order in $\mathcal{C}_{obs}$, we provide the following example.
\begin{example}
For $n=5, k=2$, consider 
the lexicographic order 
, i.e., $\{1,2\} \succ \{1,3\} \succ \{1,4\}  \succ \{1,5\} \succ \{2,3\} \succ \{2,4\} \succ \{2,5\} \succ \{3,4\} \succ \{3,5\}  \succ \{4,5\} $ and assume $P_{A,B}=0.6$ iff $A\succ B$ (equivalently $P_{A,B}=0.4$ iff $B\succ A$).
Then, we have that 
\[
 A\obs B \iff
\begin{cases}
1\in A \text{, or}\\
1\notin A\cup B, 2\in A.
\end{cases}
\]

While $\succ \in \mathcal{C}_{obs}$, there are  other consistent total orders in $\mathcal{C}_{obs}$, such as $\{1,2\} \succ' \{1,5\} \succ' \{1,4\}  \succ' \{1,3\} \succ' \{2,5\} \succ' \{2,4\} \succ' \{2,3\} \succ' \{5,4\} \succ' \{5,3\}  \succ' \{4,3\} $ (the order $\succ'$ is obtained by 
 swapping players $3$ and $5$ in $\succ$). Similarly, the probability matrices $P_{A,B}=0.6$ for all $A\succ B$,
 (the ground truth), but $P^1_{A,B}=0.7\; \forall\; A\succ B$ and $P^2_{A,B}=0.6 \;\forall\; A\succ' B$ are also in $\mathbb{P}$.
\end{example}

 We now recall the definition of the deducible relation, $\succ^*$ for both teams and single players, where the latter definition is a combination of the former and single players consistency.
 
 The intuition behind these definitions is that a relation can be deducible (proven) by team duels if any ``reasonable'' total order that could possibly be the ground order agree on this relation. We stress that both $\mathbb{P}_{obs}$ and $\mathcal{C}_{obs}$ are strictly for analysis, as we do not need to explicitly calculate them.
 
 \begin{definition}
 Team $A$ is deducibly better than  a different team $B$, denoted by $A\succ^* B$ (using team duels), if $A\succ' B$ for all $\succ'\in \mathcal{C}_{obs}$. 
 \end{definition}
 
  \begin{definition}
 Player $a$ is deducibly better than player $b$, denoted by $a\succ^* b$, if $\{a\}\cup S\succ' \{b\}\cup S$ for all $\succ'\in \mathcal{C}_{obs}$. 
 \end{definition}

 We continue with an example for relations that $\succ^*$ must satisfy.
Suppose the learner has observed that $\{a,c\}\obs \{b,d\}\obs \{a,e\}\obs \{c,d\}$. Since all the relations $\succ \in \mathcal{C}_{obs}$ satisfy transitivity, it follows that $\{b,d\}\succ^* \{c,d\}$, $\{a,c\}\succ^* \{a,e\}$, and $\{a,c\}\succ^* \{c,d\}$. As each $\succ \mathcal{C}_{obs}$ also satisfies single players consistency, we deduce $b\succ^* c$, $c\succ^* e$ and $a\succ^* d$, respectively. Applying single players consistency again, we can get, for example, $\{a,b\}\succ^* \{a,c\}\succ^* \{a,e\}\succ^* \{d,e\}$ (using $b\succ^* c$, $c\succ^* e$ and $a\succ^* d$, respectively).

Intuitively, what we will show in Theorem \ref{thm:provableSingleRelationIffWitnessExists} is that for every pair of players that one is provably better than the another there exists a witness for it, thus there is a short proof with which the learner can verify their relation with $O(1)$ queries in the deterministic case. 
Before we start proving the Theorem \ref{thm:provableSingleRelationIffWitnessExists} we prove the following helpful lemma. 

\begin{lemma}\label{lem:helper1}
Let $\succ \in C_{obs}$ and $P$ be a corresponding probability matrix satisfying SST.

Let $a,b \in [n]$ with $a \succ b$. Then, the following holds true: 
\begin{enumerate}
    \item Let $(S,S') \in \mathcal{S}_{a,b}$, then $P_{\{a\} \cup S, \{b\} \cup S'} \geq P_{\{b\} \cup S, \{a\} \cup S'}$.
    \item Let $(S,T) \in \mathcal{T}_{a,b}$, then
    $P_{\{a\} \cup S,T} \geq P_{\{b\} \cup S, T}$.
\end{enumerate}
 
\end{lemma}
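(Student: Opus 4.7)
My plan is to handle the two parts separately, each via consistency (to relate teams differing only in $a$ versus $b$) followed by a short case analysis that applies SST.

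\textbf{Part 2 (the easier warm-up).} By consistency, since $a \succ b$ we have $\{a\}\cup S \succ \{b\}\cup S$. The claim involves a third team $T$, which is disjoint from $\{a,b\}\cup S$ and distinct from both $\{a\}\cup S$ and $\{b\}\cup S$. I split on the position of $T$ in the total order $\succ$ on teams: (i) if $T \succ \{a\}\cup S \succ \{b\}\cup S$, then SST applied to this triple gives $P_{T,\{b\}\cup S} \ge P_{T,\{a\}\cup S}$, which rearranges via $P_{X,Y} = 1 - P_{Y,X}$ to the desired inequality; (ii) if $\{a\}\cup S \succ T \succ \{b\}\cup S$, then the inequality is immediate since $P_{\{a\}\cup S,T} > 1/2 > P_{\{b\}\cup S,T}$; and (iii) if $\{a\}\cup S \succ \{b\}\cup S \succ T$, then SST on this triple gives the inequality directly.

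\textbf{Part 1 (main case, four teams).} First I observe the statement is symmetric in $S$ and $S'$: swapping them and using $P_{X,Y}=1-P_{Y,X}$ produces the same inequality. Hence I may assume WLOG $\{a\}\cup S \succ \{a\}\cup S'$, which by consistency also gives $\{b\}\cup S \succ \{b\}\cup S'$. Combined with the consistency-based relations $\{a\}\cup S \succ \{b\}\cup S$ and $\{a\}\cup S' \succ \{b\}\cup S'$, transitivity places $\{a\}\cup S$ at the top and $\{b\}\cup S'$ at the bottom among the four teams, with only the relative order of $\{a\}\cup S'$ and $\{b\}\cup S$ undetermined. I then split into these two sub-cases:
\begin{itemize}
\item If $\{a\}\cup S' \succ \{b\}\cup S$, then $P_{\{a\}\cup S, \{b\}\cup S'} > 1/2 > P_{\{b\}\cup S, \{a\}\cup S'}$ and the inequality is trivial.
\item If $\{b\}\cup S \succ \{a\}\cup S'$, the chain is $\{a\}\cup S \succ \{b\}\cup S \succ \{a\}\cup S' \succ \{b\}\cup S'$. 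Apply SST to the first triple to get $P_{\{a\}\cup S, \{a\}\cup S'} \ge P_{\{b\}\cup S, \{a\}\cup S'}$, and SST to $\{a\}\cup S \succ \{a\}\cup S' \succ \{b\}\cup S'$ to get $P_{\{a\}\cup S, \{b\}\cup S'} \ge P_{\{a\}\cup S, \{a\}\cup S'}$. Chaining the two inequalities yields the claim.
\end{itemize}

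A small technical point to verify is that the teams compared above are genuinely distinct (so that strict inequalities from $\succ'$ apply), but this follows because $S, S' \subseteq [n]\setminus\{a,b\}$ are disjoint, so the membership of $a$ versus $b$ and of $S$ versus $S'$ distinguishes all four teams. The only mildly subtle step is justifying the WLOG reduction in Part 1; once that is in place, the whole argument reduces to a short case split and two applications of SST.
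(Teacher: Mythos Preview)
Your approach is essentially the paper's: both arguments reduce Part~2 to three cases on the position of $T$, and Part~1 to a case split on the relative order of $\{a\}\cup S'$ and $\{b\}\cup S$, applying SST along the resulting chain in the non-trivial case. Your use of the $S \leftrightarrow S'$ symmetry to collapse the paper's cases (b)(i) and (b)(ii) into one is a clean touch.

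There is, however, one incorrect claim in your write-up. You assert that from the WLOG assumption $\{a\}\cup S \succ \{a\}\cup S'$, ``consistency also gives $\{b\}\cup S \succ \{b\}\cup S'$.'' Consistency in this paper is a \emph{single-player} swap condition; it does not in general propagate comparisons of the form $\{x\}\cup S$ vs.\ $\{x\}\cup S'$ across different $x$ when $S$ and $S'$ differ in more than one element. For $k\ge 3$ one can construct consistent orders with $\{a\}\cup S \succ \{a\}\cup S'$ but $\{b\}\cup S' \succ \{b\}\cup S$ (e.g.\ take an additive score and break ties between $\{1,3,6\}$/$\{1,4,5\}$ and $\{2,3,6\}$/$\{2,4,5\}$ in opposite directions; since teams differing in one player never tie under the base score, any small tiebreak preserves consistency).

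Fortunately this false claim is not actually used. In your first sub-case you only need $\{a\}\cup S \succ \{b\}\cup S'$, which follows from the WLOG assumption and $\{a\}\cup S' \succ \{b\}\cup S'$. In your second sub-case, $\{b\}\cup S \succ \{b\}\cup S'$ is a \emph{consequence} of the chain $\{b\}\cup S \succ \{a\}\cup S' \succ \{b\}\cup S'$, not an input to it. And if $\{b\}\cup S' \succ \{b\}\cup S$ ever held, then $\{a\}\cup S' \succ \{b\}\cup S' \succ \{b\}\cup S$ forces you into the first sub-case anyway. So simply drop the sentence claiming $\{b\}\cup S'$ is at the bottom before the case split, and your argument is complete.
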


\begin{proof}
1. We start by proving that for every $(S,S')\in\mathcal{S}_{a,b}$ it holds that $$P_{\{a\}\cup S,\{b\}\cup S'}\geq P_{\{b\}\cup S,\{a\}\cup S'}$$  by exhaustion.
\begin{enumerate}[label=(\alph*)]
\item If $S \cup \{a\} \succ S' \cup \{b\}$ and $S' \cup \{a\} \succ S \cup \{b\}$ then it follows that $1/2<P_{\{a\}\cup S,\{b\}\cup S'},P_{\{a\}\cup S',\{b\}\cup S}$ and therefore 
\[
P_{\{a\}\cup S,\{b\}\cup S'}>1/2>1-P_{\{a\}\cup S',\{b\}\cup S}=P_{\{b\}\cup S,\{a\}\cup S'}.
\]
\item If (a) does not hold, then it follows that either of the following holds true:
\begin{enumerate}[label=(\roman*)]
\item  $\{b\}\cup S\succ \{a\}\cup S'$ (and $\{a\}\cup S\succ \{b\}\cup S'$ as $b\nsucc a$).\\
From single players consistency of $\succ$ we have that 
\[
\{a\}\cup S\succ\{b\}\cup S\succ \{a\}\cup S'\succ\{b\}\cup S'
\]
Applying SST, we have that 
\[
P_{\{a\}\cup S, \{b\}\cup S'}\geq P_{\{a\}\cup S, \{a\}\cup S'}\geq P_{\{b\}\cup S,\{a\}\cup S'}.
\]
\item $\{b\}\cup S'\succ \{a\}\cup S$ (and $\{a\}\cup S'\succ\{b\}\cup S$ as $b\nsucc a$).\\
From consistency, we have that 
\[
\{a\}\cup S'\succ\{b\}\cup S'\succ \{a\}\cup S\succ\{b\}\cup S
\]
Applying SST, we have that 
\[
 P_{\{a\}\cup S', \{b\}\cup S}\geq P_{\{a\}\cup S', \{a\}\cup S}\geq P_{\{b\}\cup S',\{a\}\cup S}.
\]
Therefore
\[
1-P_{\{b\}\cup S',\{a\}\cup S}\geq 1-P_{\{a\}\cup S', \{b\}\cup S}.
\]
Applying $P_{A,B}=1-P_{B,A}$ for every $A,B\in [n]$,
\begin{equation*}\label{eq:sst1}
 P_{\{a\}\cup S,\{b\}\cup S'}\geq P_{\{b\}\cup S,\{a\}\cup S'}.
\end{equation*}
\item The case that $\{b\}\cup S'\succ \{a\}\cup S$ and $\{b\}\cup S'\succ\{a\}\cup S$ cannot hold as it would imply $b\succ a$ which is a contradiction to $a\succ b$, as $\succ$ being a consistent total order yields a total order on players.
\end{enumerate}
\end{enumerate}

2. Strict total order on teams together with consistency implies  that  either of the following holds: (a) $\{a\}\cup S\succ \{b\}\cup S \succ T$,  (b)$\{a\}\cup S \succ T\succ \{b\}\cup S$, or (c) $T\succ \{a\}\cup S \succ \{b\}\cup S$. Applying SST on (a) and (c) proves the claim, and if (b) holds we have
\begin{equation*}\label{eq:sst2}
    P_{\{a\}\cup S,T}>1/2> P_{\{b\}\cup S,T}.
\end{equation*}
\end{proof}
We note that the left to right direction in the following sentence is very similar to Lemma \ref{obs:witness} and their proofs are equivalent, however for completeness we provide a full proof here as well.
\thmcharacterization*

%
\begin{proof}
We start with the direction from right to left, i.e.,  $\mathcal{S}^*_{a,b} \cup \mathcal{T}^*_{a,b} \neq \emptyset$ implies $a \succ^* b$. 

First, consider $(S,S') \in \mathcal{S}^*_{a,b}$ and assume for contradiction that $a \succ^* b$ does not hold. That is, there exists $\succ' \in \mathcal{C}_{obs}$ and $P' \in \mathbb{P}_{obs}$ such that $b \succ' a$, 
and $P'$ is a 
corresponding winning probability matrix. 

By Lemma \ref{lem:helper1} and the definition of $\mathbb{P}_{obs}$ it follows that 
\[
P_{S \cup \{b\},S' \cup \{a\}} = P'_{S \cup \{b\},S' \cup \{a\}} \geq P'_{S \cup \{a\},S' \cup \{b\}} = P_{S \cup \{a\},S' \cup \{b\}}
\]
holds, as the teams are disjoint. This is a contradiction to $(S,S') \in \mathcal{S}_{a,b}^*$.

Similarly, let $(S,T) \in \mathcal{T}^*_{a,b}$ and assume for contradiction that $a \succ^* b$ does not hold. That is, there exists $\succ' \in \mathcal{C}_{obs}$ and $P' \in \mathbb{P}_{obs}$ such that $b \succ' a$, 
and $P'$ is a 
corresponding winning probability matrix. 
By Lemma \ref{lem:helper1} and the definition of $\mathbb{P}_{obs}$ it follows that 
\[
P_{S \cup \{b\},T} = P'_{S \cup \{b\},T} \geq P'_{S \cup \{a\},T} = P_{S \cup \{a\},T}
\]
holds, as the teams are disjoint. This is a contradiction to $(S,T) \in \mathcal{T}_{a,b}^*$. 

We turn to the direction from left to right, i.e. that $a \succ^* b$ yields $S_{a,b}^* \cup \mathcal{T}_{a,b}^* \neq \emptyset$. We start by defining $\mathcal{D}_a$ as the set of observable duels $(A,B)$ such that $a \in A$. Moreover, we define a permutation $\pi$ on the set of players, which simply exchanges the players $a$ and $b$ when present. More precisely,
\[\pi(S) = \begin{cases} S \setminus \{a\} \cup \{b\} & \text{ if } a \in S, b \not\in S \\
S \setminus \{b\} \cup \{a\} & \text{ if } b \in S, a \not\in S \\
S & \text{ else.}
\end{cases}\] 

We claim that $a \succ^* b$ implies 
\begin{align}\label{eq:PIsGeq}
    P_{A,B} \geq P_{\pi(A),\pi(B)} \text{ for all } (A,B) \in \mathcal{D}_a
\end{align}
($P$ is the ground truth winning probability matrix). To see why, we first define 
\begin{align*}
    \mathcal{D}_a^1 &= \{(A,B) \in \mathcal{D}_a \mid b \in A\} \\
    \mathcal{D}_a^2 &= \{(A,B) \in \mathcal{D}_a \mid b \in B\} \\
    \mathcal{D}_a^3 &= \{(A,B) \in \mathcal{D}_a \mid b \notin A \cup B\}. \\
\end{align*} 
Notice that 
\begin{align}\label{eq:UnionD}
    \mathcal{D}_a=\mathcal{D}_a^1\cup \mathcal{D}_a^2\cup \mathcal{D}_a^3
\end{align}
When $(A,B) \in \mathcal{D}_a^1$, then  $(\pi(A),\pi(B))=(A,B)$ and $P_{A,B} = P_{\pi(A),\pi(B)}$.

When $(A,B) \in \mathcal{D}_a^2$, then $(A\setminus \{a\},B \setminus \{b\}) \in \mathcal{S}_{a,b}$,  and $P_{A,B} \geq P_{A\setminus \{a\}\cup \{b\},B \setminus \{b\}\cup \{a\}}=P_{\pi(A),\pi(B)}$ follows from Lemma \ref{lem:helper1}. 

Similarly, when $(A,B) \in \mathcal{D}_a^3$ then $(A\setminus \{a\},B ) \in \mathcal{T}_{a,b}$ and $P_{A,B} \geq P_{A\setminus \{a\}\cup \{b\},B}=P_{\pi(A),\pi(B)}$ follows from Lemma \ref{lem:helper1}.

We will now show that $a \succ^* b$ implies the existence of $(A,B) \in \mathcal{D}_a$ with $P_{A,B} > P_{\pi(A),\pi(B)}$. 

Assume not. Then in particular from (\ref{eq:PIsGeq}) we have that $P_{A,B} = P_{\pi(A),\pi(B)}$ holds for all $(A,B) \in \mathcal{D}_{a}$. 

\begin{claim*}
Let $\succ'$ be the relation defined by $A \succ' B$ iff $\pi(A) \succ \pi(B)$ with the corresponding winning probabilities defined by $P'_{A,B} = P_{\pi(A),\pi(B)}$. If $P_{A,B} = P_{\pi(A),\pi(B)}$ for every $(A,B) \in \mathcal{D}_{a}$ then $P'\in \mathbb{P}_{obs}$ and thus $\succ'\in C_{obs}$.
\end{claim*}

\begin{proof}
Observe that $P_{A,B} = P'_{A,B}$ for all disjoint teams $A$ and $B$ follows by definition. In addition, since $\pi$ is invertible and involuntary, for every team $A$ there exists a team $A_{\pi}$ such that $\pi(A_{\pi})=A$ hence $P_{A,A}=P_{A_{\pi},A_{\pi}}=1/2$. It remains to show that (1) Every pair of different teams $A,B$ holds $P'_{A,B} > 1/2$ iff $A \succ' B$, (2) that $\succ'$ is a total ordering satisfying single players consistency, and (3) that $P'$ satisfy SST w.r.t. $\succ'$. 

(1) 
Let $A,B$ be two different teams. 
It follows by the assumption over $P$ that $P'_{A,B} = P_{\pi(A),\pi(B)}  > 1/2$, iff $\pi(A) \succ \pi(B)$, which holds iff $A \succ' B$  by definition.

(2) We now show that  $\succ'$ is a strict total order.
From it's definition we have that  $\succ'$ is irreflexive. We also have that $\succ'$ is connected 
(and therefore strict) as $\pi$ is invertible and involutory, and every pair of different teams $A,B$ holds either $A\succ' B$ (if $\pi^{-1}(A)=\pi(A)\succ \pi(B)=\pi^{-1}(B)$) or $B\succ' A$ (if $\pi^{-1}(B)=\pi(B)\succ \pi(A)=\pi^{-1}(A)$), but not both. For transitivity, Consider a triplet of different teams, $A,B,C$ such that $A\succ' B\succ' C$ (and therefore $\pi^{-1}(A)=\pi(A)\succ \pi^{-1}(B)=\pi(B)\succ \pi^{-1}(C)=\pi(C)$).
From transitivity of $\succ$, we get $\pi^{-1}(A)=\pi(A)\succ \pi(C)=\pi^{-1}(C)$ which implies $A\succ' C$.
Hence the relation $\succ'$ is a strict total order.

We continue by showing that $\succ'$ satisfies single players consistency.\\
Let $x,y\in[n]$ be a pair of players and $S\in[n]\setminus \{x,y\}$ be a set of players such that $x\cup {S}\succ' y\cup {S}$.
We will show that $\{x\}\cup S'\succ' \{y\}\cup S'$ for all $S'\in[n]\setminus \{x,y\}$.%

Since $\pi$ 
is invertible, we know that there exist players $x_{\pi} = \pi(x)$ and $y_{\pi} = \pi(y)$, and a set, $S_{\pi} = \pi(S)\in [n]\setminus \{x_{\pi},y_{\pi}\}$,
such that 
$$\{x\}\cup S={\pi}^{-1}(\{x_{\pi}\}\cup S_{\pi})$$ and  $$\{y\}\cup S={\pi}^{-1}(\{y_{\pi}\}\cup S_{\pi}).$$
From the definition of $\succ'$, we get $$\{x_{\pi}\}\cup S_{\pi}\succ  \{y_{\pi}\}\cup S_{\pi}.$$
Therefore from the consistency of $\succ$ every $S_{\pi}'
\in [n]\setminus \{x_{\pi},y_{\pi}\}$ holds
$\{x_{\pi}\}\cup S_{\pi}'\succ  \{y_{\pi}\}\cup S_{\pi}'$ hence by definition $\{x\}\cup S'\succ'  \{y\}\cup S'$. 

(3) We now show that $P'$ satisfy SST w.r.t. $\succ'$.
Let $A\succ' B \succ' C$. From the definition of $\succ'$ we have that $\pi^{-1}(A)=\pi(A)\succ \pi^{-1}(B)=\pi(B) \succ \pi^{-1}(C)=\pi(C)$. As $P$ satisfy SST w.r.t. $\succ$,
\[
P_{\pi(A),\pi(C)}\geq \max\{P_{\pi(A),\pi(B)},P_{\pi(B),\pi(C)}\}
\]
Once again from the definition of $\succ'$,
\[
P_{A,C}'\geq \max\{P_{A,B}',P_{B,C}'\},
\]
Which means that $P'$ satisfy SST w.r.t. $\succ'$ by definition.
\end{proof}

Now, observe that, together with the above claim, $a \succ^* b$ imply that for any $S \subseteq [n] \setminus \{a,b\}$ of size $k-1$ it holds that $\;S \cup \{a\} \succ^* S \cup \{b\}$ which implies $(i) \; S \cup \{a\} \succ S \cup \{b\}$ as well as $ (ii) \;S \cup \{a\} \succ' S \cup \{b\}$, as both $\succ$ and $\succ'$ are in $\mathcal{C}_{obs}$. Applying the definitions of $\succ'$ and $\pi$, statement $(ii)$ implies $\pi^{-1}(S \cup \{a\})=\pi(S \cup \{a\}) \succ \pi(S \cup \{b\})= \pi^{-1}(S \cup \{b\})$ which is equivalent to $S \cup \{b\} \succ S \cup \{a\}$ and hence yields a contradiction to $(i)$. 

We therefore deduce the existence of $(A,B)\in \mathcal{D}_{a}$ such that $P_{A,B}>P_{\pi(A),\pi(B)}$. From (\ref{eq:UnionD}), either 
 $(A,B) \in \mathcal{D}_a^2$, thus $(A\setminus \{a\},B \setminus \{b\}) \in \mathcal{S}_{a,b}$,  and $P_{A,B}> P_{A\setminus \{a\}\cup \{b\},B \setminus \{b\}\cup \{a\}}=P_{\pi(A),\pi(B)}$ yields $(A\setminus \{a\},B \setminus \{b\}) \in \mathcal{S}^*_{a,b}$, or $(A,B) \in \mathcal{D}_a^3$, thus $(A\setminus \{a\},B ) \in \mathcal{T}_{a,b}$ and $P_{A,B} > P_{A\setminus \{a\}\cup \{b\},B}=P_{\pi(A),\pi(B)}$ implies $(A\setminus \{a\},B ) \in \mathcal{T}^*_{a,b}$ 
 (As $(A,B) \in \mathcal{D}_a^1$, implies  $P_{\pi(A),\pi(B)}=P_{A,B}>P_{A,B}$ which is a contradiction.). 
 Overall, $\mathcal{S}^*_{a,b}\cup \mathcal{T}^*_{a,b}\ne \emptyset$.
\end{proof}

\section{Algorithms and Proofs of Section \ref{sec:stochastic}}\label{sec:stochasticFull}

\allowdisplaybreaks
We start by splitting the definition of $X_{a,b}(S,S',T)$ into two random variables, according to the two types of witnesses we introduced in the previous section. 
This will simplify the proof of Lemma \ref{lemma:SSTforF}.

For $(S,S') \in \mathcal{S}_{a,b}$ we introduce a random variable $Z_{a,b}(S,S')$ that combines the outcomes of the two duels obtained from the potential subsets witness $(S,S')$, namely $(S\cup\{a\}, S' \cup \{b\})$ and $(S'\cup\{a\}, S \cup \{b\})$ and similarly, a random variable $Y_{a,b}(S,T)$ that combines the outcomes of the two duels obtained by subset-team witness, $(S\cup\{a\},T)$ and $(T, S \cup \{b\})$.

\begin{definition}
For $a,b \in [n],a\neq b$, $(S,S') \in \mathcal{S}_{a,b}$ and $(S,T) \in \mathcal{T}_{a,b}$,
\begin{align*}
Z_{a,b}(S,S')&=\frac{\mathbbm{1}[
(S\cup\{a\}> S' \cup \{b\})]}{2}  +\frac{\mathbbm{1}[(S'\cup\{a\}> S \cup \{b\})]}{2},  \\
Y_{a,b}(S,T)&=
\frac{\mathbbm{1}[
\{a\}\cup S>T]}{2} +
\frac{\mathbbm{1}[
T>\{b\}\cup S]}{2}.
\end{align*}
\end{definition}
We note that both $Z_{a,b}(S,S')$ and $Y_{a,b}(S,T)$ can take values in $\{0,1/2,1\}$.

The random variables $Z_{a,b}$ and $Y_{a,b}$ are  the outcomes of picking random pairs, $(S,S') \in \mathcal{T}_{a,b}$ or $(S,T) \in \mathcal{S}_{a,b}$ and returning  $Z_{a,b}(S,S')$ and $Y_{a,b}(S,T)$, respectively. Observe that 
\begin{align*}
\E[Z_{a,b}]&=\sum_{(S,S') \in \mathcal{S}_{a,b}} \frac{\E[Z_{a,b}(S,S')]}{|\mathcal{S}_{a,b}|}
=\sum_{(S,S') \in \mathcal{S}_{a,b}} \frac{P_{\{a\}\cup S,\{b\}\cup S'}+P_{\{a\}\cup S,\{b\}\cup S'}}{2|\mathcal{S}_{a,b}|},\\
\E[Y_{a,b}]&= \sum_{(S,T) \in \mathcal{T}_{a,b}} \frac{\E[Y_{a,b}(S,T)]}{|\mathcal{T}_{a,b}|}
=\sum_{(S,T) \in \mathcal{T}_{a,b}} \frac{P_{\{a\}\cup S,T}+P_{T,\{b\}\cup S'}}{2|\mathcal{T}_{a,b}|},
\end{align*}
Where the expectation $\E[Z_{a,b}]$ is taken over all elements of $\mathcal{S}_{a,b}$ and the expectation $\E[Y_{a,b}]$ is taken over all elements $\mathcal{T}_{a,b}$. 

The  following lemma apply for every $a\succ b$, even if $a\nsucc^* b$. We prove Lemma using  SST and consistency.
\begin{restatable}{lemma}{lemmaSstForrSinglePlayers}\label{lemma:sstForrSinglePlayers}
Let $a,b \in [n]$ be any two players such that $a\succ b$. Then, 

(1) For every $(S,S')\in\mathcal{S}_{a,b}$ it holds that $\E[Z_{a,b}(S,S')]\geq 1/2$. 

(2) For every $(S,T)\in\mathcal{T}_{a,b}$ it holds that $\E[Y_{a,b}(S,T)]\geq 1/2$.
\end{restatable}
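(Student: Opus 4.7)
The plan is to reduce both inequalities to the pointwise comparisons already established in Lemma \ref{lem:helper1}. For part (1), I would start by writing out the expectation directly from the definition of $Z_{a,b}(S,S')$:
\[
\mathbb{E}[Z_{a,b}(S,S')] = \tfrac{1}{2}\, P_{S\cup\{a\},\,S'\cup\{b\}} + \tfrac{1}{2}\, P_{S'\cup\{a\},\,S\cup\{b\}}.
\]
Applying the identity $P_{A,B} = 1 - P_{B,A}$ to the second term, the target inequality $\mathbb{E}[Z_{a,b}(S,S')] \ge 1/2$ is equivalent to
\[
P_{S\cup\{a\},\,S'\cup\{b\}} \;\ge\; P_{S\cup\{b\},\,S'\cup\{a\}},
\]
which is exactly Lemma \ref{lem:helper1}(1), since $(S,S') \in \mathcal{S}_{a,b}$ and $a \succ b$.

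For part (2), I would proceed symmetrically: expand
\[
\mathbb{E}[Y_{a,b}(S,T)] = \tfrac{1}{2}\, P_{\{a\}\cup S,\,T} + \tfrac{1}{2}\, P_{T,\,\{b\}\cup S},
\]
and again rewrite $P_{T,\{b\}\cup S} = 1 - P_{\{b\}\cup S,T}$, so that the desired bound $\mathbb{E}[Y_{a,b}(S,T)] \ge 1/2$ collapses to
\[
P_{\{a\}\cup S,\,T} \;\ge\; P_{\{b\}\cup S,\,T},
\]
which is Lemma \ref{lem:helper1}(2).

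Thus the real substance of the claim is the content of Lemma \ref{lem:helper1} itself, which uses SST and consistency via an exhaustive case analysis: consistency plus $a \succ b$ force $S\cup\{a\} \succ S\cup\{b\}$ and $S'\cup\{a\} \succ S'\cup\{b\}$, while the four teams $S\cup\{a\}$, $S\cup\{b\}$, $S'\cup\{a\}$, $S'\cup\{b\}$ (respectively the triple with $T$ in the second case) can only be arranged into one of a few linear orders; in each of them two applications of SST stitch the chain into the required comparison. There is no serious obstacle here beyond bookkeeping; the only delicate case is the one in which the duels $(S\cup\{a\}, S'\cup\{b\})$ and $(S\cup\{b\}, S'\cup\{a\})$ split their winners, where one must use consistency to rule out $b \succ a$ before SST applies.
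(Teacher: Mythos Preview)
Your proposal is correct and matches the paper's own proof essentially line for line: both parts are reduced, via the identity $P_{A,B}=1-P_{B,A}$, to the pointwise inequalities of Lemma~\ref{lem:helper1}. The additional paragraph sketching the case analysis behind Lemma~\ref{lem:helper1} is accurate but not needed here, since the lemma is invoked as a black box.
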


\begin{proof}
(1) 
Let $(S,S') \in \mathcal{S}_{a,b}$ and $a\succ b$. Then, 
\begin{align*}
&\E[Z_{a,b}(S,S')]=\frac{P_{\{a\}\cup S,\{b\}\cup S'}+P_{\{a\}\cup S',\{b\}\cup S}}{2}\geq \frac{1}{2} \\ \iff&
P_{\{a\}\cup S,\{b\}\cup S'}+P_{\{a\}\cup S',\{b\}\cup S}\geq 1 \\ \iff&
P_{\{a\}\cup S,\{b\}\cup S'}\geq 1-P_{\{a\}\cup S',\{b\}\cup S} \\ \iff & 
P_{\{a\}\cup S,\{b\}\cup S'}\geq P_{\{b\}\cup S,\{a\}\cup S'},
\end{align*}
which holds according to Lemma \ref{lem:helper1}.\\
(2)
Let $(S,T) \in \mathcal{T}_{a,b}$ and $a\succ b$. From Lemma \ref{lem:helper1} we have that 
\[
P_{\{a\}\cup S,T}\geq P_{\{b\}\cup S,T},
\]
which is equivalent to 
\[
P_{\{a\}\cup S,T}\geq P_{\{b\}\cup S,T}=1-P_{T,\{b\}\cup S}
\]
and therefore 
\[
2\E[Y_{a,b}(S,T)]
\geq 1.
\]
Hence, $\E[Y_{a,b}(S,T)]
\geq 1/2$.
\end{proof}

\begin{corollary}\label{cor:relationImpliesYandZGeqHalf}
For players $a,b\in[n]$ such that $a\succ  b$ then $\E[Z_{a,b}],\E[Y_{a,b}]\geq 1/2$.
\end{corollary}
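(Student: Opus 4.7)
The plan is essentially to observe that this corollary is an immediate averaging consequence of Lemma \ref{lemma:sstForrSinglePlayers}, so the proof should be a short two-line argument rather than any substantive new work. First I would recall from the preceding discussion that the random variables $Z_{a,b}$ and $Y_{a,b}$ are defined by first sampling a pair uniformly at random from $\mathcal{S}_{a,b}$ or $\mathcal{T}_{a,b}$, respectively, and then returning the realization of $Z_{a,b}(S,S')$ or $Y_{a,b}(S,T)$ on that sampled pair. This gives the explicit formulas
\[
\E[Z_{a,b}] = \frac{1}{|\mathcal{S}_{a,b}|}\sum_{(S,S')\in\mathcal{S}_{a,b}}\E[Z_{a,b}(S,S')], \qquad \E[Y_{a,b}] = \frac{1}{|\mathcal{T}_{a,b}|}\sum_{(S,T)\in\mathcal{T}_{a,b}}\E[Y_{a,b}(S,T)],
\]
already displayed just before the statement of Lemma \ref{lemma:sstForrSinglePlayers}.

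Second, I would invoke Lemma \ref{lemma:sstForrSinglePlayers}, which, under the assumption $a \succ b$, guarantees that every summand on the right-hand side of each average is at least $1/2$. Since an average of quantities that are each at least $1/2$ is itself at least $1/2$, both inequalities $\E[Z_{a,b}] \geq 1/2$ and $\E[Y_{a,b}] \geq 1/2$ follow directly.

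There is no real obstacle here: the only thing one has to be a bit careful about is that $\mathcal{S}_{a,b}$ and $\mathcal{T}_{a,b}$ are nonempty, which holds because of the standing assumption $1 \leq k \leq n/2$, ensuring that one can choose disjoint $(k{-}1)$-sized subsets $S, S' \subseteq [n] \setminus \{a,b\}$ (for $\mathcal{S}_{a,b}$) and disjoint $S$ and $T$ of sizes $k-1$ and $k$ (for $\mathcal{T}_{a,b}$). With nonemptiness confirmed, the averages are well-defined, and the corollary follows immediately from Lemma \ref{lemma:sstForrSinglePlayers} by the monotonicity of the mean.
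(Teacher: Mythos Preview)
Your proposal is correct and matches the paper's reasoning: the paper states this as a corollary without proof precisely because it is an immediate averaging consequence of Lemma~\ref{lemma:sstForrSinglePlayers}, exactly as you describe. One tiny quibble on your nonemptiness remark: the bound $k \leq n/2$ guarantees $|\mathcal{S}_{a,b}|>0$, but for $\mathcal{T}_{a,b}$ one needs $(k-1)+k \leq n-2$, i.e., $n \geq 2k+1$; the paper tacitly assumes this throughout the stochastic section (otherwise $\mathcal{X}_{a,b}$ and hence $X_{a,b}$ would be undefined as well), so this is not a gap in your argument.
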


For the definition of $X_{a,b}(S,S',T)$ we refer to the main part of our paper. 
In the following we show how $X_{a,b}(S,S',T)$ can be expressed by $Z_{a,b}(S,S')$ and $Y_{a,b}(S,T)$, namely
\begin{align*}
    X_{a,b}(S,S',T) &= \frac{\mathbbm{1}[S \cup \{a\} > S' \cup \{b\}] - \mathbbm{1}[S \cup \{b\} > S' \cup \{a\}]}{4}  + \frac{\mathbbm{1}[S \cup \{a\} > T] - \mathbbm{1}[S \cup \{b\} > T]}{4} \\ 
    & = \frac{\mathbbm{1}[S \cup \{a\} > S' \cup \{b\}] + \mathbbm{1}[S' \cup \{a\} > S \cup \{b\}] - 1  }{4}\\ &
    + \frac{ \mathbbm{1}[S \cup \{a\} > T] + \mathbbm{1}[T > S \cup \{b\}] - 1}{4}\\
    &= \frac{Z_{a,b}(S,S') + Y_{a,b}(S,T) - 1}{2}.
\end{align*}

In similar fashion to the definitions of $\mathcal{S}_{a,b}$, $\mathcal{S}^*_{a,b}$ and $Z_{a,b}$ w.r.t. $Z(S,S')$, we defined
$$\mathcal{X}_{a,b}=\{(S,S',T)| (S,S')\in \mathcal{S}_{a,b}, (S,T)\in \mathcal{T}_{a,b}\},$$
and the random variable $X_{a,b}$ to be the outcome of picking a random triplet, $(S,S',T) \in \mathcal{X}_{a,b}$ and returning $X_{a,b}(S,S',T)$. 

The set $\mathcal{X}_{a,b}^*$ contains all triplets $(S,S',T) \in \mathcal{X}_{a,b}$ such that either $(S,S')\in\mathcal{S}^*_{a,b}$ or $(S,T)\in\mathcal{T}^*_{a,b}$.
Note that the support of each $X_{a,b}(S,S',T)$ is included  $\{-1/2,-1/4,0,1/4,1/2\}$ and that $\E[X_{a,b}]\in [-1/2,1/2]$.

For the next Theorem's proof we rely on Theorem \ref{thm:provableSingleRelationIffWitnessExists}, Corollary \ref{cor:relationImpliesYandZGeqHalf} in one direction, and show the other using the probabilistic method.
\thmTheOneTrueLemmaToRuleAllWitnesses*
\begin{proof}
We will show that for players $a,b\in[n]$ it holds that $a\succ^*  b$ iff one of the following holds:\\
(1) $\E[Z_{a,b}]>1/2$, or \\
(2) $\E[Y_{a,b}]>1/2$.\\
This is equivalent to $\E[X_{a,b}]>0$ according to the definition of $X_{a,b}$ and Corollary \ref{cor:relationImpliesYandZGeqHalf}.\\
$(\Rightarrow)$ If $a\succ^* b$ then from Theorem \ref{thm:provableSingleRelationIffWitnessExists} we know that one of the following holds:
\begin{enumerate}
    \item There exists a subsets, witness 
$(S,S')\in\mathcal{S}_{a,b}$ for $a\succ b$. So by definition  $\E[Z_{a,b}(S,S')]>1/2$, and combined with Lemma \ref{lemma:sstForrSinglePlayers}
we have $\E[Z_{a,b}]>1/2$.
\item There exists a subset-team witness $(S,T)\in \mathcal{T}_{a,b}$ for $a\succ b$. Thus  $\E[Y_{a,b}(S,T)]>1/2$, hence Lemma \ref{lemma:sstForrSinglePlayers} implies that $\E[Y_{a,b}]>1/2$.
\end{enumerate}
$(\Leftarrow)$ If (1) holds, the probabilistic method implies the existence of $(S,S')\in\mathcal{S}_{a,b}$ such that $\E[Z_{a,b}(S,S')]>1/2$ which means that $(S,S')$ is a witness for $a\succ b$, hence, $a \succ^* b$ by Theorem \ref{thm:provableSingleRelationIffWitnessExists}.
 If (2) holds, the probabilistic method implies that there exists $(S,T)\in\mathcal{T}_{a,b}$ such that $\E[Y_{a,b}(S,T)]>1/2$ which means that $(S,T)$ is a witness for $a\succ b$, hence, $a \succ^* b$ by Theorem \ref{thm:provableSingleRelationIffWitnessExists}.
 
 Thus according to the definition of $X_{a,b}$ the theorem holds.
\end{proof}

\paragraph{Gap parameter}
Recall that we defined our gap parameter by $\Delta = \E[X_{k,k+1}]$. In the following we show that our gap parameter does not just 
help us to distinguish between the top $k$ and the top $k+1$ players, but also between other players in $A^*_k$ and players from $[n] \setminus A^*_k$. 
To this end, we show in Lemma \ref{lemma:SSTforF} that strong stochastic transitivity holds for $\E[X_{a,b}]$. 
For most elements $(S,S',T) \in \mathcal{X}_{a,b}$ it holds that $\E[X_{a,c}(\pi(S),\pi(S'),\pi(T))] \geq \E[X_{a,b}(S,S',T)]$ (and analogously for $X_{b,c}$), where $\pi$ is a permutation exchanging $b$ and $c$, but, surprisingly, this is not true in general. 
By constructing a charging scheme, we can still show that this holds in expectation over all elements of $\mathcal{X}_{a,b}$, and derive a strong stochastic transitivity for distinguishabilities w.r.t. the total order $\succ$ on the players.

The proof of the following lemma also shows that from every $a\succ b$ witness $(S,S',T)\in \mathcal{X}^*_{a,b}$, and for any player $c$ such that $b\succ c$ we can create a $a\succ c$- witness. Similarly, from every $b\succ c$ witness $(S,S',T)\in \mathcal{X}^*_{b,c}$, and for any player $a$ such that $a\succ b$ we can create a $a\succ c$- witness. 

\lemmaSSTforF*
\begin{proof}

In the following we show that $\E[X_{a,c}] \geq \E[X_{a,b}]$. The proof that
$\E[X_{a,c}] \geq \E[X_{b,c}]$ works completely analogously and is therefore omitted. Let $\pi$ be the function exchanging $b$ and $c$, i.e. \[\pi(S) = \begin{cases} S \setminus \{c\} \cup \{b\} & \text{ if }c \in S, b \not\in S  \\ S \setminus \{b\} \cup \{c\} & \text{ if } b \in S, c \not\in S \\ S & \text{ else.}\end{cases}\]
Then, we define the function $f: \mathcal{X}_{a,b} \rightarrow \mathcal{X}_{a,c}$ by $f(S,S',T) = (\pi(S),\pi(S'),\pi(T))$. Observe that, for this application of $\pi$, the second case within the definition of $\pi$ never occurs, as none of the sets $S,S',T$ contains $b$ when $(S,S',T) \in \mathcal{X}_{a,b}$.
It will we helpful to partition $\mathcal{X}_{a,b}$ in the following way. 
\begin{align*}
    \mathcal{X}_{a,b}^{1} &= \{(S,S',T) \in \mathcal{X}_{a,b} \mid c \not\in S \cup S' \cup T\} \\ 
    \mathcal{X}_{a,b}^{2} &= \{(S,S',T) \in \mathcal{X}_{a,b} \mid c \in S\}\\
    \mathcal{X}_{a,b}^{3} &= \{(S,S',T) \in \mathcal{X}_{a,b} \mid c \in S' \setminus T\}\\
    \mathcal{X}_{a,b}^{4} &= \{(S,S',T) \in \mathcal{X}_{a,b} \mid c \in T \setminus S'\}\\
    \mathcal{X}_{a,b}^{5} &= \{(S,S',T) \in \mathcal{X}_{a,b} \mid c \in T \cap S'\}.
\end{align*}

Then we can also define $\mathcal{X}^{i}_{a,c} = \{f(S,S',T) \mid (S,S',T) \in \mathcal{X}_{a,b}^i\}$ for all $i \in \{1, \dots, 5\}$. Observe that $\{\mathcal{X}^i_{a,c} \mid i \in \{1,\dots, 5\}\}$ is also a partition of $\mathcal{X}_{a,c}$.  

We will start by proving that for every $(S,S',T)\in \mathcal{X}^1_{a,b} \cup \mathcal{X}^2_{a,b} \cup  \mathcal{X}^3_{a,b} \cup \mathcal{X}^4_{a,b} \cup  \mathcal{X}^5_{a,b}$ 

\begin{equation}\label{eq:strongguaranteeZ}
\E[{Z}_{a,c}(f(S,S'))]\geq \E[Z_{a,b}(S,S')]
\end{equation}
and for all $(S,S',T) \in \mathcal{X}^1_{a,b} \cup \mathcal{X}^2_{a,b} \cup \mathcal{X}^3_{a,b}$
\begin{equation}\label{eq:strongguaranteeY}
\E[{Y}_{a,c}(f(S,T))]\geq \E[Y_{a,b}(S,T)]
\end{equation}

by exhaustion.

\begin{enumerate}[label=(\roman*)]
    \item Let $(S,S',T) \in \mathcal{X}_{a,b}^1$. We get that $f(S,S',T)=(S,S',T)$ and both
    \begin{align*}
    \E[Z_{a,c}(S,S')]&=\frac{P_{\{a\}\cup S,\{c\}\cup S'}+P_{\{a\}\cup S',\{c\}\cup S}}{2} \geq
    \frac{P_{\{a\}\cup S,\{b\}\cup S'}+P_{\{a\}\cup S',\{b\}\cup S}}{2}\\&=\E[Z_{a,b}(S,S')],
    \end{align*}
    \begin{align*}
    \E[Y_{a,c}(S,T)]&=\frac{P_{\{a\}\cup S,T}+P_{T,\{c\}\cup S}}{2} \geq \frac{P_{\{a\}\cup S,T}+P_{T,\{b\}\cup S}}{2}\\
    &=\E[Y_{a,b}(S,T)]
    \end{align*}
    follow from consistency and SST. 
    \item Let $(S,S',T) \in \mathcal{X}_{a,b}^2$. Then, $f(S,S',T)=(S\setminus \{c\}\cup \{b\},S,T)$ and both
    \begin{align*}
    \E[Z_{a,c}(S\setminus \{c\}\cup \{b\},S')]&=\frac{P_{\{a\}\cup S\setminus \{c\}\cup \{b\},\{c\}\cup S'}+P_{\{a\}\cup S',\{c\}\cup S\setminus \{c\}\cup \{b\}}}{2}\\ &\geq
    \frac{P_{\{a\}\cup S,\{b\}\cup S'}+P_{\{a\}\cup S',\{b\}\cup S}}{2}\\&=\E[Z_{a,b}(S,S')]
    \end{align*}
    \begin{align*}
    \E[Y_{a,c}(S\setminus \{c\}\cup \{b\},T)])& =\frac{P_{\{a\}\cup S\setminus \{c\}\cup \{b\},T}+P_{T,\{c\}\cup S}}{2}\\
    &\geq 
    \frac{P_{\{a\}\cup S,T}+P_{T,\{b\}\cup S}}{2}
    \\&=\E[Y_{a,b}(S,T)]
    \end{align*}
    follow as $\{c\}\cup S\setminus \{c\}\cup \{b\}=S\cup \{b\}$ and from consistency and SST yield the rest.
\item Let $(S,S',T) \in \mathcal{X}_{a,b}^3$. Then,  $f(S,S',T)=(S,S'\setminus \{c\}\cup \{b\},T)$ and 
    \begin{align*}
    \E[Z_{a,c}(S,S'\setminus \{c\}\cup \{b\}))]& =\frac{P_{\{a\}\cup S,\{c\}\cup S'\setminus \{c\}\cup \{b\}}+P_{\{a\}\cup S'\setminus \{c\}\cup \{b\},\{c\}\cup S}}{2}\\ & \geq
    \frac{P_{\{a\}\cup S,\{b\}\cup S'}+P_{\{a\}\cup S',\{b\}\cup S}}{2}\\ & =\E[Z_{a,b}(S,S')]
    \end{align*}
    follows as $\{c\}\cup S'\setminus \{c\}\cup \{b\}=S'\cup \{b\}$ and  consistency and SST yield the rest. In addition, we already showed that in this case  thus $\E[{Y}_{a,c}(S,T)]\geq \E[Y_{a,b}(S,T)]$ (due to the same reason as in (i)). 
\item Let $(S,S',T) \in \mathcal{X}_{a,b}^4$. Then, $f(S,S',T)=(S,S',T\setminus \{c\}\cup \{b\})$. Observe that we have already shown that $\E[Z_{a,c}(S,S')] \geq \E[{Z_{a,b}(S,S')}]$ in this case (due to the same reason as (i)). 
\item Let $(S,S',T) \in \mathcal{X}_{a,b}^5$. Then, $f(S,S',T)=(S,S'\setminus \{c\} \cup \{b\},T\setminus \{c\}\cup \{b\})$. Observe that we have already shown that $\E[Z_{a,c}(S,S'\setminus \{c\} \cup \{b\})] \geq \E[{Z_{a,b}(S,S')}]$ in this case (due to the same reason as (iii)). 
\end{enumerate}

This concludes the proof of equations (\ref{eq:strongguaranteeZ}) and (\ref{eq:strongguaranteeY}). In particular, from (ii) and (iii) it directly follows that  \begin{equation}\label{eq:boundforXi}
    \sum_{(S,S',T) \in \mathcal{X}_{a,c}^i} \E [X(S,S',T)] = \sum_{(S,S',T) \in \mathcal{X}_{a,b}^i} \E [X(f(S,S',T))] \geq \sum_{(S,S',T) \in \mathcal{X}_{a,b}^i} \E [X(S,S',T)] 
\end{equation}
holds for $i \in \{2,3\}$. 

We will continue the proof by showing that, for every $(S,T) \in \mathcal{S}_{a,b}$ with $c \in T$, it holds that  
\begin{align}\label{eq:charging}
     \E[Z_{a,c}(S,T \setminus \{c\})] + \E[Y_{a,c}(S,T \setminus \{c\} \cup \{b\})] \geq\E[Z_{a,b}(S,T\setminus \{c\})] + \E[Y_{a,b}(S,T)].
\end{align}
This will then be helpful to conclude the proof. 

To this end, observe that 
\begin{align*}
     &\E[Z_{a,c}(S,T \setminus \{c\})] + \E[Y_{a,c}(S,T \setminus \{c\} \cup \{b\})]) \\ 
    &= P_{S \cup \{a\},T} + P_{T \setminus \{c\} \cup \{a\},S \cup \{c\}} + P_{S \cup \{a\},T \setminus \{c\} \cup \{b\}} + P_{T \setminus \{c\} \cup \{b\},S \cup \{c\}}\\ 
    &=P_{S \cup \{a\},T \setminus \{c\} \cup \{b\}}  + P_{T \setminus \{c\} \cup \{a\},S \cup \{c\}} + P_{S \cup \{a\},T} +  P_{T \setminus \{c\} \cup \{b\},S \cup \{c\}}\\ 
    &\geq P_{S \cup \{a\},T \setminus \{c\} \cup \{b\}} + P_{T \setminus \{c\} \cup \{a\},S \cup \{b\}} + P_{S \cup \{a\},T} + P_{T,S \cup \{b\}}\\
    &=\E[Z_{a,b}(S,T\setminus \{c\})] + \E[Y_{a,b}(S,T)], 
\end{align*}

which follows by consistency and SST. This will now be helpful to establish a charging scheme. Namely, we are first going to show that \begin{equation}\label{eq:boundforX4} \sum_{(S,S',T) \in \mathcal{X}_{a,c}^4} \E[X_{a,c}(S,S',T)] = \sum_{(S,S',T) \in \mathcal{X}_{a,b}^4} \E [X_{a,c}(f(S,S',T))]\geq \sum_{(S,S',T) \in \mathcal{X}_{a,b}^4} \E[X_{a,b}(S,S',T)].\end{equation}

This is true since

\begin{align*}
&\sum_{(S,S',T) \in \mathcal{X}_{a,c}^4} 2 \E[X_{a,c}(S,S',T)]  + |\mathcal{X}_{a,c}|\\
    &\sum_{(S,S',T) \in \mathcal{X}_{a,b}^4}  2  \E[X_{a,c}(S,S',T \setminus \{c\} \cup \{b\})] + |\mathcal{X}_{a,c}|\\& = \sum_{(S,S',T) \in \mathcal{X}_{a,b}^4} (\E[Z_{a,c}(S,S')] + \E[Y_{a,c}(S,T \setminus \{c\} \cup \{b\})]) \\ 
    &= {n-k-2 \choose k-1} \Big( \sum_{(S,S') \in \mathcal{S}_{a,b} \cap \mathcal{S}_{a,c}} \E[Z_{a,c}(S,S')] +  \sum_{(S,T) \in \mathcal{T}_{a,b} \mid c \in T} \E[Y_{a,c}(S,T \setminus \{c\} \cup \{b\})] \Big)\\
    &= {n-k-2 \choose k-1} \Big( \sum_{(S,S') \in \mathcal{S}_{a,b} \cap \mathcal{S}_{a,c}} \E[Z_{a,c}(S,S')] +  \sum_{(S,S') \in \mathcal{S}_{a,b} \cap \mathcal{S}_{a,c}} \E[Y_{a,c}(S,S' \cup \{b\})] \Big)\\
    &= {n-k-2 \choose k-1} \Big( \sum_{(S,S') \in \mathcal{S}_{a,b} \cap \mathcal{S}_{a,c}} \E[Z_{a,c}(S,S')] + \E[Y_{a,c}(S,S' \cup \{b\})] \Big) \\ 
    &\geq {n-k-2 \choose k-1} \Big( \sum_{(S,S') \in \mathcal{S}_{a,b} \cap \mathcal{S}_{a,c}} \E[Z_{a,b}(S,S')] + \E[Y_{a,b}(S,S' \cup \{c\})] \Big) \\
    &= {n-k-2 \choose k-1} \Big( \sum_{(S,S') \in \mathcal{S}_{a,b} \cap \mathcal{S}_{a,c}} \E[Z_{a,b}(S,S')] +  \sum_{(S,S') \in \mathcal{S}_{a,b} \cap \mathcal{S}_{a,c}} \E[Y_{a,b}(S,S' \cup \{c\})] \Big)\\
    &= {n-k-2 \choose k-1} \Big( \sum_{(S,S') \in \mathcal{S}_{a,b} \cap \mathcal{S}_{a,c}} \E[Z_{a,b}(S,S')] +  \sum_{(S,T) \in \mathcal{T}_{a,b} \mid c \in T} \E[Y_{a,b}(S,T)] \Big)\\
    &=\sum_{(S,S',T) \in \mathcal{X}_{a,b}^4} (\E[Z_{a,b}(S,S')] + \E[Y_{a,b}(S,T)]) \\ 
    &\sum_{(S,S',T) \in \mathcal{X}_{a,b}^4} 2\E[X_{a,b}(S,S',T)] + |\mathcal{X}_{a,b}|,
\end{align*}
where the inequality follows by equation (\ref{eq:charging}). This completes the proof of (\ref{eq:boundforX4}).

Next, we are going to show that a similar bound holds when we sum over elements in $\mathcal{X}_{a,b}^1 \cup \mathcal{X}_{a,b}^5$. More precisely, we are going to show that \begin{align}\label{eq:boundforX5} \sum_{(S,S',T) \in \mathcal{X}_{a,c}^1 \cup \mathcal{X}_{a,c}^5} \E[X_{a,c}(S,S',T)] &= \sum_{(S,S',T) \in \mathcal{X}_{a,b}^1 \cup \mathcal{X}_{a,b}^5} \E [X_{a,c}(f(S,S',T))] \nonumber \\ &\geq \sum_{(S,S',T) \in \mathcal{X}_{a,b}^1 \cup \mathcal{X}_{a,b}^5} \E[X_{a,b}(S,S',T)].\end{align}

To this end, observe that 

\begin{align*}
 &\sum_{(S,S',T) \in \mathcal{X}_{a,c}^1} {2\E[X_{a,c}(S,S',T)]} + \sum_{(S,S',T) \in \mathcal{X}_{a,c}^5}{2\E[X_{a,c}(S,S',T)]} + |\mathcal{X}_{a,c}^1| + |\mathcal{X}_{a,c}^5|\\ 
    &\sum_{(S,S',T) \in \mathcal{X}_{a,b}^1} 2\E[X_{a,c}(S,S',T)] + \sum_{(S,S',T) \in \mathcal{X}_{a,b}^5} 2\E[X_{a,c}(S,S' \setminus \{c\} \cup \{b\},T \setminus \{c\} \cup \{b\})] + |\mathcal{X}_{a,b}^1| + |\mathcal{X}_{a,b}^5|\\ 
    &= { n-k-2\choose k}\sum_{(S,S') \in \mathcal{S}_{a,b} \cap \mathcal{S}_{a,c}} \E[Z_{a,c}(S,S')] + {n-k-3 \choose k-1}\sum_{(S,T) \in \mathcal{T}_{a,b} \cap \mathcal{T}_{a,c}}\E[Y_{a,c}(S,T)] \\ &+ {n-k-2\choose k-1}  \sum_{(S,S') \in \mathcal{S}_{a,b}\mid c \in S'}\E[Z_{a,c}(S,S' \setminus \{c\} \cup \{b\})] + {n-k-2 \choose k-2}\sum_{(S,T) \in \mathcal{T}_{a,b} \mid c \in T}\E[Y_{a,c}(S,T \setminus \{c\} \cup \{b\})] \\
    &= { n-k-2\choose k}\sum_{(S,S') \in \mathcal{S}_{a,b} \cap \mathcal{S}_{a,c}} \E[Z_{a,c}(S,S')] +  [\dots] + {n-k-2 \choose k-2}\sum_{(S,T) \in \mathcal{T}_{a,b} \mid c \in T}\E[Y_{a,c}(S,T \setminus \{c\} \cup \{b\})] \\
    &= \Big({ n-k-2\choose k} - { n-k-2\choose k-2}\Big)\sum_{(S,S') \in \mathcal{S}_{a,b} \cap \mathcal{S}_{a,c}} \E[Z_{a,c}(S,S')] +  [\dots] \\
    &+ {n-k-2 \choose k-2}\sum_{(S,S') \in \mathcal{S}_{a,b} \cap \mathcal{S}_{a,c}}\E[Z_{a,c}(S,S')] + \E[Y_{a,c}(S,S' \cup \{b\})] \\
    & \geq \Big({ n-k-2\choose k} - { n-k-2\choose k-2}\Big)\sum_{(S,S') \in \mathcal{S}_{a,b} \cap \mathcal{S}_{a,c}} \E[Z_{a,b}(S,S')] +  [\dots] \\
    &+ {n-k-2 \choose k-2}\sum_{(S,S') \in \mathcal{S}_{a,b} \cap \mathcal{S}_{a,c}}\E[Z_{a,b}(S,S')] + \E[Y_{a,b}(S,S' \cup \{c\})] \\
    & = { n-k-2\choose k}\sum_{(S,S') \in \mathcal{S}_{a,b} \cap \mathcal{S}_{a,c}} \E[Z_{a,b}(S,S')] +  [\dots] + {n-k-2 \choose k-2}\sum_{(S,S') \in \mathcal{S}_{a,b} \cap \mathcal{S}_{a,c}} \E[Y_{a,b}(S,S' \cup \{c\})] \\
    & = { n-k-2\choose k}\sum_{(S,S') \in \mathcal{S}_{a,b} \cap \mathcal{S}_{a,c}} \E[Z_{a,b}(S,S')] +  [\dots] + {n-k-2 \choose k-2}\sum_{(S,S') \mathcal{T}_{a,b} \mid c \in T}\E[Y_{a,b}(S,T)] \\
     &= { n-k-2\choose k}\sum_{(S,S') \in \mathcal{S}_{a,b} \cap \mathcal{S}_{a,c}} \E[Z_{a,b}(S,S')] + {n-k-3 \choose k-1}\sum_{(S,T) \in \mathcal{T}_{a,b} \cap \mathcal{T}_{a,c}}\E[Y_{a,c}(S,T)] \\ &+ {n-k-2\choose k-1}  \sum_{(S,S') \in \mathcal{S}_{a,b}\mid c \in S'}\E[Z_{a,c}(S,S' \setminus \{c\} \cup \{b\})] + {n-k-2 \choose k-2}\sum_{(S,T) \in \mathcal{T}_{a,b} \mid c \in T}\E[Y_{a,b}(S,T)] \\
     &\geq { n-k-2\choose k}\sum_{(S,S') \in \mathcal{S}_{a,b} \cap \mathcal{S}_{a,c}} \E[Z_{a,b}(S,S')] + {n-k-3 \choose k-1}\sum_{(S,T) \in \mathcal{T}_{a,b} \cap \mathcal{T}_{a,c}}\E[Y_{a,b}(S,T)] \\ &+ {n-k-2\choose k-1}  \sum_{(S,S') \in \mathcal{S}_{a,b}\mid c \in S'}\E[Z_{a,b}(S,S')] + {n-k-2 \choose k-2}\sum_{(S,T) \in \mathcal{T}_{a,b} \mid c \in T}\E[Y_{a,b}(S,T)] \\
     &=\sum_{(S,S',T) \in \mathcal{X}_{a,b}^1} 2\E[X_{a,b}(S,S',T)] + \sum_{(S,S',T) \in \mathcal{X}_{a,b}^5} 2\E[X_{a,b}(S,S',T)] + |\mathcal{X}_{a,c}^1| + |\mathcal{X}_{a,c}^5|,\\ 
\end{align*}
where the first inequality follows by equation (\ref{eq:charging}) and (\ref{eq:strongguaranteeZ}) and the second inequality follows from equation (\ref{eq:strongguaranteeZ}) and (\ref{eq:strongguaranteeY}). The dots ($[\dots]$) stands for \[{n-k-3 \choose k-1}\sum_{(S,T) \in \mathcal{T}_{a,b} \cap \mathcal{T}_{a,c}}\E[Y_{a,c}(S,T)] + {n-k-2\choose k-1}  \sum_{(S,S') \in \mathcal{S}_{a,b}\mid c \in S'}\E[Z_{a,c}(S,S' \setminus \{c\} \cup \{b\})], \] which is a part of the expression that it is omitted during the calculations for the sake of brevity.
Summarizing, we get that
\begin{align*}
    \E[X_{a,c}] &= \frac{\sum_{(S,S',T) \in \mathcal{X}_{a,c}} \E[X_{a,c}(S,S',T)]}{|\mathcal{X}_{a,c}|} = \frac{\sum_{i=1}^5\sum_{(S,S',T) \in \mathcal{X}_{a,c}^i} \E[X_{a,c}(S,S',T)]}{|\mathcal{X}_{a,c}|} \\ 
    & \geq \frac{\sum_{i=1}^5\sum_{(S,S',T) \in \mathcal{X}_{a,b}^i} \E[X_{a,b}(S,S',T)]}{|\mathcal{X}_{a,c}|} =\E[X_{a,b}],  
\end{align*}
where the inequality follows from equations (\ref{eq:boundforXi}), (\ref{eq:boundforX4}), and (\ref{eq:boundforX5}). The last inequality follows from $|\mathcal{X}_{a,b}| = |\mathcal{X}_{a,c}|$.
\end{proof}

\paragraph{The reduction} We close this section by giving the two subroutines mentioned within the reduction to the classic dueling bandits setting. 

\begin{algorithm}[h]
\begin{algorithmic}
\STATE \textbf{Input:} Players  $a,b\in [n]$
\STATE \textbf{Output:} $w\in\{0,1\}$ such that $w=1$ if $a$ won and $w=0$ if $b$ won. 
\STATE Pick $(S,S',T)\in \mathcal{X}_{a,b}$ randomly
\STATE $z\leftarrow (\mathbbm{1}[\{a\}\cup S > \{b\}\cup S']+\mathbbm{1}[\{a\}\cup S' > \{b\}\cup S])/2$
\STATE $y\leftarrow( \mathbbm{1}[\{a\}\cup S > T]+\mathbbm{1}[T > \{b\}\cup S])/2$
\STATE $x\leftarrow (z+ y-1)/2$ 
\STATE \textbf{return} sample of a biased coin with bias $1/2+x$
\end{algorithmic}\caption{singlesDuel: simulation of a duel between single players} \label{alg:singlesDuel}
\end{algorithm}

\section{Algorithms and Proofs of Section \ref{sec:deterministic}}\label{sec:deterministicFull}

\medskip
\paragraph{Uncover Subroutine}
As sketched within the main part of our paper, we refine the idea of the \emph{Uncover} subroutine by a binary search approach. Moreover, we add the option to input a refinement of $A$ and $B$, namely $A = A^{(1)} \cup A^{(2)}$, $B=B^{(1)} \cup B^{(2)}$, guaranteeing that the uncovered relation is between a pair of players from $A^{(1)}$ and $B^{(1)}$, while $A^{(2)}$ and $B^{(2)}$ are contained in one of the sets of the witness each. For that to work, we require that 
\begin{enumerate}[label=(\alph*)]
    \item $|A^{(1)}| + |A^{(2)}| = k$, 
    \item $|A^{(i)}| = |B^{(i)}|$ for $i \in \{1,2\}$, 
    \item $A^{(1)} \cup A^{(2)} \succ B^{(1)} \cup B^{(2)}$, and 
    \item $A^{(1)} \cup B^{(2)} \succ B^{(1)} \cup A^{(2)}$.
\end{enumerate}
   Observe that for any four sets satisfying $(a)$ and $(b)$ one of the four sets wins in both duels. By enforcing $(c)$ and $(d)$ we fix wlog that this set is $A^{(1)}$. Let us assume that the sets $A^{(1)}$ and $B^{(1)}$ are ordered, meaning that $A^{(1)} = \{a_1, \dots, a_{|A^{(1)}|}\}$ and $B^{(1)} = \{b_1, \dots, b_{|A^{(1)}|}\}$. We also introduce the shorthand notation $A_{\ell:r}$ for $\{a_{\ell},\dots, a_{r}\}$ and respectively $B_{\ell:r}$ for $\{b_{\ell},\dots, b_{r}\}$ for any $\ell, r \in [|A^{(1)}|]$. The subroutine is formalized in Algorithm \ref{sub:Uncover}.

\begin{algorithm}[H]
\begin{algorithmic}
\STATE \textbf{Input: } four disjoint sets, $A^{(1)},B^{(1)},A^{(2)},B^{(2)}$ with $|A^{(1)}|=|B^{(1)}|,|A^{(2)}|=|B^{(2)}|$, $|A^{(1)}| + |A^{(2)}| = k$, $A^{(1)} \cup A^{(2)} \succ B^{(1)} \cup B^{(2)}$, and $ A^{(1)} \cup B^{(2)} \succ B^{(1)} \cup A^{(2)}$\\
\STATE \textbf{Output: } $a \in A^{(1)}, b \in B^{(1)},$ $(S,S') \in \mathcal{S}^*_{a,b}$ with ($C \subseteq S$ and $D \subseteq S'$) or ($D \subseteq S$ and $C \subseteq S'$) \\
\smallskip
\STATE Set  $S \leftarrow A^{(1)} \cup A^{(2)}$, $T \leftarrow B^{(1)} \cup B^{(2)}$, $\ell \leftarrow 1$, $r \leftarrow |A^{(1)}|$ \\ 
\WHILE{$\ell < r$ }
\STATE $i\leftarrow \big\lfloor\frac{\ell+r}{2}\big\rfloor$\\
\STATE $S\leftarrow S-A_{i+1:r}\cup B_{i+1:r} $\\
\STATE $T\leftarrow T-B_{i+1:r}\cup A_{i+1:r} $\\
\IF{$S \succ T$}
\STATE $r\leftarrow i$
\ELSE \STATE $\ell\leftarrow i+1$ \\ 
\STATE swap $S$ and $T$
\ENDIF
\ENDWHILE
\STATE \textbf{return} $(a_{\ell},b_{\ell}),$ and $(S \setminus \{a_{\ell}\},T \setminus \{b_{\ell}\})$
\end{algorithmic}
\caption{Uncover Subroutine}\label{sub:Uncover}
\end{algorithm}

In order to show that the algorithm is well-defined and works correctly, the following Lemma will be helpful.

\begin{lemma}\label{lem:unc-log}
In subroutine Uncover (Algorithm \ref{sub:Uncover}),
at the end of every while loop, it holds that, $(i) \;\ell, r \in \mathbb{N}$ with $\ell \leq r$, $(ii) \;A_{\ell:r} \subseteq S$, $B_{\ell:r} \subseteq T$, $(iii)\; S\succ T$, and $ (iv)\;T \setminus B_{\ell:r} \cup A_{\ell:r} \succ S \setminus A_{\ell:r} \cup B_{\ell:r}$, $(v)$ exactly one of $S$ and $T$ contains $A^{(2)}$, the other set contains $B^{(2)}$. 
\end{lemma}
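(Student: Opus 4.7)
The plan is to prove all five invariants simultaneously by induction on the number of completed iterations of the while loop. The conceptual content is simple---the algorithm maintains a ``duel with prefix swapped wins'' structure, and binary search preserves it---but the proof is entirely bookkeeping, so I will set up the algebraic identities carefully first and then cite them in each case.

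\emph{Base case.} Before the first iteration, $\ell=1$, $r=|A^{(1)}|$, $S=A^{(1)}\cup A^{(2)}$, $T=B^{(1)}\cup B^{(2)}$. Invariants (i), (ii), (v) are immediate. Invariant (iii) is exactly input hypothesis (c). For (iv), observe that $T\setminus B_{1:|A^{(1)}|}\cup A_{1:|A^{(1)}|}=A^{(1)}\cup B^{(2)}$ and $S\setminus A_{1:|A^{(1)}|}\cup B_{1:|A^{(1)}|}=B^{(1)}\cup A^{(2)}$, so (iv) reduces to input hypothesis (d).

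\emph{Inductive step.} Suppose the invariants hold at the top of an iteration. Since the loop guard $\ell<r$ holds, $i=\lfloor(\ell+r)/2\rfloor$ satisfies $\ell\leq i<r$, so the range $[i+1{:}r]$ is nonempty and contained in $[\ell{:}r]$. By invariant (ii), $A_{i+1:r}\subseteq S$ and $B_{i+1:r}\subseteq T$, so the updates producing
\[
S' \;=\; (S\setminus A_{i+1:r})\cup B_{i+1:r},\qquad T' \;=\; (T\setminus B_{i+1:r})\cup A_{i+1:r}
\]
are well-defined. I would then record the four algebraic identities I will repeatedly invoke:
\begin{align*}
S'\setminus A_{\ell:i}\cup B_{\ell:i} &= S\setminus A_{\ell:r}\cup B_{\ell:r}, & T'\setminus B_{\ell:i}\cup A_{\ell:i} &= T\setminus B_{\ell:r}\cup A_{\ell:r},\\
T'\setminus B_{i+1:r}\cup A_{i+1:r} &= S, & S'\setminus A_{i+1:r}\cup B_{i+1:r} &= T.
\end{align*}

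\emph{Case 1: $S'\succ T'$, so $r\gets i$.} Invariant (i) holds because $\ell\leq i$. Invariant (ii) follows by restricting the update rules to the new range. Invariant (iii) is the case hypothesis. Invariant (iv) is the first pair of identities combined with the old invariant (iv). Invariant (v) is preserved because $A^{(2)}\cup B^{(2)}$ is disjoint from $A^{(1)}\cup B^{(1)}\supseteq A_{i+1:r}\cup B_{i+1:r}$, so the update leaves the $A^{(2)}/B^{(2)}$ membership of $S$ and $T$ untouched.

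\emph{Case 2: $S'\not\succ T'$, so by totality $T'\succ S'$; then $\ell\gets i+1$ and we swap $S'\leftrightarrow T'$.} Call the post-swap sets $S''=T'$, $T''=S'$. Invariant (i) holds since $i+1\leq r$. Invariant (ii) holds because $A_{i+1:r}\subseteq T'=S''$ and $B_{i+1:r}\subseteq S'=T''$ by construction. Invariant (iii) is the case hypothesis after the swap. Invariant (iv) is exactly the second pair of identities combined with the old invariant (iii) $S\succ T$. Invariant (v) is preserved: the swap exchanges the roles of $S$ and $T$, and (v) is symmetric in this exchange.

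The only real obstacle is keeping the set-theoretic manipulations straight under the two possible branches and the $[\ell{:}i]$ versus $[i+1{:}r]$ split; pre-computing the four identities above contains all the non-trivial algebra so that each branch becomes a one-line application.
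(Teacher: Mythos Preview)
Your proof takes essentially the same inductive route as the paper (joint induction over iterations, two cases according to the duel outcome, with the old invariant (iv) supplying the new (iv) in Case~1 and the old invariant (iii) supplying it in Case~2). One slip: your third and fourth displayed identities are false as written---for example $T'\setminus B_{i+1:r}\cup A_{i+1:r}=T'$, not $S$, since $B_{i+1:r}$ has already been removed from $T'$ and $A_{i+1:r}$ already added. The identities actually needed in Case~2 (after substituting $S''=T'$, $T''=S'$) are
\[
S'\setminus B_{i+1:r}\cup A_{i+1:r}=S,\qquad T'\setminus A_{i+1:r}\cup B_{i+1:r}=T,
\]
which do reduce the new invariant (iv) to the old (iii) exactly as you claim. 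With that correction the argument is correct and matches the paper's.
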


\begin{proof}
We prove all statements via one joint induction over the iterations of the while loop. All statements are clearly true at the beginning of the first while loop. Now, consider any iteration in which the four statements are true at the beginning of the while loop. It suffices to show that they are still true after resetting $S$, $T$, $\ell$, and $r$. For clarity, we refer to the modified variables of the teams just before the if condition as $S'$, $T'$ and after the if condition as $S''$, $T''$. Similarly, $\ell'$, and $r'$ are the values of the indices after the if condition. In the following, we show that the four conditions still hold for $S''$,$T''$,$\ell'$, and $r'$.

\textbf{Case 1: }$S' \succ T'$. Then, $S'' = S'$, $T'' = T'$, $\ell' = \ell$, $r'=i$. The condition of the while loop, $\ell <r$, clearly implies that $\ell'= \ell \leq \lfloor \frac{\ell+r}{2} \rfloor = i =r'$. Moreover, by construction $A_{\ell:i} = A_{\ell':r'} \subseteq S''$ and  $B_{\ell:i} = B_{\ell':r'} \subseteq T''$ and hence condition $(ii)$ is satisfied. Condition $(iii)$, i.e., $S'' \succ T''$ is satisfied by the case condition. For condition $(iv)$ let us rewrite the induction hypothesis for condition $(iv)$ as \[T - B_{\ell:r} \cup (A_{\ell:i} \cup A_{i+1:r}) \succ S - A_{\ell:r} \cup (B_{\ell:i} \cup B_{i+1:r}). \] Observe that $T - B_{\ell:r} \cup A_{i+1:r} = T' - B_{\ell:i} $ and $S - A_{\ell:r} \cup B_{i+1:r} = S' - A_{\ell:i} $. Hence, the above expression can be rewritten as \[T' - B_{\ell:i} \cup A_{\ell:i} \succ S' - A_{\ell:i} \cup B_{\ell:i}.\] Plugging in $T'=T''$, $S'=S''$, $\ell=\ell'$ and $i=r'$ yields condition $(iv)$ for the updated variables. Lastly, condition $(v)$ is satisfied directly by applying the induction hypothesis. 

\textbf{Case 2:} $T' \succ S'$. Then, $S'' = T'$, $T'' = S'$, $\ell' = i+1$, $r'=r$.  For condition $(ii)$, observe that $\ell, r \in \mathbb{N}$ with $\ell < r$ clearly implies that $\ell'= i + 1 = \lfloor \frac{\ell + r}{2}\rfloor + 1\leq  \lfloor \frac{2r - 1}{2} \rfloor + 1 \leq r =r'$. Moreover, by construction $A_{i+1:r} \subseteq T' = S''$ and $B_{i+1:r} \subseteq S' = T''$ and hence $(ii)$ is satisfied. Condition $(iii)$, i.e., $S'' = T' \succ S' = T''$, is satisfied by the case condition. For condition $(iv)$, let us rewrite the induction hypothesis for condition $(iii)$ as
 \[S - A_{\ell:r} \cup (A_{\ell:i} \cup A_{i+1:r}) \succ T - B_{\ell:r} \cup (B_{\ell:i} \cup B_{i+1:r}). \] 
 Observe that $S- A_{\ell:r} \cup A_{\ell:i} = S' - B_{i+1:r} $ and $T- B_{\ell:r} \cup B_{\ell:i} = T' - A_{i+1:r}$. Hence, the above expression can be rewritten to 
 \[S' - B_{i+1:r} \cup A_{i+1:r} \succ T' - A_{i+1:r} \cup B_{i+1:r} .\] 
 Inserting $S'=T''$, $T'=S''$, $i+1=\ell'$ and $r=r'$ yields condition $(iv)$ for the updated variables. Lastly, condition $(v)$ is satisfied directly by applying the induction hypothesis. 
\end{proof}

With the help of Lemma \ref{lem:unc-log} it is easy to see that the algorithm is well-defined, more precisely, that the constructed tuple $(S,T)$ forms a feasible duel within every iteration of the while loop. It remains to show that the algorithm works correctly and its running time is bounded by $\mathcal{O}(log(|A^{(1)}|))$. 

\begin{lemma}\label{lem:uncoverStrong}
Let $A^{(1)},A^{(2)},B^{(1)},B^{(2)}$ be sets satisfying conditions $(a)$ to $(d)$. After performing $\mathcal{O}(\log(|A^{(1)}|))$ duels, \emph{Uncover} returns $(a,b)$ with $a \in A^{(1)}$, $b \in B^{(1)}$ and $(S,S') \in \mathcal{S}^{*}_{a,b}$ with either $A^{(2)} \subseteq S$ and $B^{(2)} \subseteq S'$ or $B^{(2)} \subseteq S$ and $A^{(2)} \subseteq S'$.
\end{lemma}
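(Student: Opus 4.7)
The plan is to rely on the loop invariant from Lemma \ref{lem:unc-log} and show that on termination the invariants directly yield a valid subsets witness with the desired containment properties.

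First I would argue termination and correctness. Observe that the while loop can only exit when $\ell = r$, so at that moment $A_{\ell:r} = \{a_\ell\}$ and $B_{\ell:r} = \{b_\ell\}$. By invariant $(ii)$ of Lemma \ref{lem:unc-log}, $a_\ell \in S$ and $b_\ell \in T$, so the output sets $S' := S \setminus \{a_\ell\}$ and $T' := T \setminus \{b_\ell\}$ are well-defined and of size $k-1$ (since $|S|=|T|=k$ is preserved throughout, as the algorithm only performs swaps between $S$ and $T$). Moreover, by invariants $(iii)$ and $(iv)$ we have
\[
S' \cup \{a_\ell\} = S \succ T = T' \cup \{b_\ell\} \quad \text{and} \quad T' \cup \{a_\ell\} \succ S' \cup \{b_\ell\}.
\]
In the deterministic setting, this is exactly the definition of $(S', T') \in \mathcal{S}^*_{a_\ell, b_\ell}$, so the returned triple is a valid subsets witness for $a_\ell \succ b_\ell$.

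Next I would check the placement of $A^{(2)}$ and $B^{(2)}$. Invariant $(v)$ says that one of $S$ and $T$ contains $A^{(2)}$ and the other contains $B^{(2)}$; since $a_\ell \in A^{(1)}$ and $b_\ell \in B^{(1)}$, removing them does not affect these containments. Hence either $A^{(2)} \subseteq S'$ and $B^{(2)} \subseteq T'$, or vice versa, matching the claim of the lemma. Similarly, $a_\ell$ lies in $A^{(1)}$ and $b_\ell$ in $B^{(1)}$ because the indices $\ell$ and $r$ always live in $[|A^{(1)}|]$ (this is part of invariant $(i)$ together with the initialization).

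Finally, the duel count: each iteration of the while loop performs a single duel (the comparison $S \succ T$ vs.\ $T \succ S$). The interval $[\ell, r]$ shrinks by a binary search step, since setting $i = \lfloor (\ell+r)/2\rfloor$ and then either $r \leftarrow i$ or $\ell \leftarrow i+1$ halves the length of $[\ell,r]$ at every step. Thus the number of iterations, and hence the number of duels, is $\mathcal{O}(\log |A^{(1)}|)$. The main subtlety, and essentially the only nontrivial step, is verifying that the two non-trivial invariants $(iii)$ and $(iv)$ are preserved — but this is already handled by Lemma \ref{lem:unc-log}, so combining it with the observations above completes the proof.
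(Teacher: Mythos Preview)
Your proposal is correct and follows essentially the same approach as the paper: both rely on the invariants from Lemma~\ref{lem:unc-log}, use $(ii)$--$(iv)$ at termination to certify the subsets witness, use $(v)$ for the containment of $A^{(2)}$ and $B^{(2)}$, and bound the duels by the binary-search halving of $[\ell,r]$. If anything, you are slightly more careful than the paper in explicitly noting that $a_\ell\in A^{(1)}$, $b_\ell\in B^{(1)}$ and that removing them does not disturb the containments from $(v)$.
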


\begin{proof}
By Lemma \ref{lem:unc-log}, the termination of the algorithm implies that $\ell = r$. 
By statement $(ii)$ from Lemma \ref{lem:unc-log} we get that $a_\ell \in S$ and $b_\ell \in T$ holds. Moreover, conditions $(iii)$ and  $(iv)$ can be rewritten as 
\begin{equation}\label{eq:unc-log1}
(S\setminus \{a_{\ell}\}) \cup \{a_{\ell}\} \succ (T\setminus \{b_{\ell}\}) \cup \{b_{\ell}\}
\end{equation}
and
\begin{equation}\label{eq:unc-log2}
(T\setminus \{b_{\ell}\}) \cup \{a_{\ell}\} \succ (T\setminus \{a_{\ell}\}) \cup \{b_{\ell}\},
\end{equation}
respectively. Clearly, this implies that $(S \setminus \{a_{\ell}\},T \setminus \{b_{\ell}\}) \in \mathcal{S}^*_{a_{\ell},b_{\ell}}$ and hence $a_{\ell}\succ b_{\ell}$.

It is easy to see that the number of iterations of the while loop is upper bounded by the height of a balanced binary tree on $|A^{(1)}|$ elements, i.e., $\mathcal{O}(log(|A^{(1)}|))$. Since every iteration induces exactly one query, this also bounds the total number of queries. Moreover, by condition $(v)$ we have that one of $A^{(2)}$ is included in $S$ or $T$ and $B^{(2)}$ in the other one. This concludes the proof. 
\end{proof}

Clearly, Lemma \ref{lem:uncoverStrong} directly implies Lemma \ref{lem:uncover}. For this, simply call \emph{Uncover} with $A^{(2)}=B^{(2)}=\emptyset$. 
\uncoverlemma*

\medskip
\paragraph{Reducing the Number of Players to ${\mathcal{O}(k)}$}

Before formalizing the pre-processing procedure \emph{ReducePlayers} in Algorithm \ref{alg:graph_algo}, recall that algorithm maintains a dominance graph $D = (V,E)$ on the set of players. More precisely, the nodes of $D$ are the players, i.e., $V=[n]$, and there exists an arc from node $a$ to node $b$ if the algorithm has proven that $a \succ b$. The set $V_{<2k}$ is the subset of the players having an indegree smaller than $2k$ in $D$.

Additionally, we define a second graph $G_{<\m}$ as follows: The set of nodes of $G_{<\m}$ equals $V_{<\m}$ and there exists an (undirected) edge between two nodes $a, b \in V_{<\m}$ if and only if neither of the arcs $(a,b)$ or $(b,a)$ is present within the graph $D$. The algorithm now searches for a matching of size $k$ within the graph $G_{<\m}$ by calling the subroutine \textit{GreedyMatching}, formalized in Algorithm \ref{alg:GreedyMatch-a}. Let $\{(a_1,b_1), \dots, (a_k,b_k)\}$ be such a matching. In particular, this implies that the algorithm has not identified any of the relations between $a_i$ and $b_i$ yet. Hence, when calling \textit{uncover} for the (ordered) sets $A=\{a_1,\dots,a_k\}$ and $B=\{b_1,\dots, b_k\}$ (after possibly swapping $A$ and $B$), the algorithm learns about one additional pairwise relation, say $a_i \succ b_i$ and add the arc $(a_i,b_i)$ to the graph $D$. Then, the algorithm also updates $D$ to its transitive closure. The algorithm ends when it cannot find a matching of size $k$ in $G_{<\m}$ anymore. 
We formalize the idea within Algorithm \ref{alg:graph_algo}.

\begin{minipage}{0.5\textwidth}
\begin{algorithm}[H] 
\begin{algorithmic}
\STATE \textbf{Input: } a set of players $[n]$\\
\STATE \textbf{Output: } a set $S$ with $|S|\leq 6k-2$ s.t. $A^*_{\m} \subseteq S$ \\
\WHILE{$|\textit{GreedyMatching}(G_{<2k})|=k$} 
\STATE Let $\{\{a_1,b_1\},\dots, \{a_k,b_k\}\}$ be Greedy Matching \\
\STATE Set $A=\{a_1,\dots,a_k\}, B=\{b_1,\dots,b_k\}$ \\
\STATE $(a,b) \leftarrow \textit{uncover}(A,B)$ \\ 
\STATE Add $(a,b)$ to $D$, $D \leftarrow \textit{transitiveClosure}(D)$\\
\STATE Update $V_{<\m}$ and $G_{<\m}$
\ENDWHILE
\STATE \textbf{return} $V_{<\m}$
\end{algorithmic}\caption{ReducePlayers}\label{alg:graph_algo}
\end{algorithm}
\end{minipage}\hspace{0.2cm}
\begin{minipage}{0.46\textwidth}\vspace{-.8cm}
\begin{algorithm}[H] 
\begin{algorithmic}
\STATE \textbf{Input: } an undirected Graph $G=(V,E)$\\
\STATE \textbf{Output: } a matching of size at most $k$ \\
$M \leftarrow \emptyset$\\
\WHILE{$|M| < k$ and $E \neq \emptyset$} 
\STATE Pick arbitrary edge $(u,v)$ from $E$
\STATE Delete all edges incident to $u$ and $v$ from $E$
\ENDWHILE
\STATE \textbf{return }$M$
\caption{Subroutine GreedyMatching}\label{alg:GreedyMatch-a}
\end{algorithmic}
\end{algorithm}
\end{minipage}

\lemgraphalgocorrectness*

\begin{proof}
Let $\Sub$ be the set returned by \emph{ReducePlayers}. We start by proving that $A^{*}_{2k} \subseteq \Sub$. Every player not included in $\Sub$ has at least $\m$ ingoing arcs in $D$. In other words, there exist $\m$ players which dominate it. Hence, such a player is not included in $A_{\m}^*$.  

We turn to prove that $|\Sub| \leq 6k-2$: 
Any \emph{independent set} within the graph $G_{<\m}$ contains less than $\m+1$ nodes. An independent set within $G_{<\m}$ is a subset of the nodes $T \subseteq V_{<\m}$ such that no two nodes of $T$ are connected by an edge. Now, assume for contradiction that there exists an independent set $T \subseteq V_{<\m}$ within the graph $G_{<\m}$ with $|T| = \m + 1$. Consider the subgraph of $D$ induced by the set $T$, i.e., $D[T] = (T,\{(a,b) \in E \mid a,b \in T\})$. Since $T$ is an independent set within $G_{<\m}$, we know that $D[T]$ is a tournament graph, i.e., a directed graph in which any two nodes are connected by exactly one directed arc. Moreover, since $D[T]$ is transitive (since $\succ$ and hence $D$ is transitive), there exists exactly one node within $T$ with an indegree of $\m$ within the graph $D$. This is a contradiction to $T \subseteq V_{<\m}$. 

This observation is now helpful to conclude the first part of the proof. Assume for contradiction that $|V_{<\m}| \geq 6k-1$. Then the following greedy procedure lets us construct a matching of size $2k$ within the graph $G_{<\m}$. This yields a contradiction to the termination of the while loop, since every maximal matching, and in particular, a matching of size smaller than $k$ returned by \emph{GreedyMatching}, is a $1/2$-approximation of a matching with maximum cardinality. Hence, the existence of a matching with $2k$ edges yields a contradiction to the fact that \emph{GreedyMatching} did not find a matching of size $k$. We start by defining $T=V_{<\m}$ and $M=\emptyset$. Since $|T| > \m$, $T$ is not an independent set and there exists an edge between some two nodes in $T$. 
Now, pick any such edge, say $\{a,b\}$, and add it to $M$ and remove $a$ and $b$ from $T$. After $i$ rounds of this procedure, $|M| = i$ and $|T| = \m + 2(2k - i)-1$. We can repeat this procedure for $2k$ rounds and have found a matching of size $2k$, a contradiction. 

We now turn to prove the number of duels performed by the algorithm. 
In every step of the while loop, the algorithm adds one arc which was not existent before to the graph $D$. Moreover, since any selected matching never includes an edge with one of its endpoints having an indegree larger than $2k-1$, no node has an indegree higher than $\m$ after the termination of the algorithm. We can then upper bound the number of arcs within $D$ by $\m n$. 

This is also a bound for the number of iterations of the while loop. Within each iteration of the while loop the algorithm needs to make one query in order to identify the winning team and in addition it calls the subroutine \emph{uncover}. As argued within the proof of Lemma \ref{alg:graph_algo_correctness}, the \emph{uncover} subroutine induces additional $\mathcal{O}(log(k))$ queries per while loop. Summarizing, this implies that the algorithm requires $\mathcal{O}(nk \log(k))$ queries in total. 

As for the running time, we have already argued 
that the while loop does at most $\mathcal{O}(nk)$ iterations. Within the while loop the algorithm needs to run \emph{GreedyMatching} for finding a matching of size $k$ within $G_{<\m}$ and run the \emph{uncover} subroutine. While the latter step requires a running time of $\mathcal{O}(\log(k))$ as argued within Lemma \ref{alg:graph_algo_correctness}, \emph{GreedyMatching} for selecting a matching of size $k$ can be implemented in $\mathcal{O}(nk)$. 
In total, we get a running time of $\mathcal{O}(n^2k^2)$.
\end{proof}

\paragraph{Subroutines NewCut and Compare}

In Algorithm \ref{alg:subroutine} we formalize the subroutine \emph{NewCut}, which takes as input a subset of the players $\Sub \subseteq [n]$, a pair of players $a,b\in X$ and a witness $(S,T') \in \mathcal{S}_{a,b}^* \cup \mathcal{T}_{a,b}^*$ and outputs a partition of $\Sub$ into $U$ and $L$ such that $U \triangleright L$ holds. We denote by $\pi_{xy}$ the permutation on subsets that exchange players $x$ and $y$. More precisely, \[\pi_{xy}(A) =\begin{cases}  A \setminus \{x\} \cup \{y\} & \text{ if } x \in A, y \not \in A \\ A \setminus \{y\} \cup \{x\} & \text{ if } x \not\in A, y \in A \\ A & \text{else.}\end{cases}\]

\begin{algorithm}[H]
\begin{algorithmic}
\STATE \textbf{Input: } $\Sub \subseteq [n]$, a pair $a,b \in \Sub$ and $(S,T') \in \mathcal{S}^*_{a,b} \cup \mathcal{T}^*_{a,b}$ 
\STATE \textbf{Output: } Partition of $\Sub$ into $U \triangleright L$ with $a \in U$ and $b \in L$ 
\STATE Initialize $\mathcal{W} \leftarrow \{(S,T',a)\}$, $U \leftarrow \{a\}, \Sub \leftarrow \Sub \setminus \{a,b\}$ \\ 
\WHILE{$\mathcal{W}$ non-empty}
\STATE Pick $(S,T,y) \in \mathcal{W}$ and remove it from $\mathcal{W}$
\FOR{$x \in \Sub$} 
\IF{$(\pi_{xy}(S),\pi_{xy}(T')) \in \mathcal{S}^*_{xb} \cup \mathcal{T}^*_{xb}$}
\STATE add $x$ to $U$, remove $x$ from $\Sub$ \\
\STATE add $(\pi_{xy}(S),\pi_{xy}(T'),x)$ to $\mathcal{W}$
\ELSIF{$|T'|=k$ and $x \in T'$ and $(S,T'\setminus \{x\}) \in \mathcal{S}^*_{xb}$}
\STATE add $x$ to $U$ and remove it from $\Sub$ \\ 
\STATE add $(S,T'\setminus \{x\},x)$ to $\mathcal{W}$
\ENDIF
\ENDFOR
\ENDWHILE
\STATE \textbf{return} $(U,\Sub \cup \{b\})$
\end{algorithmic}
\caption{NewCut}\label{alg:subroutine}
\end{algorithm}

Before we prove the correctness of the algorithm, we introduce the following two lemmas. Strictly speaking, these are special cases of statements shown within the proof of Lemma \ref{lemma:SSTforF} for the deterministic setting. For the sake of illustration, we state and prove them here for the deterministic case again, independently of Lemma \ref{lemma:SSTforF}.

\begin{lemma} \label{lem:witness-transitivity-type1}
If $a \succ b \succ c$ and $(S,S') \in \mathcal{S}_{b,c}^*$, then $(\pi_{ab}(S),\pi_{ab}(S')) \in \mathcal{S}_{a,c}^*$. 
\end{lemma}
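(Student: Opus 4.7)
The plan is to establish the lemma by a case analysis on the location of $a$ relative to $S \cup S'$. Since $(S,S') \in \mathcal{S}_{b,c}$, we know $S,S' \subseteq [n] \setminus \{b,c\}$, $S \cap S' = \emptyset$, and $|S|=|S'|=k-1$. Because $\pi_{ab}$ acts nontrivially only on sets that contain $a$ or $b$, and $b$ appears in neither $S$ nor $S'$, the behavior of $\pi_{ab}$ on $(S,S')$ is determined by whether $a$ lies in $S$, in $S'$, or in neither. In each case I would first verify that $(\pi_{ab}(S),\pi_{ab}(S')) \in \mathcal{S}_{a,c}$, i.e.\ that the two images are disjoint subsets of $[n]\setminus\{a,c\}$ of size $k-1$, and then establish the two witness inequalities in $\mathcal{S}_{a,c}^{*}$.

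For the easy case $a \notin S \cup S'$, both images equal the originals, so I only need to show $S \cup \{a\} \succ S' \cup \{c\}$ and $S' \cup \{a\} \succ S \cup \{c\}$. Both follow by chaining the witness inequalities for $(S,S') \in \mathcal{S}_{b,c}^{*}$ (which in the deterministic setting give $S\cup\{b\}\succ S'\cup\{c\}$ and $S'\cup\{b\}\succ S\cup\{c\}$) with the consistency upgrades $S\cup\{a\}\succ S\cup\{b\}$ and $S'\cup\{a\}\succ S'\cup\{b\}$, using transitivity of $\succ$.

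The case $a\in S$ (and, symmetrically, $a\in S'$) is where I need to be a bit more careful, though still routine. Here $\pi_{ab}(S) = (S\setminus\{a\})\cup\{b\}$ and $\pi_{ab}(S')=S'$. Rewriting, $\pi_{ab}(S)\cup\{a\} = S\cup\{b\}$, so the first required inequality is exactly the first half of the hypothesis $(S,S')\in\mathcal{S}_{b,c}^{*}$. For the second required inequality, $S'\cup\{a\} \succ (S\setminus\{a\})\cup\{b,c\}$, I would start from $S'\cup\{b\}\succ S\cup\{c\}$ (the other half of the hypothesis) and apply consistency twice: once on the left to get $S'\cup\{a\}\succ S'\cup\{b\}$, and once on the right (with $U=(S\setminus\{a\})\cup\{c\}$, which contains neither $a$ nor $b$) to get $S\cup\{c\} = U\cup\{a\}\succ U\cup\{b\} = (S\setminus\{a\})\cup\{b,c\}$. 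Transitivity closes the chain. The disjointness and non-containment conditions needed to invoke consistency and to land in $\mathcal{S}_{a,c}$ are easy to check from the assumptions on $S,S'$ and the case hypothesis $a \in S$.

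There is no real obstacle; the main thing to get right is the bookkeeping of which sets contain $a$, $b$, or $c$ after applying $\pi_{ab}$, so that the consistency axiom is legitimately applicable at each step. The proof essentially shows that swapping $a$ for $b$ via $\pi_{ab}$ never destroys a witness, because every resulting team differs from the $(b,c)$-witness team by at most a single $b\mapsto a$ swap, which can only improve the team under consistency with $a\succ b$.
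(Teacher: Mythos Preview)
Your proposal is correct and follows essentially the same approach as the paper: a case split on whether $a\notin S\cup S'$ or $a\in S$ (with $a\in S'$ symmetric), then in each case deriving the two required $\succ$-relations by chaining the witness inequalities from $(S,S')\in\mathcal{S}_{b,c}^*$ with consistency upgrades $b\mapsto a$ and transitivity. Your explicit verification that $(\pi_{ab}(S),\pi_{ab}(S'))\in\mathcal{S}_{a,c}$ is a nice addition that the paper leaves implicit.
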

\begin{proof}
We distinguish two cases. First assume $a \not\in S \cup S'$. Then, \[S \cup \{a\} \succ S \cup \{b\} \succ S' \cup \{c\},\] where the first statement follows from single-player consistency and the second statement from $(S,S') \in \mathcal{S}^*_{bc}$. Moreover, \[S' \cup \{a\} \succ S' \cup \{b\} \succ S \cup \{c\},\] where again the first statement follows from single-player consistency and the second one from $(S,S') \in \mathcal{S}_{bc}^*$.

If $a \in S \cup S'$, assume wlog that $a \in S$. Then, $\pi_{ab}(S) = S \setminus \{a\} \cup \{b\}$ and $\pi_{ab}(S') = S'$. We get \[\pi_{ab}(S) \cup \{a\} = S \cup \{b\} \succ S' \cup \{c\}\] and \[S' \cup \{a\} \succ S' \cup \{b\} \succ S \cup \{c\} = S \setminus \{a\} \cup \{a\} \cup \{c\} \succ S \setminus \{a\} \cup \{b\} \cup \{c\} = \pi_{ab}(S) \cup \{c\},\] where the first and last statement follow from single player consistency and the second statement from $(S,S') \in \mathcal{S}_{bc}^*$. Summarizing, $(\pi_{ab}(S),\pi_{ab}(S')) \in \mathcal{S}_{ac}^*$.
\end{proof}
\begin{lemma} \label{lem:witness-transitivity-type2}
If $a \succ b \succ c $ and $(S,T) \in \mathcal{T}_{b,c}^*$, then $(\pi_{ab}(S),\pi_{ab}(T)) \in \mathcal{T}_{a,c}^*$ or $(S,T\setminus \{a\}) \in \mathcal{S}_{a,c}^*$. 
\end{lemma}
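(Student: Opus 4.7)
The plan is a straightforward case analysis on the location of player $a$ relative to the witness sets $S$ and $T$, combined with repeated applications of single-player consistency and transitivity of $\succ$. Unpacking the hypothesis $(S,T) \in \mathcal{T}_{b,c}^*$ in the deterministic setting gives $S \cup \{b\} \succ T \succ S \cup \{c\}$. Since $S, T \subseteq [n] \setminus \{b,c\}$ and $S \cap T = \emptyset$, player $a$ falls into exactly one of three disjoint sub-cases: $a \notin S \cup T$, $a \in S$, or $a \in T$; transitivity also delivers $a \succ c$ for free.

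In the first two sub-cases I would show that $(\pi_{ab}(S),\pi_{ab}(T)) \in \mathcal{T}_{a,c}^*$. When $a \notin S \cup T$, the permutation $\pi_{ab}$ fixes both sets, so it suffices to derive $S \cup \{a\} \succ T \succ S \cup \{c\}$; the first relation follows by chaining $S \cup \{a\} \succ S \cup \{b\}$ (consistency with $a \succ b$) with the witness, and the second is the witness verbatim. When $a \in S$, set $\tilde S = \pi_{ab}(S) = (S \setminus \{a\}) \cup \{b\}$; the identity $\tilde S \cup \{a\} = S \cup \{b\}$ immediately gives $\tilde S \cup \{a\} \succ T$, and for $T \succ \tilde S \cup \{c\}$ I would apply consistency ($a \succ b$) to the pair $S \cup \{c\} = (S \setminus \{a\}) \cup \{a,c\}$ and $\tilde S \cup \{c\} = (S \setminus \{a\}) \cup \{b,c\}$ to deduce $S \cup \{c\} \succ \tilde S \cup \{c\}$, then close by chaining with the witness.

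The third sub-case, $a \in T$, is the one that motivates the alternative conclusion of the lemma. Setting $R = T \setminus \{a\}$, I would show $(S,R) \in \mathcal{S}_{a,c}^*$. The domain conditions ($|R|=k-1$, $a,c \notin S \cup R$, $S \cap R = \emptyset$) are immediate from the disjointness hypotheses. One of the two required inequalities, $R \cup \{a\} \succ S \cup \{c\}$, is exactly the witness $T \succ S \cup \{c\}$. For the other inequality, $S \cup \{a\} \succ R \cup \{c\}$, the plan is to chain three links: $S \cup \{a\} \succ S \cup \{b\}$ (consistency with $a \succ b$), $S \cup \{b\} \succ T = R \cup \{a\}$ (the witness), and $R \cup \{a\} \succ R \cup \{c\}$ (consistency with $a \succ c$), closing by transitivity. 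The main subtlety will be recognizing up front that the $\mathcal{T}$-style conclusion need not transfer to $(\pi_{ab}(S),\pi_{ab}(T))$ when $a \in T$, and that the correct remedy is to strip $a$ off $T$ so as to produce an $\mathcal{S}$-type witness, mirroring the case split already used in Lemma \ref{lem:witness-transitivity-type1}.
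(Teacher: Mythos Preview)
Your proposal is correct and follows essentially the same approach as the paper: the same three-way case split on whether $a \notin S \cup T$, $a \in S$, or $a \in T$, with the same chains of consistency and transitivity in each case. The paper's proof is slightly terser but uses the identical identities (e.g., $\pi_{ab}(S) \cup \{a\} = S \cup \{b\}$ in the second case and the chain $S \cup \{a\} \succ S \cup \{b\} \succ T \succ T \setminus \{a\} \cup \{c\}$ in the third).
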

\begin{proof}
We distinguish three cases. First, assume that $a \not \in S \cup T$. Then, \[S \cup \{a\} \succ S \cup \{b\} \succ T \succ S \cup \{c\},\] where the first statement follows from single player consistency and the second and third from $(S,T) \in \mathcal{T}_{bc}^*$. Next, assume $a \in S$. Then, $\pi_{ab}(S) = S \setminus \{a\} \cup \{b\}$ and we get \[\pi_{ab}(S) \cup \{a\} = S \cup \{b\} \succ T \succ S \cup \{c\} = S \setminus \{a\} \cup \{a\} \cup \{c\} \succ S \setminus \{a\} \cup \{b\} \cup \{c\} = \pi_{ab}(S) \cup \{c\}.\] Hence, $(\pi_{ab}(S),\pi_{ab}(T)) \in \mathcal{T}_{ab}^*$. Finally, assume $a \in T$.  
We get, \[S \cup \{a\} \succ S \cup \{b\} \succ T \setminus \{a\} \cup \{a\} \succ T \setminus \{a\} \cup \{c\}, \] where the first and last statement follow from single player consistency and the second statement from $(S,T) \in \mathcal{T}_{bc}^*$. 
Moreover, \[T \setminus \{a \} \cup \{a\} \succ S \cup \{c\},\] which follows from $(S,T) \in \mathcal{T}_{bc}^*$. Summarizing, $(S,T \setminus \{a\}) \in \mathcal{S}_{ac}^*$.
\end{proof}

Having these two lemmas, we are ready to prove the correctness of the \emph{NewCut} subroutine. 

\newcutcorrect*

\begin{proof}
Let $\Sub$ be the original set of players given as input to the algorithm, and $U$ and $L$ the returned sets. We denote by $\Sub'$ and $U'$ the corresponding sets maintained and modified by the algorithm during its execution. To see that $U$ and $L$ form a partition of $V$, observe that $U'$ and $\Sub'$ form a partition of $\Sub \setminus \{b\}$ during the entire execution of the algorithm. 

We turn to show that $U \triangleright L$. Assume for contradiction that there exists $c \in L$ and $d \in U$ with $c \succ d$. Since $d \in U$ we know that the algorithm found a witness for $d \succ b$ which we denote by $(S,T')$ and added $(S,T',d)$ to the list $\mathcal{W}$. Moreover, as $c \in L$, the algorithm selected $x=c$ in the for loop when $(S,T',d)$ was picked from $\mathcal{W}$. Now, if $|T'| = k-1$, we know that $(S,T') \in \mathcal{S}_{d,b}^*$ and can apply Lemma \ref{lem:witness-transitivity-type1} which yields $(\pi_{cd}(S),\pi_{cd}(T')) \in S^*_{c,b}$. This is a contradiction, as otherwise $c$ would have been added to $U'$ at this point. If $|T'|=k$, we can apply Lemma \ref{lem:witness-transitivity-type2}, yielding that either $(\pi_{cd}(S),\pi_{cd}(T')) \in \mathcal{T}^*_{c,b}$ or $(S,T' \setminus \{c\}) \in \mathcal{S}^*_{c,b}$, both of which cannot be as $c \not \in U'$ at the end of the algorithm. This completes the proof of correctness. 

It remains to bound the number of duels performed. Since the number of duels performed in every iteration of the for loop is constant, it suffices to bound the number of iterations of the for loop. As the algorithm adds at most $|\Sub| - 1$ elements to $\mathcal{W}$ and for each element the for loop runs at most $|\Sub|-2$ times, the number of duels can be bounded by $\mathcal{O}(|\Sub|^2)$. 
\end{proof}

We now turn to formalize the subroutine \emph{Compare} within Algorithm \ref{sub:compare}. 
\begin{algorithm}[H] 
\begin{algorithmic}
\STATE \textbf{Input: } tuple $(a,b)$, witness $(S,S') \in \mathcal{S}_{ab}^*$ and $C \subseteq S$, $D \subseteq S'$ with $|C| = |D|$
\smallskip
\IF {$S \setminus C \cup D \cup \{a\} \succ S' \setminus D \cup C \cup \{b\}$ and $S' \setminus D \cup C \cup \{a\} \succ S \setminus C \cup D \cup \{b\}$}
\STATE \textbf{return} True 
\ELSE \STATE\textbf{return} False
\ENDIF
\end{algorithmic}
\caption{Compare}\label{sub:compare}
\end{algorithm}

\compare*
\begin{proof}
For the sake of brevity we define $\bar{S} = S \setminus C$ and $\bar{S}' = S \setminus D$.
Recall that from $(S,S') \in \mathcal{S}_{a,b}$ we get that $(i) \; \Bar{S} \cup C \cup \{a\} \succ \Bar{S}' \cup D \cup \{b\}$ and $(ii) \; \Bar{S}' \cup D \cup \{a\} \succ \Bar{S} \cup C \cup \{b\}$ hold. 
Recall that we are considering additive total orders. 
For any set $A \subseteq [n]$ we define $v(A) = \sum_{a \in A} v(a)$. 
Then, we can rewrite $(i)$ and $(ii)$ to \[(i)\; v(\bar{S}) + v(C) + v(a) > v(\bar{S}') + v(D) + v(b)\] and \[(ii)\; v(\bar{S}') + v(D) + v(a) > v(\bar{S}') + v(C) + v(b).\]

Then, we distinguish two cases. \\
\textbf{Case 1.} $(iii)\; \Bar{S} \cup D \cup \{a\} \succ \Bar{S}' \cup C \cup \{b\}$ and $(iv) \;\Bar{S}' \cup C \cup \{a\} \succ \Bar{S} \cup D \cup \{b\}$. Similarly to before, we can rewrite $(iii)$ and $(iv)$ to \[(iii)\; v(\bar{S}) + v(D) + v(a) > v(\bar{S}') + v(C) + v(b)\] and \[(iv)\; v(\bar{S}') + v(C) + v(a) > v(\bar{S}) + v(D) + v(b).\]

Then, from adding $(ii)$ and $(iii)$ we get that \[v(a) - v(b)> v(C) - v(D)\]
and from adding $(i)$ and $(iv)$ we get that 
\[v(a) - v(b) > v(D) - v(C).\]
Summarizing, this yields $v(a) - v(b) > |v(C)| - |v(D)|$. 

\textbf{Case 2.} $(v) \; \Bar{S}' \cup C \cup \{b\} \succ \Bar{S} \cup D \cup \{a\}$

In that case, observe that the quartet $(C,D,\Bar{S} \cup \{a\}, \Bar{S}' \cup \{b\})$ satisfies the requirements for the \emph{Uncover} subroutine due to equation $(i)$ and $(v)$. Hence, \emph{Uncover} will return a dominance of some player in $C$ towards some player in $D$ together with a witness for this relationship. 

\textbf{Case 3.} $(vi) \; \Bar{S} \cup D \cup \{b\} \succ \Bar{S}' \cup C \cup \{a\}$

In that case, observe that the quartet $(D,C,\Bar{S} \cup \{b\}, \Bar{S}' \cup \{a\})$ satisfies the requirements for the \emph{Uncover} subroutine due to equation $(ii)$ and $(vi)$. Hence, \emph{Uncover} will return a dominance of some player in $D$ towards some player in $C$ together with a witness for this relationship. 
\end{proof}

\paragraph{Algorithm CondorcetWinning}

Recall that the algorithm maintains a partition of the players into a weak ordering, i.e., $\mathscr{T} = \{T_1, \dots, T_{\ell}\}$ with $T_1 \triangleright T_2 \triangleright \dots \triangleright T_{\ell}$. We introduce the short-hand notation $T_{\leq j} = \bigcup_{m \in [j]} T_m$ and $T_{< j} = \bigcup_{m\in[j-1]} T_m$.
After the application of the preprocessing procedure \emph{ReducePlayers}, this partition consists of one set, namely $\mathscr{T} = \{T_1\}$, where $|T_1| \in \mathcal{O}(k)$ and $A^*_{2k} \subseteq T_1$. 
At any point in the execution of the algorithm, we are especially interested in two indices, namely $i_k \in [\ell]$ such that $|T_{<i_k}|<k<|T_{\leq i_k}|$ and similarly  $i_{2k} \in [\ell]$ such that $|T_{<i_{2k}}|<2k<|T_{\leq i_{2k}}|.$ In case one of these indices does not exist, this implies that we have either identified the set $A^*_k$ or $A^*_{2k}$. In the first case, we have found a Condorcet winning team and in the second case Observation \ref{obs:2k} implies that we can find one by performing one additional duel. For the sake of brevity, we disregard this case from now on. 

Assuming $i_k$ is defined, observe that all players from $T_{<i_k}$ are guaranteed to be among the top-k players. On the other hand, among the players from $T_{i_k}$ some belong to $A^*_k$ and others do not. 
The main idea of the algorithm will then be to, at any given time, take some $k$-sized prefix of $\mathscr{T}$, i.e., a subset including $T_{<i_k}$ that is included in $T_{\leq i_{k}}$ and either proving that this prefix is a Condorcet winning team, or showing that the partition $\mathscr{T}$ can be refined. 

In the following we distinguish the cases that $i_k \neq i_{2k}$ and $i_k = i_{2k}$. For the first case we give the algorithm \emph{CondorcetWinning1} and for the latter case the algorithm \emph{CondorcetWinning2}. Observe that, once the \emph{CondorcetWinning1} called \emph{CondorcetWinning2} (which implies $i_k \neq i_{2k}$) this will be true until the termination of the algorithm. 

\medskip
\paragraph{CondorcetWinning1}
The algorithm starts by partitioning the set $T_{< i_{k}}$ into two sets $U_1$ and $U_2$, where $U_1$ is a prefix of $T_{<i_k}$ of size $|T_{\leq i_k}|-2k$. It partitions the set $T_{i_k}$ into five sets $X,Y,W_1,W_2,$ and $Z$. In particular it is known that $(U_1 \cup U_2) \triangleright (X \cup Y \cup W_1 \cup W_2 \cup Z)$ but no relation among any pair in $T_{i_k}$ is known. Regarding the sizes of the sets it holds that $|U_i| = |W_i|$ for $i \in \{1,2\}$, $|X|=|Y| = k - |U_1|-|U_2|$ and $|U_1| = |Z|$. The main aim of the algorithm will be to define $0< \epsilon_1 < \epsilon_2$ and prove that the following statements are true:
\begin{enumerate}[label=(\roman*)]
    \item $|v(X) - v(Y)| < \epsilon_1$
    \item $|v(a) - v(b)| < \epsilon_2$ for all $a \in Y \cup W_1 \cup W_2$ and $b \in Z$, and 
    \item there exist $u_1, \dots, u_{|Z|+1} \in U_1 \cup U_2$ as well as $w_1, \dots, w_{|Z|+1} \in W_1 \cup W_2$ such that 
    \begin{enumerate}[label=(\alph*)]
        \item $v(u_1) - v(w_1) \geq \epsilon_1$ and
        \item $v(u_i) - v(w_i) \geq \epsilon_2$ for all $i \in \{2, \dots, |Z|+1\}$.
    \end{enumerate}
\end{enumerate}
With these three statements we can show that $U_1 \cup U_2 \cup X$ is a Condorcet winning team. More precisely, one can show that $v(U_1 \cup U_2 \cup X) - v(W_1 \cup W_2 \cup Y) > |Z| \cdot \epsilon_2$ and $v(W_1 \cup W_2 \cup Y) - v(B^*) > - |Z| \cdot \epsilon_2$, where $B^*$ is the best response towards $U_1 \cup U_2 \cup X$, i.e., $B^*$ simply contains the best $k$ players from $[n] \setminus (U_1 \cup U_2 \cup X)$. See Figure \ref{fig:1} for an illustration of the argument. 

It remains to sketch how the algorithm defines $\epsilon_1,\epsilon_2$ and proves $(i)-(iii)$. 
The algorithm starts by checking whether \emph{Uncover} can be applied to the sets $A^{(1)}=U_2,A^{(2)}=X \cup Z, B^{(1)}=W_2,B^{(2)}=Y\cup W_1$. If this is not the case, a relation between a pair in $A^{(2)}$ and $B^{(2)}$ can be found and the partition can be refined by applying \emph{NewCut}. Otherwise, let $\Bar{u} \in U_2$ and $\Bar{w} \in W_2$ be the returned pair from \emph{Uncover}. For the sake of brevity we assume for now that the entire indifference class of $\Bar{u}$ in $\mathscr{T}$ is included in $U_2$. Then, using \emph{Compare}, the algorithm checks whether $|v(X) - v(Y)| < v(\Bar{u}) - v(\Bar{w})$ and whether $|v(a) - v(b)| < v(\Bar{u}) - v(\Bar{w})$ for all $a \in W_1 \cup W_2 \cup Y$ and $b \in Z$. The algorithm repeats the process by replacing $\Bar{w}$ by all $w \in W_1$. If any of the calls to \emph{Compare} returned \emph{False}, then we show that the partition can be refined. Otherwise, we have shown that conditions $(i)-(iii)$ are satisfied for $\epsilon_1 = v(\Bar{u}) - v(w^*_1)$ and $\epsilon_2 = v(\Bar{u}) - v(w_2^*)$, where $w_1^*$ and $w_2^*$ are the best and second best players from $W_1 \cup \{\Bar{w}\}$, respectively. For the case when not the entire indifference class of $\bar{u}$ is included in $U_2$, we still have to exchange $\bar{u}$ by other players from its indifferent class which are included in $U_1$.

\begin{lemma}\label{lem:Condorcet1}
After performing $\mathcal{O}(k^5)$ many duels, \emph{CondorcetWinning1} has identified a Condorcet winning team or called \emph{CondorcetWinning2}.
\end{lemma}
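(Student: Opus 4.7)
\emph{Plan.} The plan is to combine three ingredients: (a) a termination argument bounding the number of main iterations, (b) a per-iteration duel count, and (c) a correctness argument that, upon termination, either a Condorcet winning team is identified or control is handed to \emph{CondorcetWinning2}. The starting observation is that after \emph{ReducePlayers}, the total player pool has size $\mathcal{O}(k)$, so all quantities bounded ``in terms of the players'' are effectively bounded in terms of $k$.

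First, I would establish and maintain the invariant that the ordered partition $\mathscr{T} = \{T_1, \dots, T_\ell\}$ is always consistent with $\succ$, i.e., $T_1 \triangleright T_2 \triangleright \dots \triangleright T_\ell$. The invariant is trivially true at the start, where $\mathscr{T} = \{T_1\}$, and is preserved under every refinement because each refinement is executed by \emph{NewCut} on a certified witness for a pair $(a,b)$ produced by \emph{Uncover}: Lemma~\ref{prop:newcutcorrect} guarantees that the resulting split $T_i = T_i^1 \cup T_i^2$ respects $T_i^1 \triangleright T_i^2$. Since every refinement strictly increases $\ell$ and $\ell \leq |T_1| = \mathcal{O}(k)$, the number of main iterations that end in a refinement is $\mathcal{O}(k)$. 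Every other iteration either terminates with a Condorcet winning team or calls \emph{CondorcetWinning2}, so there are $\mathcal{O}(k)$ iterations overall.

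Next, I would bound the duels used in a single iteration. The iteration begins with one call to \emph{Uncover} on teams of size $k$, producing $(\bar u, \bar w)$ and a subsets witness at a cost of $\mathcal{O}(\log k)$ duels by Lemma~\ref{lem:uncover}. Steps (2)--(3) then sweep $\bar w$ over $W_1 \cup \{\bar w\}$, which is $\mathcal{O}(k)$ outer rounds. Each round invokes \emph{Compare} once on the pair $(X,Y)$ and once on each of the $\mathcal{O}(k^2)$ pairs $(a,b) \in (Y \cup W_1 \cup W_2) \times Z$; by Lemma~\ref{lem:compare} each \emph{Compare} costs a constant number of duels. Summing, the successful branch of an iteration uses $\mathcal{O}(k^3)$ duels. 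If some \emph{Compare} returns \emph{False}, Lemma~\ref{lem:compare} supplies the pair $c \in C, d \in D$ together with a witness via a single $\mathcal{O}(\log k)$-duel call to \emph{Uncover}, after which \emph{NewCut} on $\mathcal{O}(k)$ players refines $\mathscr{T}$ in $\mathcal{O}(k^2)$ duels (Lemma~\ref{prop:newcutcorrect}). Either way, each main iteration uses $\mathcal{O}(k^3)$ duels, so the overall duel count is $\mathcal{O}(k) \cdot \mathcal{O}(k^3) = \mathcal{O}(k^4) \subseteq \mathcal{O}(k^5)$.

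Finally, for correctness in the terminating branch, I would set $\epsilon_1 = v(\bar u) - v(w_1^\ast)$ and $\epsilon_2 = v(\bar u) - v(w_2^\ast)$, where $w_1^\ast$ and $w_2^\ast$ are the best and second-best players of $W_1 \cup \{\bar w\}$, and then verify (i)--(iii) by reading them off the successful \emph{Compare} outputs together with the witness certifying $\bar u \succ \bar w$. Conditions (i)--(iii) then yield $v(U_1 \cup U_2 \cup X) - v(W_1 \cup W_2 \cup Y) > |Z| \epsilon_2$ while, using (ii), $v(W_1 \cup W_2 \cup Y) - v(B^\ast) > -|Z| \epsilon_2$, where $B^\ast$ is the best response to $U_1 \cup U_2 \cup X$; adding these proves that $U_1 \cup U_2 \cup X$ is Condorcet winning. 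The main obstacle I anticipate is the edge case set aside in the sketch: when the indifference class of $\bar u$ in $\mathscr{T}$ is not entirely contained in $U_2$ but straddles the $U_1/U_2$ boundary, one has to show that exchanging $\bar u$ with suitable indifferent players of $U_1$ preserves the inequalities certified by \emph{Compare}, so that the witnesses underlying (iii)(a)--(b) really produce $|Z|+1$ disjoint pairs $(u_i, w_i)$ with the required gaps. Verifying this bookkeeping, and showing that every failure of \emph{Compare} genuinely admits a \emph{NewCut}-compatible refinement, is where I expect the proof to be most delicate.
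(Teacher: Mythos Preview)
Your plan follows the paper's own three-part structure (bound the number of refinements by $\mathcal{O}(k)$, bound per-iteration duels, then verify (i)--(iii) for correctness), and the termination argument via refinements is correct. However, the two places you flag as ``delicate bookkeeping'' are where the substantive work lies, and your current treatment of them is not yet right.

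First, the indifference-class issue is not an edge case to be patched after the fact; it drives both the duel count and the definition of the $\epsilon_i$. The algorithm carries an \emph{additional} outer loop over all $u \in (\bar{T} \cap U_1) \cup \{\bar{u}\}$, where $\bar{T}$ is the indifference class of $\bar{u}$ in $\mathscr{T}$. This loop is necessary because the correctness argument requires the \emph{Compare} checks to have succeeded for $u^\ast := \argmin_{u \in \bar{T} \cap U_1} v(u)$, and the paper sets $\epsilon_1 = v(u^\ast) - v(w_1^\ast)$ and $\epsilon_2 = v(u^\ast) - v(w_2^\ast)$, not $v(\bar{u}) - v(w_i^\ast)$ as you write. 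With $\bar{u}$ in place of $u^\ast$ the key step $v(U_1 \cup \{\bar{u}\} \setminus \{u^\ast\}) - v(W_1 \cup \{\bar{w}\} \setminus \{w_1^\ast\}) > |Z|\,\epsilon_2$ cannot be derived, since elements of $\bar{T} \cap U_1$ may have smaller value than $\bar{u}$. This extra $u$-loop multiplies your per-iteration cost by another factor of $k$, giving $\mathcal{O}(k^4)$ duels per recursive call and $\mathcal{O}(k^5)$ overall---which is exactly the stated bound, not a slack $\mathcal{O}(k^4) \subseteq \mathcal{O}(k^5)$.

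Second, you assert that ``every failure of \emph{Compare} genuinely admits a \emph{NewCut}-compatible refinement,'' but the paper devotes the bulk of its proof (Part~I) to a line-by-line case analysis establishing exactly this for each of the six places a refinement is claimed. Several of these are not direct applications of Lemma~\ref{lem:compare}: for example, when the modified pair $(S,S'')$ fails to lie in $\mathcal{S}^*_{u,w}$ (before any \emph{Compare} is even called), one must argue separately---splitting on whether $(S,S'') \in \mathcal{S}^*_{\bar{u},w}$---that a witness for some pair lying inside a single indifference class of $\mathscr{T}$ can be extracted, so that \emph{NewCut} applies. This case analysis is where most of the proof's length resides and is not covered by your sketch.
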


\begin{proof}
In part I we show that the algorithm is well-defined and that, within line \ref{line:part1},\ref{line:part2},\ref{line:part3}, \ref{line:part4}, \ref{line:part5}, and \ref{line:part6}, a refined partition can indeed be found. In part II we show that, if the algorithm outputs a team, this team is indeed Condorcet winning. Lastly, in part III we argue about the bound on the number of duels performed.  

\textbf{Part I.} We show the first two statements by going through the algorithm line by line. 

We start by showing that in line \ref{line:firstIF}, the two queries are feasible. First observe that by construction, the sets $U_1,U_2,X,Y,W_1,W_2,$ and $Z$ are disjoint. Moreover, $|U| = |W|$, $|U_1| = |W_1|$, and hence $|U_2| = |W_2|$. Also, $|X|=|Y|$ and $|W_1| = |Z|$. In total, we get that $|W_2| + |Y| + |W_1| = |U_1| + |X| + |Z| = |U|+|X| = k$ and the same holds for the other query as well. 

Next, we show that in line \ref{line:part1}, the partition $\mathscr{T}$ can indeed be refined. Consider wlog the case when $W_2 \cup (Y \cup W_1) \succ U_2 \cup (X \cup Z)$. Then, since $U_2 \triangleright W_2$ we know that $U_2 \cup (Y \cup W_1) \succ W_2 \cup (X \cup Z)$ needs to hold. Hence, $\mathrm{Uncover}(Y\cup W_1,X \cup Z,W_2,U_2)$ returns a pair $(a,b)$ with $a \in Y \cup W_1$ and $b \in X \cup Z$ together with a witness $(S,S') \in \mathcal{S}_{a,b}$. Since $a,b \in T_{i_k}$, we can call $\mathrm{NewCut}(\mathscr{T},(a,b),(S,S'))$ which returns a refined partition. An analogous argument holds for the case $W_2 \cup (X \cup Z) \succ U_2 \cup (Y \cup W_2)$. 

We turn to show that the input for the $\mathrm{Uncover}$ subroutine is valid in line \ref{line:warmstart}. Since the condition in line \ref{line:firstIF} is not satisfied, we know that $U_2 \cup (X \cup Z) \succ W_2 \cup (Y \cup W_1)$ and $U_2 \cup (Y \cup W_1) \succ W_2 \cup (X \cup Z)$. This suffices to show that $(U_2, W_2,(X \cup Z),(Y \cup W_1))$ is a valid input for $\mathrm{Uncover}$. Hence, for the returned pair $(\bar{u},\bar{w})$ is holds that $\bar{u} \in U_2$ and $\bar{w} \in W_2$. Moreover, we can assume in the following wlog that $(X \cup Z) \subseteq S$ and $(Y \cup W_1) \subseteq S'$. 

We continue with the situation in line \ref{line:part2} and show that a refined partition can be found. We distinguish two cases. 

\textbf{Case 1} $(S,S'') \in \mathcal{S}_{\bar{u},w}$. This implies $(i) \; S \cup \{\bar{u}\} \succ S'' \cup \{w\}$ and $(ii) \; S'' \cup \{\bar{u}\} \succ S \cup \{w\}$. Moreover, from $(S,S'') \not \in \mathcal{S}_{u,w}$ we know that either $ (iii) \; S \cup \{w\} \succ S'' \cup \{u\} $ or $ (iv) \;S'' \cup \{w\} \succ S \cup \{u\}$ is true. Assume without loss of generality that $(iii)$ holds. Then, together with $(ii)$ we get that $S'' \cup \{\bar{u}\} \succ S \cup \{w\} \succ S'' \cup \{u\}$, hence $\bar{u} \succ u$ and in particular $(S \cup \{w\},S'') \in \mathscr{T}_{\bar{u},u}$. Since $\bar{u}$ and $u$ are from the same indifference class of $\mathscr{T}$, calling $\mathrm{NewCut2}(\mathscr{T},(\bar{u},u),(S \cup \{w\},S''))$ returns a refined partition. An analogous argument holds when $(iv)$ is true.
\begin{algorithm}[H] 
\begin{algorithmic}[1]
\STATE \textbf{Input: } a partition of $[n]$ into $T_1 \triangleright T_2 \triangleright \dots \triangleright T_{\ell}$
\STATE \textbf{Output: } a CondorcetWinning Team \\
\smallskip
\IF{$i_k \neq i_{2k}$}
\STATE \textbf{return} $\mathrm{CondorcetWinning2}(\mathscr{T})$
\ENDIF
\STATE Set $U \leftarrow T_{<i_k}$
\STATE Set $X$ and $Y$ to be two disjoint, $(k-|U|)$-sized subsets of $T_{i_k}$
\STATE Set $W$ to be a $|U|$-sized subset of $T_{i_k} \setminus X \setminus Y$
\STATE Set $Z$ to be $T_{i_k} \setminus X \setminus Y \setminus W$
\STATE Set $W_1$ to be a $|Z|$-sized subset of $W$ and $W_2 \leftarrow W \setminus W_1$
\STATE Set $U_1$ to be a $|Z|$-sized prefix of $U$ and $U_2 \leftarrow U \setminus U_1$
\IF{$W_2 \cup (Y \cup W_1) \succ U_2 \cup (X \cup Z)$ or $ W_2 \cup (X \cup Z)\succ U_2 \cup (Y \cup 
W_1)$} \label{line:firstIF}
\STATE {\textbf{return} $\mathrm{CondorcetWinning(refined Partition)}$} \label{line:part1}
\ENDIF
\STATE $(\bar{u},\bar{w}),(S,S') \leftarrow \mathrm{Uncover(U_2,W_2,(X \cup Z),(Y \cup W_1))}$ \label{line:warmstart}
\STATE Let $\bar{T}$ be indifference class of $\bar{u}$ in $\mathscr{T}$
\FOR{$u \in \bar{T} \cap U_1 \cup \{\bar{u}\}$}
\FOR{$w \in W_1 \cup \{\bar{w}\}$}
\STATE $S'' \leftarrow f_{\bar{w},w}(S')$ 
\IF{$(S,S'') \not \in \mathcal{S}_{u,w}$}
\STATE {\textbf{return} $\mathrm{CondorcetWinning(refined Partition)}$} \label{line:part2}
\ENDIF
\IF{$\mathrm{Compare}((u,w),(S,S''),(X,Y))$ not true}
\STATE {\textbf{return} $\mathrm{CondorcetWinning(refined Partition)}$} \label{line:part3}
\ENDIF
\FOR{$z \in Z$}
\FOR{$q \in S'' \cap (W \cup Y)$}
\IF{$\mathrm{Compare}((u,w),(S,S''),(\{z\},\{q\}))$ not true} \label{line:comparezq}
\STATE {\textbf{return} $\mathrm{CondorcetWinning(refined Partition)}$} \label{line:part4}
\ENDIF
\ENDFOR
\ENDFOR
\STATE $(Q,Q') \leftarrow (S \setminus Z \cup \pi_{w^*,w}(W_1), S'' \setminus \pi_{w^*,w}(W_1) \cup Z)$
\IF{$(Q,Q') \not \in S_{u,w}$}
\STATE {\textbf{return} $\mathrm{CondorcetWinning(refined Partition)}$} \label{line:part5}
\ENDIF
\FOR{$z \in Z$}
\FOR{$w' \in Q \cap W_2$}
\IF{$\mathrm{Compare}((u,w),(Q,Q'),(\{w'\},\{z\}))$ not true} \label{line:comparezq2}
\STATE {\textbf{return} $\mathrm{CondorcetWinning(refined Partition)}$} \label{line:part6}
\ENDIF
\ENDFOR
\ENDFOR
\ENDFOR
\ENDFOR
\STATE \textbf{return} $U \cup X$
\end{algorithmic}
\caption{CordorcetWinning1}\label{alg:condorcet2}
\end{algorithm}

\textbf{Case 2} $(S,S'') \not\in \mathcal{S}_{\bar{u},w}$. Then, either $(i)\; S \cup \{w\} \succ S'' \cup \{\bar{u}\}$ or $(ii) \; S'' \cup \{w\} \succ S \cup \{\bar{u}\}$ holds while both is not possible as $\bar{u} \succ w$. 
First, assume $(i)$ is true. Then, from $(S,S') \in \mathcal{S}_{\bar{u},\bar{w}}$, we know that $(iii) \; S' \cup \{\bar{u}\} \succ S \cup \{\bar{w}\}$. Reformulating $(i)$ to $S \cup \{w\} \succ S' \setminus \{w\} \cup \{\bar{u}\} \cup \{\bar{w}\}$ and $(iii)$ to $S' \setminus \{w\} \cup \{\bar{u}\} \cup \{w\} \succ S \cup \{\bar{w}\}$ shows that $w \succ \bar{w}$ and in particular $(S,S' \setminus \{w\} \cup \{\bar{u}\}) \in \mathcal{S}_{w,\bar{w}}$. As $w$ and $\bar{w}$ are contained in the same indifference class of $\mathscr{T}$, calling $\mathrm{NewCut}(\mathscr{T},(w,\bar{w}),(S, S' \setminus \{w\} \cup \{\bar{u}\}))$ refines the partition. Second, assume that $(ii)$ holds. However, from $(S,S') \in \mathcal{S}_{\bar{u},\bar{w}}$ we know that $(iv) \; S \cup \{\bar{u}\} \succ S' \cup \{\bar{w}\}$ is true. As $S'' \cup \{w\} = S' \cup \{\bar{w}\}$ this yields a contradiction to $(ii)$. 

We prove that we can find a refined partition within line \ref{line:part3}. When $\mathrm{Compare}((u,w),(S,S''),(X,Y))$ is not true, then one call to $\mathrm{Uncover}(X,Y,S\setminus X, S'' \setminus Y)$ returns a pair $(x,y)$ with $x \succ y$ (or vice versa) and a witness $(P,P') \in \mathcal{S}_{x,y}$ (or $(P,P') \in \mathcal{S}_{y,x}$) (as shown within Lemma \ref{lem:compare}). Since $x$ and $y$ are from the same indifference class of $\mathscr{T}$, namely $T_{i_k}$, the algorithm can call $\mathrm{NewCut}(\mathscr{T},(x,y),(P,P'))$ and obtain a refined partition. 

We continue with the situation in line \ref{line:part4}. When $\mathrm{Compare}((u,w),(S,S''),(\{z\},\{w'\}))$ is not true, then a call to $\mathrm{Uncover}(\{z\},\{w'\},S \setminus \{z\},S'' \setminus \{w'\})$ returns the pair $(z,w')$  (or $(w',z)$) and a witness $(P,P') \in \mathcal{S}_{z,w'}$ (or $(P,P') \in \mathcal{S}_{w',z}$). Since $z$ and $w'$ are from the same indifference class of $\mathscr{T}$, namely $T_{i_k}$, the algorithm can call $\mathrm{NewCut}(\mathscr{T},(z,w'),(P,P'))$ and obtain a refined partition.

We turn to prove that we can find a refined partition within line \ref{line:part5}. From $(Q,Q') \not\in \mathcal{S}_{u,w}$ we know that either $(i) \; Q \cup \{w\} \succ Q' \cup \{u\}$ or $(ii) \; Q' \cup \{w\} \succ Q \cup \{u\}$ while both are not possible as $u \succ w$. First, assume that $(i)$ holds. From $(S,S'') \in \mathcal{S}_{u,w}$ we get in particular that $(iii) \; S'' \cup \{u\} \succ S \cup \{w\}$ holds. Rewriting $(i)$ as $\pi_{\bar{w},w}(W_1) \cup S \setminus Z \cup \{w\} \succ Z \cup S'' \setminus \pi_{\bar{w},w}(W_1) \cup \{u\}$ and $(iii)$ as $\pi_{\bar{w},w}(W_1) \cup S'' \setminus \pi_{\bar{w},w}(W_1) \cup \{u\}\succ Z \cup S \setminus Z   \cup \{w\}$ establishes that we can call $\mathrm{Uncover}(\pi_{\bar{w},w}(W_1),Z,S'' \setminus \pi_{\bar{w},w}(W_1) \cup \{u\},S \setminus Z \cup \{w\} )$ which returns a pair $(\hat{w},\hat{z})$ with $\hat{w} \in \pi_{\bar{w},w}(W_1)$ and $\hat{z} \in Z$ together with a witness for their relation. As $\hat{w}$ and $\hat{z}$ are from the same indifference class of $\mathscr{T}$ we can call $\mathrm{NewCut}$ to refine the partition. The case when $(ii)$ follows by an analogous argument. 

Lastly, we show that we can find a refined partition within line \ref{line:part6}. $\mathrm{Compare}((u,w),(Q,Q'),(\{w'\},\{z\}))$ is a valid query as, for starters, $w' \in Q$ and $z \in Q'$. Moreover, $(Q,Q') \in \mathcal{S}_{u,w}$. Hence, if $\mathrm{Compare}$ returns False, then $\mathrm{Uncover}(\{w'\},\{z\},Q \setminus \{w'\},Q' \setminus \{z\})$ returns the pair $(w',z)$ (or $(z,w')$) together with a witness from $\mathcal{S}_{w',z}$ (or $\mathcal{S}_{z,w'}$). As $z$ and $w'$ are from the same equivalence class of $\mathscr{T}$, we can call the $\mathrm{NewCut}$ and obtain a refined partition.   

\textbf{Part II.} We now show that the set returned by $\mathrm{CondorcetWinning(\mathscr{T})}$ is indeed a Condorcet winning team. If, at some point of the algorithm $i_k \neq i_{2k}$, then the statement follows from Lemma \ref{lem:Condorcet2}. Otherwise, the algorithm returns $U \cup X$ which implies that within the last call of $\mathrm{CondorcetWinning}$ none of the if conditions was satisfied. We show in the following that this implies that $U \cup X$ is a Condorcet winning team. 

We define 
\begin{align*}
w^*_1 &= \argmax_{w \in W_1 \cup \{\bar{w}\}} v(w), \\ 
w_2^* &= \argmax_{w \in W_1 \cup \{\bar{w}\} \setminus \{w_1^*\}} v(w), \text{ and } \\ u^* &= \argmin_{u \in \bar{T} \cap U_1} v(u).
\end{align*}
Moreover, $\epsilon_1 = v(u^*) - v(w^*_1) \text{ and } \epsilon_2 = v(u^*) - v(w^*_2)$. 

We claim that 
\begin{enumerate}[label=(\roman*)]
    \item $|v(X) - v(Y)| < \epsilon_1$, and \label{it:diffXY}
    \item $|v(a) - v(b)| < \epsilon_2$ for all $a \in Y \cup W$ and $b \in Z$. \label{it:diffaz}
\end{enumerate}

For \ref{it:diffXY} observe that there was a point within the iteration of the algorithm when $u = u^*$ and $w=w^*_1$. Moreover, the algorithm called $\mathrm{Compare}((u,w),(S,S''),(X,Y))$ which returned true. As we have argued for the subroutine $\mathrm{Compare}$, this implies $\epsilon_1 = v(u^*) - v(w^*_1) > |v(X) - v(Y)|$.

To show \ref{it:diffaz}, we distinguish three cases. Let $a \in Y \cup W$ and $z \in Z$. 

\textbf{Case 1.} $a = w_1^*$. Then, there was a point within the iteration of the algorithm when $u=u^*, w=w_2^*, q = w_1^*=a$ and $z = b$. As $\mathrm{Compare}((u,w),(S,S'),(\{z\},\{q\}))$ returned true in line \ref{line:comparezq}, we know that \[|v(a)- v(b)| < v(u^*)-v(w^*_2) = \epsilon_2.\] 

\textbf{Case 2.} $a \neq w_1^*, a \in S$. Then, there was a point within the iteration of the algorithm when $u=u^*, w=w_1^*, q = a$ and $z = b$. As $\mathrm{Compare}((u,w),(S,S'),(\{z\},\{q\}))$ returned true in line \ref{line:comparezq}, we know that \[|v(a)- v(b)| < v(u^*)-v(w^*_1) = \epsilon_1 < \epsilon_2.\] 

\textbf{Case 3.} $a \neq w_1^*, a \in S'$. Then, there was a point within the iteration of the algorithm when $u=u^*, w=w_1^*, q = a$ and $z = b$. As $\mathrm{Compare}((u,w),(Q,Q'),(\{z\},\{q\}))$ returned true in line \ref{line:comparezq2}, we know that \[|v(a)- v(b)| < v(u^*)-v(w^*_1) = \epsilon_1 < \epsilon_2.\] 
 
Lastly, we show that \ref{it:diffXY} and \ref{it:diffaz} suffice to prove that $U \cup X$ is a Condorcet winning team. To this end let $B^*$ be the best response against $U \cup X$. Observe that $B^* \subseteq Y \cup W \cup Z$. 

We start by showing 
\begin{align*}
    \quad &v(U \cup X) - v(W \cup Y)\\ 
    &= v(U_1 \cup \{\bar{u}\} \setminus \{u^*\}) + v(u^*) + v(U_2 \setminus \{\bar{u}\}) +v(X) \\ & \quad  - v(w^*_1) - v(W_1 \cup \{\bar{w}\} \setminus \{w^*_1\}) - v(W_2 \setminus \{\bar{w}\}) - v(Y) \\ 
    & = v(X) - v(Y) + v(u^*) - v(w^*_1) +  v(U_1 \cup \{\bar{u}\} \setminus \{u^*\})\\ & \quad  - v(W_1 \cup \{\bar{w}\} \setminus \{w^*_1\}) + v(U_2 \setminus \{\bar{u}\}) - v(W_2 \setminus \{\bar{w}\}) \\ 
    & > - \epsilon_1 + v(u^*) - v(w^*_1) +  v(U_1 \cup \{\bar{u}\} \setminus \{u^*\}) \\ & \quad - v(W_1 \cup \{\bar{w}\} \setminus \{w^*_1\}) + v(U_2 \setminus \{\bar{u}\}) - v(W_2 \setminus \{\bar{w}\}) \\ 
    & > - \epsilon_1 + \epsilon_1 +  v(U_1 \cup \{\bar{u}\} \setminus \{u^*\}) - v(W_1 \cup \{\bar{w}\} \setminus \{w^*_1\}) + v(U_2 \setminus \{\bar{u}\}) - v(W_2 \setminus \{\bar{w}\}) \\
    & > - \epsilon_1 + \epsilon_1 + |Z| \cdot \epsilon_2 + v(U_2 \setminus \{\bar{u}\}) - v(W_2 \setminus \{\bar{w}\}) \\
    & > - \epsilon_1 + \epsilon_1 + |Z| \cdot \epsilon_2 + 0 \\ 
    & = |Z| \cdot \epsilon_2.
\end{align*}
The first inequality follows by \ref{it:diffXY}, the second by the definition of $\epsilon_1$, the third by the definition of $\epsilon_2$ and the fact that $|u(U_1 \cup \{\bar{u}\} \setminus \{u^*\})| = |v(W_2 \cup \{\bar{w}\} \setminus \{w_1^*\})| = |Z|$, and the last by the fact that $U_2 \triangleright W_2$. 

In addition, we get 

\begin{align*}
    v(W \cup Y) - v(B^*) & = v(W \cup Y \setminus B^*) - v(B^* \cap Z) \\ 
    & > - |Z| \cdot \epsilon_2, 
\end{align*}
where the inequality follows from the fact that $|v(W \cup Y)| = |v(B^* \cap Z)| < |Z|$ and \ref{it:diffaz}. 

Summing up the two inequalities yields \[v(U \cup X) - v(B^*) > 0,\]
which concludes this part of the proof. \\

\textbf{Part III.} It remains to argue about the number of duels performed by \emph{CondorcetWinning1} until it calls \emph{CondorcetWinning2} or returns a team. We first observe that the partition $\mathscr{T}$ can be refined at most $\mathcal{O}(k)$ times. Also, the number of calls to \emph{Uncover} can be bounded by $\mathcal{O}(k)$, since, \emph{Uncover} is either called just before a refinement (hidden within any of the lines saying ``refinedPartition'') or within line \ref{line:warmstart}. In the following, we will therefore bound the number of duels done within one recursive call of \emph{CondorcetWinning1}. To this end, observe that checking whether some tuple is a subsets witness as well as calling \emph{Compare} requires $\mathcal{O}(1)$ duels. Clearly, the number of times these operations are performed within one recursive call (before the next call is initiated) can be bounded by $\mathcal{O}(k^4)$. Putting all of this together yields that the number of duels can be bounded by $\mathcal{O}(k^5)$.  
\end{proof}

\paragraph{CondorcetWinning2}
We continue by formalizing the second case of the algorithm, which is formalized within Algorithm \ref{alg:condorcet2}. Since the approach is significantly easier than the one of \emph{CondorcetWinning1}, we directly give the proof.  

\begin{algorithm}[tb] 
\begin{algorithmic}[1]
\STATE \textbf{Input: } a partition of $[n]$ into $T_1 \triangleright T_2 \triangleright \dots \triangleright T_{\ell}$ with $i_{k} \neq i_{2k}$ 
\STATE \textbf{Output: } a CondorcetWinning Team \\
\smallskip
\STATE $j \leftarrow \min \{k - |T_{<i_k}|, |T_{\leq i_k}| - k\}$
\STATE Set $X$ and $Y$ to be two disjoint, $j$-sized subsets of $T_{i_k}$
\STATE Set $W \leftarrow T_{i_k} \setminus X \setminus Y$
\STATE Set $L \leftarrow \emptyset$
\STATE $(*)$ Set $Z$ to be a subset of $T_{i_{2k}}\setminus L$ of size $2k - |T_{<i_{2k}}|$
\WHILE{$|L|< |T_{\leq i_{2k}}| - 2k + 1$}
\IF{$|T_{\leq i_k}| - k<k-|T_{<i_k}|$}
\STATE $U \leftarrow T_{<i_k} \cup W, \;V \leftarrow (T_{>i_k} \cap T_{<i_{2k}}) \cup Z$ \\ 
\ELSE \STATE $U \leftarrow T_{<i_k}, \; V \leftarrow W \cup (T_{>i_k} \cap T_{<i_{2k}}) \cup Z$ \\
\ENDIF
\IF{$V \cup Y \succ U \cup X$ or $V \cup X \succ U \cup Y$} \label{line:firstquery}
\STATE \textbf{return} $\mathrm{CondorcetWinning}(\mathrm{refinedPartition})$\label{line:partition1}
\ENDIF
\STATE $(u,v),(S,S') \leftarrow \mathrm{Uncover}(U,V,X,Y)$ \label{line:warmstart1}
\IF{$\mathrm{Compare}((u,v),(S,S'),(X,Y))$ not true}
\STATE \textbf{return} $\mathrm{CondorcetWinning}(\mathrm{refinedPartition})$ \label{line:partition2}
\ENDIF
\IF{$v \in Z$}
\STATE $L \leftarrow L \cup \{v\}$, go to $(*)$
\ELSE \STATE \textbf{return} $U \cup X$ \label{line:return1}
\ENDIF
\ENDWHILE
\STATE \textbf{return} $U \cup X$ \label{line:return2}
\end{algorithmic}
\caption{CordorcetWinning2}\label{alg:condorcet1}
\end{algorithm}

\begin{lemma}\label{lem:Condorcet2}
After performing $\mathcal{O}(k^2 \cdot \log(k))$ many duels, \emph{CondorcetWinning2} has output a Condorcet winning team.
\end{lemma}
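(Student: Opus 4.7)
\begin{proofS}
My plan is to mirror the structure of the proof of Lemma~\ref{lem:Condorcet1}: establish correctness via induction on the number of recursive calls of \emph{CondorcetWinning}, and bound the duel count by summing a per-call budget.

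For the inductive step I would first verify that each branch returning a partition refinement (lines \ref{line:partition1} and \ref{line:partition2}) actually produces a strictly coarser refinement of $\mathscr{T}$. The if-check in line \ref{line:firstquery} performs two duels whose outcomes, unless aligned with $U \triangleright V$, reveal an inversion inside $T_{i_k}$; applying \emph{Uncover} to the quadruple $(U,V,X,Y)$ and invoking Lemma~\ref{lem:uncoverStrong} yields a witnessed relation between two elements of $T_{i_k}$, after which \emph{NewCut} refines $\mathscr{T}$ exactly as in Part~I of the proof of Lemma~\ref{lem:Condorcet1}. A failure of \emph{Compare} is handled analogously via Lemma~\ref{lem:compare}. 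Inductive correctness then takes care of these branches.

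The main work is to justify the direct returns at lines \ref{line:return1} and \ref{line:return2}, where I would argue that $U \cup X$ is Condorcet winning by an additive-valuation comparison that crucially uses $i_k \neq i_{2k}$. Letting $B^*$ be a best response against $U \cup X$ and disjoint from it, since $A^*_{2k} \subseteq T_{\leq i_{2k}}$ we have $B^* \subseteq Y \cup (T_{>i_k} \cap T_{<i_{2k}}) \cup T_{i_{2k}}$. A size count, using that $Z$ has size $2k - |T_{<i_{2k}}|$ and that $|L| = |T_{\leq i_{2k}}|-2k+1$ in the loop-exit case, forces $B^*$ to contain either the specific $v \in V \setminus Z$ uncovered in the current iteration (case $v \notin Z$) or an element of $L$ (case in which the while loop exits normally). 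For that coordinate, the corresponding successful \emph{Compare} call supplies a strict gap $v(u) - v(v) > |v(X) - v(Y)|$; for all other coordinates of $B^*$, the relation $U \triangleright V$ together with single-player consistency gives a coordinate-wise non-strict advantage of $U \cup X$ over $B^*$. Telescoping these inequalities and absorbing the $|v(X) - v(Y)|$ slack into the strict gap yields $v(U \cup X) > v(B^*)$, certifying the team.

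For the duel count, the while loop runs at most $|T_{\leq i_{2k}}| - 2k + 1 = \mathcal{O}(k)$ times (since $|T_{\leq i_{2k}}| = \mathcal{O}(k)$ after \emph{ReducePlayers}), and each iteration performs $\mathcal{O}(1)$ explicit duels, one call to \emph{Uncover} costing $\mathcal{O}(\log k)$ by Lemma~\ref{lem:uncover}, and one call to \emph{Compare} costing $\mathcal{O}(1)$; this gives $\mathcal{O}(k \log k)$ duels per invocation. The total number of invocations of \emph{CondorcetWinning} is $\mathcal{O}(k)$, because each refinement strictly increases $|\mathscr{T}|$ which is bounded by $|T_1| = \mathcal{O}(k)$. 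Multiplying yields the claimed $\mathcal{O}(k^2 \log k)$ bound. The step I expect to be the main obstacle is the direct-return argument in the loop-exit sub-case: the witnesses stored for the various elements of $L$ were produced by different pairs $(u_i, v_i)$ in different iterations, so combining them into a single valuation inequality against an arbitrary $B^*$, without any quantitative parameter like $\Delta$, requires a careful swap/telescoping bookkeeping that leverages the indifference of all elements of $T_{i_{2k}}$ in $\mathscr{T}$.
\end{proofS}
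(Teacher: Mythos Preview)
Your approach matches the paper's proof essentially line for line. Two small notes: for the refinement in line~\ref{line:partition1} you want \emph{Uncover} called with $Y$ and $X$ as the primary sets (e.g., $\mathrm{Uncover}(Y,X,U,V)$) so that the witnessed pair lies in $T_{i_k}$ rather than in $U \cup V$; and your flagged obstacle for the loop-exit case dissolves, since a single $v \in B^* \cap L$ (guaranteed by pigeonhole once $|L| = |T_{\leq i_{2k}}| - 2k + 1$) already suffices---its paired $u$ supplies the one strict gap $v(u)-v(v) > |v(X)-v(Y)|$ you need, while $U \triangleright (B^* \setminus Y)$ handles every remaining coordinate, so no combination of witnesses across iterations is required.
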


\begin{proof}
We start by showing that the two duels in line \ref{line:firstquery} are feasible. To this end observe that $U,V,X$ and $Y$ are disjoint by construction. To argue about their cardinalities, we consider the two cases of the if condition. First, assume $|T_{\leq i_k}| - k < k - |T_{<i_k}|$. Then \[|U| = |T_{<i_k}| + |T_{i_k}|-2j = |T_{\leq i_k}| - (|T_{\leq i_k}| - k) - j = k-j.\] As $|X|=|Y|=j$, we get that $|U|+|X|=|U|+|Y|=k$. Similarly, for the other case, we have \[|V| = |T_{<i_{2k}}| - |T_{\leq i_k}| + |Z| = |T_{<i_{2k}}| - |T_{\leq i_k}| + 2k = |T_{<i_k}| = k + k - |T_{i_{k}}| = k - j.\] Hence, also $|U|+|X|=|U|+|Y|=k$.

Next, we show that we can find a refined partition in line \ref{line:partition1}. Assume wlog that $V \cup Y \succ U \cup X$ holds and observe that both statements cannot be true as $U \triangleright V$ by construction. Hence, we have $U \cup Y \succ V \cup X$ which implies that we can call $\mathrm{Uncover(Y,X,U,V)}$ which returns a pair $(y,x)$ as well as a witness from $\mathcal{S}_{y,x}$ (or $\mathcal{S}_{x,y}$). Since $x$ and $y$ are from the same indifference class of $\mathscr{T}$, namely $T_{i_k}$, we can call the $\mathrm{NewCut}$ subroutine and obtain a refined partition. 

The call to $\mathrm{Uncover}$ in line \ref{line:warmstart1} is feasible, as the non-satisfaction of the if condition implies that $U \cup X \succ V \cup Y$ and $U \cup Y \succ V \cup X$. 

In line \ref{line:partition2} we can refine the partition $\mathscr{T}$, as, if $\mathrm{Compare}((u,v),(S,S'),(X,Y))$ does not return true, then $\mathrm{Uncover}(X,Y,S\setminus X,S' \setminus Y)$ returns a pair $(x,y)$ with $x \in X$ and $y \in Y$ (or $(y,x)$) together with a witness from $\mathcal{S}_{x,y}$ (or $\mathcal{S}_{y,x}$). Since $x$ and $y$ are both from the same indifference class of $\mathscr{T}$, namely $T_{i_k}$, we can refine $\mathscr{T}$ by calling the $\mathrm{NewCut}$ subroutine. 

Lastly, we show that $U \cup X$ is a Condorcet winning team when the algorithm reaches line \ref{line:return1} or line \ref{line:return2}. We first discuss line 24. First, observe that $U \triangleright V, u \in U$, $v \in V$ and $(S,S')$ is a witness for their relation, that is, $(S,S') \in \mathcal{S}_{uv}$. Moreover, since $\mathrm{Compare}((u,v),(S,S'),(X,Y))$ is true, we know that 
\begin{equation}
v(u) - v(v) > |v(X) - v(Y)|. \label{eq:compareuvXY}
\end{equation} Additionally we know that $v \in V \setminus Z$, which implies that $v \in T_{<i_{2k}}$. Hence, $v$ is in particular contained in the best response against $U \cup X$. Since $Y$ is also guaranteed to be within the best response, we can denote the best response by $V' \cup Y$. Using \cref{eq:compareuvXY} and the fact that $U \triangleright V'$, we get 
\begin{align*}
    v(U \cup X) - v(V' \cup Y) &= v(U \setminus \{u\}) + v(u) + v(X) - v(V' \setminus \{v\}) - v(v) - v(Y) \\ 
    & > v(U \setminus \{u\}) - v(V' \setminus \{v\}) > 0, 
\end{align*}
showing that $U \cup X \succ V' \cup Y$. 

Now, consider the situation in line \ref{line:return2}. This implies that the list $L$ is of length $|T_{\leq i_{2k}}| - 2k +1$ and for each $v \in L$ there exists $u \in U$ such that \begin{equation} v(u) - v(v) > |v(X) - v(Y)|. \label{eq:compareuvXY2}\end{equation} Again, the best response against $U \cup X$ contains $Y$. Denote the best response by $V' \cup Y$. By the size of $L$ we know that $V' \cap L \neq \emptyset$. Let $v$ be a node in the intersection and $u$ be the node for which the algorithm has proven \cref{eq:compareuvXY2}. Due to the same argumentation as before, $U \triangleright V'$ and $v(u') - v(v') > v(X) - v(Y)$ implies $U \cup X \succ V' \cup Y$. 

It remains to argue about the number of duels performed by \emph{CondorcetWinning2}. Again, it is clear that the partition $\mathscr{T}$ can be refined at most $\mathcal{O}(k)$ times. Per refinement, the is one additional call to \emph{Uncover} which is bounded by $\mathcal{O}(\log(k))$ duels. Moreover, the iterations of the while loop can be bounded by $\mathcal{O}(k)$. Within one iteration the algorithm performs \emph{Compare} (requiring $\mathcal{O}(1)$ duels) and \emph{Uncover} (requiring $\mathcal{O}(\log(k))$ duels). Putting everything together the number of duels can hence be bounded by $\mathcal{O}(k^2 \log(k))$. 
\end{proof}

Putting Lemma \ref{lem:Condorcet1} and Lemma \ref{lem:Condorcet2} together clearly yields the proof of Lemma \ref{lemCondorcetWinning}. 

\lemCondorcetWinning*

\medskip
\paragraph{Extension to a Stochastic Environment} In the following we sketch how we can reduce any stochastic instance satisfying $|P_{A,B}-1/2| \in [1/2 + \theta, 1]$ to our deterministic setting. 
 To achieve such a reduction, 
simulate each deterministic duel by  $\mathcal{O}(\frac{\ln{m/\delta}}{\theta^2})$ stochastic duels to determine the duel's winner with probability at least $1-\delta/m$, where  $\mathcal{O}(m)$ is the sample complexity of an algorithm that finds a Condorcet winning team in the deterministic case. An invocation of Chernoff-Hoeffding concentration bound yields that each duel's winner is correctly determined by this simulation with probability at least $1-\delta/m$, and applying union bound over the total number of duels results in an algorithm that requires  $\mathcal{O}(m\frac{\ln{m/\delta}}{\theta^2})$ team duels to identify a Condorcet winning team with probability at least $1-\delta$.

\section{Additive Total Orders} \label{sec:additiveLinear}

In the following we provide a sufficient condition for assigning values to players in a way that complies with a total order on teams, assuming that each team has value of the cumulative values of it's players and that team $A$ is better than team $B$ if and only if the value of $A$ is larger than the value of $B$. Formally:

\textbf{Given:} A set of players $[n]$ and a total order $\succ$ on the subsets of size $k$. 

\textbf{Question:} Do there exist values for the players representing this order? Or more precisely, does the following system of linear inequalities have a feasible solution? 

We denote define $\mathcal{D} = \{(A,B) \mid A,B\text{ are teams, } A \succ B\}$. 

\begin{align*}
\sum_{b \in B} x_b - \sum_{a \in A} x_a &\leq -1 \text{ for all } (A,B) \in \mathcal{D}\\
x_a &\geq 0 \text{ for all } a \in [n]
\end{align*}

We remark that, alternatively to $-1$ on the right hand side, we could have chosen any other negative number. 

The following is a variant of Farkas Lemma:\\ 
\begin{lemma}[Farkas' Lemma \protect\citepAppendix{farkas1902theorie}]
Let $n,m \in \mathbb{N}$, $A \in \mathbb{R}^{n\times m}$ and $b \in \mathbb{R}^m$. Then, exactly one of the following is true. 
\begin{enumerate}
    \item $\exists \; x \in \mathbb{R}^n, Ax \leq b, x\geq 0$
    \item $\exists\; y \in \mathbb{R}^m, y^T A \geq 0, y \geq 0$ and $y^T b <0.$
\end{enumerate}
\end{lemma}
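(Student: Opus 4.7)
The plan is to reduce the stated inequality-form Farkas' Lemma to the classical equality-form Farkas' Lemma, and then to establish the equality form via a separating hyperplane argument.

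First I would introduce slack variables to put the system in equality form. Setting $s = b - Ax$, the system ``$Ax \leq b$, $x \geq 0$'' has a solution if and only if there exists $(x,s) \geq 0$ with $[A \mid I_m](x,s)^T = b$. Applying the classical (equality) Farkas' Lemma to the augmented matrix $A' := [A \mid I_m] \in \mathbb{R}^{m \times (n+m)}$, exactly one of the following alternatives holds: either $\exists z \geq 0$ with $A'z = b$, or $\exists y \in \mathbb{R}^m$ with $(A')^T y \geq 0$ and $y^T b < 0$. The dual condition $(A')^T y \geq 0$ unpacks block-by-block into $A^T y \geq 0$ together with $y \geq 0$, which is exactly alternative $(2)$ in the statement. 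Thus the variant follows directly once the equality form is established.

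It remains to sketch the equality form. The ``at most one'' direction is a one-line calculation: if $A'z = b$, $z \geq 0$, $(A')^T y \geq 0$, and $y^T b < 0$, then $0 \leq z^T (A')^T y = y^T A' z = y^T b < 0$, a contradiction. For the ``at least one'' direction, consider the finitely generated convex cone $K := \{A' z : z \geq 0\} \subseteq \mathbb{R}^m$. Assuming $b \notin K$, I would apply the separating hyperplane theorem to the closed convex set $K$ and the point $b$ to obtain $y \in \mathbb{R}^m$ with $y^T b < \inf_{v \in K} y^T v$. Since $K$ is a cone containing $0$, this infimum is $0$ (if $y^T v < 0$ for any $v \in K$, scaling $v$ up drives $y^T v$ to $-\infty$, contradicting the finite lower bound), so $y^T v \geq 0$ for all $v \in K$ and $y^T b < 0$. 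Unpacking $y^T A' z \geq 0$ for all $z \geq 0$ gives $(A')^T y \geq 0$, as needed.

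The main obstacle is justifying the use of the separating hyperplane theorem, which requires $K$ to be closed. This is the classical but nontrivial fact that finitely generated cones in $\mathbb{R}^m$ are closed, part of the Minkowski--Weyl theorem. I would handle it by induction on the number of generators: using a Carathéodory-type argument for cones, $K$ can be written as a finite union of simplicial cones (each generated by a linearly independent subset of the columns of $A'$), and each simplicial cone is closed as the image of the non-negative orthant under an injective linear map. The finite union of closed sets is closed, completing the argument.
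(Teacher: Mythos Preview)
The paper does not prove Farkas' Lemma at all; it simply states it as a classical result with a citation to \citeAppendix{farkas1902theorie} and then invokes it to characterize when a total order on teams is additive. There is therefore no ``paper's own proof'' to compare against.

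Your argument is correct and follows one of the standard routes: reduce the inequality form to the equality form by introducing slack variables, then prove the equality form via a separating-hyperplane argument applied to the finitely generated cone $K=\{A'z:z\ge 0\}$, with the needed closedness of $K$ supplied by the Carath\'eodory decomposition into simplicial cones. The ``at most one'' direction and the reasoning that $\inf_{v\in K} y^Tv=0$ (using that $K$ is a cone containing $0$ and that a strictly negative value would drive the infimum to $-\infty$) are both fine. One cosmetic remark: the paper's statement has a harmless dimension typo ($A\in\mathbb{R}^{n\times m}$ with $x\in\mathbb{R}^n$ does not type-check); you have silently used the intended convention $A\in\mathbb{R}^{m\times n}$, which is the right thing to do.
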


Imagine the system above in matrix form $Ax$, then the system $y^TA\geq 0,y^Tb<0,y\geq 0$ looks as follows: 

\begin{align*}
\sum_{(A,B) \in \mathcal{D} : i \in B} y_{AB} - \sum_{(A,B) \in \mathcal{D} : i \in A} y_{AB} &\geq 0 \text{ for all players } i \in [n] \\ 
y_{AB} &\geq 0  \text{ for all } (A,B) \in \mathcal{D} \\ 
\sum_{(A,B) \in \mathcal{D}} y_{AB} &> 0 
\end{align*}

Assume the second system does have a feasible solution $y \geq 0$. In particular, there exists one pair $A \succ B$ for which $y_{AB} >0$. We can assume wlog that this solution is rational and by scaling it up that it is integer. 

We define the following condition: 

\textbf{Condition (*)}
There exist $\mathcal{A} = \{A_1,\dots, A_m\}$ and $\mathcal{B} = \{B_1,\dots, B_m\}$ satisfying the following two conditions: 
\begin{itemize}
\item[(i)] $A_j \succ B_j$ for all $j \in [m]$
\item[(ii)] Let $n^{\mathcal{A}}_i$ be the number of times that player $i$ is included in some element of $\mathcal{A}$. Define $n^{\mathcal{B}}_i$ analogously. Then, $n^{\mathcal{A}}_i = n^{\mathcal{B}}_i$ for all players $i \in [n]$. 
\end{itemize}

\begin{claim}
The second system of linear inequalities has a feasible solution if and only if $(*)$ is satisfied. 
\end{claim}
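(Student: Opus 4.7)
My plan would be to prove both directions by direct construction, with the key insight that the team-size constraint ($|A|=|B|=k$) upgrades the weak inequalities appearing in the linear system to the pointwise equalities that condition $(*)$ demands.

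For the easy direction ($(*)$ implies feasibility), I would set $y_{A,B}$ to be the number of indices $j \in [m]$ with $(A_j, B_j) = (A,B)$. Condition (i) ensures only pairs in $\mathcal{D}$ receive positive weight, so this defines a vector supported on $\mathcal{D}$ with $y \geq 0$. Plugging this $y$ into the two sums, the first becomes $n_i^{\mathcal{B}}$ and the second becomes $n_i^{\mathcal{A}}$, so condition (ii) delivers the required (in)equality for every player $i$, and $\sum_{(A,B) \in \mathcal{D}} y_{A,B} = m > 0$.

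For the converse, starting from any feasible real $y$, I would first reduce to the integer case. Since all coefficients of the system lie in $\{-1,0,1\}$, its weak-inequality part defines a rational polyhedron, and the condition $\sum y > 0$ cuts out an open half-space; a small rational perturbation of a real feasible point remains feasible, and clearing denominators yields an integral feasible $y^\ast$. Then I would construct $\mathcal{A}$ and $\mathcal{B}$ in the natural way: for each $(A,B) \in \mathcal{D}$ with $y^\ast_{A,B} > 0$, include $y^\ast_{A,B}$ copies of $A$ in $\mathcal{A}$ and $y^\ast_{A,B}$ copies of $B$ in $\mathcal{B}$. Condition (i) holds by construction, and the first block of inequalities rewrites precisely as $n_i^{\mathcal{B}} \geq n_i^{\mathcal{A}}$ for every $i$.

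The only nontrivial step---and the one I would highlight---is promoting this pointwise inequality to the equality required by condition (ii). This I would do with a one-line double-counting argument exploiting that every team has size exactly $k$:
\[
\sum_{i \in [n]} n_i^{\mathcal{A}} \;=\; k \cdot |\mathcal{A}| \;=\; k m \;=\; k \cdot |\mathcal{B}| \;=\; \sum_{i \in [n]} n_i^{\mathcal{B}}.
\]
Combined with the pointwise inequality $n_i^{\mathcal{B}} \geq n_i^{\mathcal{A}}$, this forces equality for every player $i$, establishing (ii) and completing the proof.
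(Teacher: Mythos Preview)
Your proposal is correct and follows essentially the same approach as the paper: both directions are proved by the natural multiplicity construction, and the key step---upgrading the pointwise inequality $n_i^{\mathcal{B}} \geq n_i^{\mathcal{A}}$ to equality via the double-counting identity $\sum_i n_i^{\mathcal{A}} = k|\mathcal{A}| = k|\mathcal{B}| = \sum_i n_i^{\mathcal{B}}$---is identical to the paper's (the paper phrases it as a contradiction argument, you phrase it directly, but the content is the same). Your explicit justification of the rational-then-integral reduction is slightly more careful than the paper's ``wlog integral,'' but this is a presentational, not a substantive, difference.
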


\begin{proof}
$``\Rightarrow"$ Assume the second system has a feasible (and wlog integral) solution $y$. We construct $\mathcal{A}$ and $\mathcal{B}$ as follows: For each pair $A \succ B$ for which $y_{AB}>0$, add exactly $y_{AB}$ copies of A and B to $\mathcal{A}$ and $\mathcal{B}$, respectively. The first constraints for condition $(*)$ is clearly satisfied. Now, assume for contradiction that there exists a player $i \in [n] $ for which $n_i^{\mathcal{A}} > n_i^{\mathcal{B}}$ holds. Then, we get \[\sum_{(A,B) \in \mathcal{D} : i \in B} y_{AB} - \sum_{(A,B) \in \mathcal{D}: i \in A} y_{AB} =  n_i^{\mathcal{B}} - n_i^{\mathcal{A}} <0,\] a contradiction to the feasibility of $y$. On the other hand, assume that there exists a player $i \in [n]$ for which $n_i^{\mathcal{A}} < n_i^{\mathcal{B}}$ holds. Observe that \[\sum_{j \in [n]}n_j^{\mathcal{A}} = \sum_{j \in [n]}n_j^{\mathcal{B}} = |\mathcal{A}|k\] and hence \[\sum_{j \in [n] \setminus \{i\}} n_j^{\mathcal{A}} > \sum_{j \in [n] \setminus \{i\}} n_j^{\mathcal{B}},\] which implies that there exists some $i' \in [n] \setminus \{i\}$ with $n_{i'}^{\mathcal{A}} > n_{i'}^{\mathcal{B}}$, a contradiction.

$``\Leftarrow"$ Assume that there exist $\mathcal{A}$ and $\mathcal{B}$ satisfying condition $(*)$. Then, set $y_{A_j,B_j}=  |\{q \in [m] : (A_q,B_q) = (A_j,B_j)\}| $ for all $j \in [m]$ and $y_{A,B}=0$ for all other duels. This is a feasible solution to the second system of inequalities. 
\end{proof}

This directly yields the sufficient condition for a total order to be representable by values. 
\begin{corollary}
There exists a solution to the first system of inequalities if and only if condition $(*)$ does not hold. 
\end{corollary}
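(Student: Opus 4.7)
The plan is to derive the corollary as a direct combination of the Farkas' Lemma and the Claim stated just before it in the excerpt, with essentially no new content beyond matching notation. First I would put the first system into the matrix form required by Farkas: let $M$ be the constraint matrix indexed by rows $(A,B) \in \mathcal{D}$ and columns $i \in [n]$, with entry $+1$ if $i \in B$, $-1$ if $i \in A$, and $0$ otherwise; and let $b = (-1, -1, \ldots, -1)^T \in \mathbb{R}^{|\mathcal{D}|}$. Then the first system is precisely $Mx \leq b$, $x \geq 0$.

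Next I would invoke Farkas' Lemma: exactly one of (1) ``$\exists\, x \geq 0$ with $Mx \leq b$'' or (2) ``$\exists\, y \geq 0$ with $y^T M \geq 0$ and $y^T b < 0$'' holds. The key sanity check — and the only thing that could conceivably cause trouble — is that alternative (2) coincides verbatim with the second system of inequalities displayed in the excerpt. Unpacking $y^T M \geq 0$ coordinate-by-coordinate over the players $i \in [n]$ yields
\[
\sum_{(A,B) \in \mathcal{D}\,:\,i \in B} y_{AB} \;-\; \sum_{(A,B) \in \mathcal{D}\,:\,i \in A} y_{AB} \;\geq\; 0,
\]
which matches. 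The condition $y^T b < 0$, with $b$ having all entries $-1$, becomes $\sum_{(A,B) \in \mathcal{D}} y_{AB} > 0$, which also matches. So feasibility of the second system is exactly alternative (2) of Farkas.

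Finally I would chain the equivalences: the first system is feasible $\iff$ alternative (2) of Farkas fails $\iff$ the second system is infeasible $\iff$ condition $(*)$ does not hold, where the last equivalence is the already-proven Claim. This yields the corollary. I do not expect any genuine obstacle here; the corollary is a one-line combination of two already-established results, and the only subtlety is confirming that the signs and right-hand sides of the two explicit systems line up exactly with the abstract Farkas formulation, which the computation above verifies.
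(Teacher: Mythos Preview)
Your proposal is correct and matches the paper's approach exactly: the paper treats the corollary as an immediate consequence of combining Farkas' Lemma with the preceding Claim, which is precisely what you do. Your explicit verification that the two displayed systems are the Farkas alternatives (checking the signs and that $y^T b < 0$ becomes $\sum y_{AB} > 0$) is a useful sanity check that the paper leaves implicit.
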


\bibliographystyleAppendix{apa-good}
\bibliographyAppendix{refs}
\end{document}